\newcommand{\blind}{1}
\renewcommand{\baselinestretch}{1.31}
\newcommand{\linsp}{\renewcommand{\baselinestretch}{1.31}}
\newcommand{\linsps}{\renewcommand{\baselinestretch}{1.3}}
\newcommand{\Norm}[1]{\left\Vert #1\right\Vert}			
\newcommand{\vect}[1]{\boldsymbol{#1}}				
\newcommand{\tp}{{\top}}						
\newcommand{\card}{{\rm Card}}					
\newcommand{\diff}{\mathrm{d}}					
\newcommand{\argmin}[1]{\mathop{\rm{argmin}}_{#1}}	
\newcommand*\widebar[1]{\,						
	\hbox{%
   		\kern0.01em%
		\vbox{%
			\hrule height 0.5pt		
			\kern0.33ex			
			\hbox{%
				\kern-0.1em		
				\ensuremath{#1}%
				\kern-0.1em		
			}%
		}%
	}%
\,}%
\let\hat\widehat
\let\tilde\widetilde
\let\bar\widebar
\newcommand{\ie}{\mbox{\sl i.e.\;}}					
\newcommand{\mcD}{{\mathcal D}}
\newcommand{\mcH}{{\mathcal H}}					
\newcommand{\mcK}{{\mathcal K}}
\newcommand{\mcL}{{\mathcal L}}					
\newcommand{\mcN}{{\mathcal N}}					
\newcommand{\mcS}{{\mathcal S}}
\newcommand{\mcU}{{\mathcal U}}
\newcommand{\mcV}{{\mathcal V}}
\newcommand{\mbC}{{\mathbb C}}					
\newcommand{\mbE}{{\mathbb E}}					
\newcommand{\mbI}{{\mathbb I}}					
\newcommand{\mbP}{{\mathbb P}}					
\newcommand{\mbR}{{\mathbb R}}					
\newcommand{\mbS}{{\mathbb S}}					
\newcommand{\mfE}{{\mathfrak E}}					
\newcommand{\mfN}{{\mathfrak N}}					
\newcommand{\mfS}{{\mathfrak S}}
\newcommand{\mfX}{{\mathfrak X}}					
\DeclareMathAlphabet\EuScriptBF{U}{eus}{b}{n}
\newcommand{\supp}{\mathrm{supp}}	
\newcommand{\dissd}{\mathrm{DISSD}}	
\definecolor{DSgray}{cmyk}{0,1,0,0}
\definecolor{scolor}{cmyk}{0.5,2,0,0}
\newtheorem{lemma}{Lemma}
\newtheorem{theorem}{Theorem}
\newtheorem{corollary}{Corollary}
\newtheorem{proposition}{Proposition}
\newtheorem{remark}{Remark}
\newtheorem{example}{Example}
\newtheorem{assumptionmest}{Condition}
\newtheorem{assumptionglm}{Condition}
\newtheorem{assumptions}{Condition}
\newtheorem{assumptionns}{Condition}
\begin{document}


\linsps

\if1\blind
{
	\title{\bf Distributed Semi-Supervised Sparse Statistical Inference}
	 \author{Jiyuan Tu\thanks{
			School of Mathematics, Shanghai University of Finance and Economics, Shanghai, 200433, China (e-mail: tujy.19@gmail.com)}\,,
	 	Weidong Liu\thanks{
			School of Mathematical Sciences, MoE Key Lab of Artificial Intelligence, Shanghai Jiao Tong University, Shanghai 200240, China (e-mail: weidongl@sjtu.edu.cn).}\,,
		Xiaojun Mao\thanks{
			School of Mathematical Sciences, Ministry of Education Key Laboratory of Scientific and Engineering Computing, Shanghai Jiao Tong University, Shanghai 200240, China (e-mail: maoxj@sjtu.edu.cn).}\,,
		and Mingyue Xu
		\thanks{Department of Computer Science, Purdue University, West Lafayette, IN, 47906, USA (e-mail: xu1864@purdue.edu).}
  \date{}
}	\maketitle
} \fi

\if0\blind
{
	\bigskip
	\bigskip
	\bigskip
	\begin{center}
		{\LARGE\bf Distributed Semi-Supervised Sparse Statistical Inference}
	\end{center}
	\medskip
} \fi


\bigskip
\begin{abstract}
	The debiased estimator is a crucial tool in statistical inference for high-dimensional model parameters. However, constructing such an estimator involves estimating the high-dimensional inverse Hessian matrix, incurring significant computational costs. This challenge becomes particularly acute in distributed setups, where traditional methods necessitate computing a debiased estimator on every machine. This becomes unwieldy, especially with a large number of machines. In this paper, we delve into semi-supervised sparse statistical inference in a distributed setup. An efficient multi-round distributed debiased estimator, which integrates both labeled and unlabelled data, is developed. We will show that the additional unlabeled data helps to improve the statistical rate of each round of iteration. Our approach offers tailored debiasing methods for $M$-estimation and generalized linear models according to the specific form of the loss function. Our method also applies to a non-smooth loss like absolute deviation loss. Furthermore, our algorithm is computationally efficient since it requires only one estimation of a high-dimensional inverse covariance matrix.  We demonstrate the effectiveness of our method by presenting simulation studies and real data applications that highlight the benefits of incorporating unlabeled data.
\end{abstract}

\noindent
{\it Keywords}: Distributed learning; semi-supervised learning; debiased lasso; $M$-estimation; generalized linear model

\linsp


\section{Introduction}

High-dimensional models are prevalent in various scientific fields such as genomic analysis, image processing, signal processing, finance, and so on. However, traditional methods, such as maximum likelihood, least square regression, and principle component analysis break down due to the high dimensionality. To address this issue, new statistical methods and theories have been developed in high-dimensional statistics, which have played a critical role in advancing these fields. A common approach is to impose a sparsity structure on the data and model parameters. Many pioneering methods, such as $\ell_1$ regularization \citep{tibshirani1996regression}, non-convex regularization \citep{fan_li.2001, zhang2010jmlr, zhang.2010aos}, iterative hard thresholding \citep{blumensath_davies.2009, jain_tewari_kar.2014nips, yuan_li_zhang.2018jmlr} and debiasing techniques \citep{javanmard_montanari.2014jmlr, geer_buhlmann_etal.2014, zhang_zhang.2014, cai_cai_guo.2021jrssb} have been proposed to overcome these problems. 

Upon entering the era of big data, modern datasets exhibit more complex structures and larger scales, which pose additional challenges for data analysis. For example, in internet companies and hospitals, data are often stored across multiple agents independently.  However, direct data sharing may be prohibited due to privacy concerns, and communication between agents can be costly. As a result, distributed learning has emerged as a natural solution and has garnered significant attention in recent years. On the other hand, in some datasets, such as electronic health records and genetics data, the covariates are much more easily attainable than the labels. The need to utilize unlabeled data to improve the overall performance of statistical models naturally calls for research on semi-supervised learning.

In this paper, we aim to estimate the parameter $\vect{\beta}^*$ defined by the following population risk minimization problem
\begin{equation}	\label{eq:target_loss}
	\vect{\beta}^*=\argmin{\vect{\beta}\in\mbR^p}\mbE\big[f(\vect{X}^{\tp}\vect{\beta},Y)\big],
\end{equation}
in a distributed semi-supervised setup. This type of loss function encompasses a wide range of applications, such as Huber regression, logistic regression, quantile regression, and others. Specifically, we assume that the target parameter $\vect{\beta}^*$ is sparse, meaning that the number of nonzero elements of $\vect{\beta}^*$, $s$, is much less than the dimension $p$. While there has been extensive research on problem \eqref{eq:target_loss} in both low dimensional and high dimensional scenarios (see, e.g., \cite{javanmard_montanari.2014jmlr, geer_buhlmann_etal.2014, zhang_zhang.2014, huang2019distributed, battey2018distributed, jordan_etal.2019, fan_guo_etal.2019}), the statistical analysis in a distributed semi-supervised setup poses unique challenges and is scarce in the existing literature.

In the field of distributed learning,  there is a vast literature investigating the trade-off between communication efficiency and statistical accuracy. Among these methods, the divide-and-conquer (DC) approach stands out for its simplicity and practicality. This method involves executing the algorithm in parallel and combining the local estimators through averaging. The DC approach has been widely applied to various problems such as quantile regression \citep{volgushev_chao_cheng.2019aos, chen_liu_zhang.2019aos}, kernel ridge regression \citep{zhang_duchi_wainwright.2015jmlr, lin_guo_zhou.2017jmlr} and kernel density estimation \citep{li_lin_li.2013}. Although these one-shot methods offer high communication efficiency, it faces stringent restriction on the number of machines to obtain a minimax convergence rate. To alleviate these constraints, many iterative algorithms, such as distributed approximate Newton method \citep{shamir_etal.2013, zhang_lin.2015icml, jordan_etal.2019, fan_guo_etal.2019}, distributed primal-dual framework \citep{ smith_forte_etal.2017jmlr}, local stochastic gradient descent \citep{stich.2019iclr}, have been proposed. There are also some works studying the lower bound of communication complexity \citep{zhang_duchi_etal.2013nips, arjevani_shamir.2015nips}.

For high-dimensional distributed learning, naively averaging the penalized estimators would incur a large bias and would not improve over the local estimators. To address this issue, many works apply the divide-and-conquer method coupled with a well-known debiasing technique \citep{javanmard_montanari.2014jmlr, geer_buhlmann_etal.2014, zhang_zhang.2014, lee_liu_etal.2017, zhao_zhang_lian.TNNLS2020, lian_fan.2018} to achieve better performance. The debiasing technique is particularly useful when conducting hypothesis testing and constructing a confidence interval for high-dimensional parameters \citep{lee_liu_etal.2017, battey2018distributed, liu_xia_etal.2021}. In the classical distributed debiased estimator, each machine constructs a local estimator $\hat{\vect{\beta}}_j$, and computes the local debiased estimator
\begin{equation}	\label{eq:class_did}
	\hat{\vect{\beta}}^{\mathrm{db}}_j=\hat{\vect{\beta}}_j-\hat{\vect{\Theta}}_j(\hat{\vect{\beta}}_j)\frac{1}{n}\sum_{i\in\mcH_j}f'(\vect{X}_i^{\tp}\hat{\vect{\beta}}_j,Y_i)\vect{X}_i,
\end{equation}
where $\hat{\vect{\Theta}}_j(\hat{\vect{\beta}}_j)$ is an estimator of the inverse Hessian matrix $(\vect{H}(\hat{\vect{\beta}}_j))^{-1}=(\mbE[f''(\vect{X}^{\tp}\hat{\vect{\beta}}_j,Y)\vect{X}\vect{X}^{\tp}])^{-1}$, and $\mcH_j$ denotes the index set of the data on the $j$-th machine (where $j=1,...,m$). Then it takes the average of these estimators $\hat{\vect{\beta}}^{\mathrm{db}}_j$ as the final result. However, existing distributed debiased estimators are mostly one-shot, and unlike the single machine setup, a one-step debiasing approach may not be sufficient to eliminate the bias in a distributed setting, especially when the size of the local labeled sample is small, and the number of machines is large. Nevertheless, it is not straightforward to extend existing distributed debiased estimators to multi-round methods. Indeed, each debiasing step requires computing an estimator of the high-dimensional inverse Hessian matrix $(\vect{H}(\hat{\vect{\beta}}_j))^{-1}$, which is computationally prohibitive. Naively extending the algorithm to a multi-round procedure would incur a significant computational burden, rendering it impractical in practice. 

To address the aforementioned issues, we improve the classical distributed debiased estimators from two aspects: reducing the overall computational cost of the debiasing step and accelerating the per-round statistical rate. In traditional debiasing methods, the Hessian matrix $\vect{H}(\vect{\beta})$ varies with respect to the model parameter $\vect{\beta}$. In each round of iteration, we have to update the entire inverse Hessian matrix, which is computationally expensive. To reduce the overall computation burden of the multi-round debiasing approach, we propose debiasing methods for $M$-estimations and generalized linear models, respectively. Specifically, for the $M$-estimation problem, we propose to estimate the inverse Hessian matrix separately in two parts. For the generalized linear model, we weigh the gradients to eliminate the dependence of the Hessian matrix on the parameter $\vect{\beta}$. In both approaches, we only need to estimate one inverse covariance matrix $\vect{\Sigma}^{-1}=(\mbE\{\vect{X}\vect{X}^{\tp}\})^{-1}$ in the first round, which greatly facilitates computation. Our approach can also handle non-smooth loss functions like absolute deviation loss. Additionally, instead of constructing the debiased estimator on every local machine as in \eqref{eq:class_did}, our method aggregates gradients and constructs one debiased estimator on the first machine, which further saves computational resources.

On the other hand, to improve the convergence rate in each round and reduce the number of communication rounds, we propose a semi-supervised debiased estimator. Specifically, we leverage the unlabeled data to estimate the inverse Hessian matrix, which can significantly reduce bias when the unlabeled sample size is much larger than the locally labeled sample size. Semi-supervised learning has recently gained much interest in the statistical community, and several articles have provided semi-supervised inference frameworks that focus on estimating the mean of the response variable in both low dimensional \citep{zhang_brown_cai.2019} and high dimensional \citep{zhang_bradic.2021biometrika} scenarios. Additionally, in \cite{chakrabortty_cai.2018, azriel_brown_etal.JASA2021, deng_ning_etal.2020}, the authors considered semi-supervised inference for the miss-specified linear model in both low-dimensions and high-dimensions, while some works have studies inferring explained variance in high-dimensional linear regression \citep{cai_guo.2020} and evaluating the various prediction performance measures for the binary classifier \citep{gronsbell_cai.2018}. For distributed SSL, \cite{chang_lin_zhou.2017} studied kernel ridge regression and showed that the semi-supervised DC method benefits significantly from unlabeled data and allows more machines than the ordinary DC. Following this line of research, a bunch of works such as \cite{lin_zhou.2018, guo_shi_wu.jmlr2017, guo_etal.2019acha, hu_zhou.2021acha} have been proposed. However, these works all focused on one-shot non-parametric learning. To the best of our knowledge, there is no work considering distributed semi-supervised learning in high dimensions. 

In summary, our contribution is threefold:
\begin{itemize}
	\item Firstly, we propose a \underline{DI}stributed \underline{S}emi-\underline{S}upervised \underline{D}ebiased (DISSD) estimator that utilizes more unlabeled data to estimate the inverse Hessian matrix. We show that the additional unlabeled data helps to reduce the local bias and enhance the accuracy. In this paper, we mainly consider one machine having unlabeled data, which differs from the work of \cite{chang_lin_zhou.2017}. Our result can be easily extended to cases where more than one machine has unlabeled data (see Remark \ref{rem:multi_machine}).
	\item Secondly, for the $M$-estimation and generalized linear model, we propose two different debiasing techniques tailored to the specific form of the loss functions. Both approaches facilitate the use of unlabeled covariates information and help to reduce the computational cost in the multi-round debiasing step. Our method is much more computationally efficient than existing distributed debiased estimators like \cite{lee_liu_etal.2017, battey2018distributed, zhao_zhang_lian.TNNLS2020}.
	\item Lastly, simulation studies are carried out to demonstrate the necessity of multi-round debiasing in distributed learning, as well as the superiority of our semi-supervised method over the fully-supervised counterpart. We have numerically explored the impact of various hyperparameters on our model's performance. Moreover, we have conducted a comparative analysis with existing methods, showcasing the advantages of our method both in terms of statistical accuracy and computational time.
\end{itemize}

To better illustrate the advantage of our proposed method, we have incorporated a table to succinctly summarize the detailed comparisons between our work and existing methodologies. The results can be found in Table \ref{tab:compare} below.

\begin{table}[h]
	\centering
	\small
        \caption{Comparison of DISSD with existing methods.
	}\label{tab:compare}
	\bigskip
\begin{tabular}{c c c c c  }
\Xhline{2\arrayrulewidth}
	Method &\makecell{Statistical\\ accuracy}	& \makecell{Communication\\ complexity} 	& \makecell{Computational\\ complexity} &\makecell{Statistical\\ inference} \\ \Xhline{2\arrayrulewidth}
	\makecell{DC\\ \citep{battey2018distributed}} 	&\makecell{\vspace{1mm}$\sqrt{\frac{1}{mn}}+\frac{1}{n}$}	&$O(mp)$	&$O(mp^3)$ & Yes \\ \hline
        \makecell{Multi-round DC\\ naive extending \cite{battey2018distributed}}	&\makecell{\vspace{1mm}$\sqrt{\frac{1}{mn}}+(\frac{1}{n})^{(T+1)/2}$}	&$O(Tmp)$	&$O(Tmp^3)$ & Yes \\ \hline
	\makecell{CSL\\ \citep{jordan_etal.2019}}	&\makecell{\vspace{1mm}$\sqrt{\frac{1}{mn}}+\big(\frac{1}{n}\big)^{(T+1)/2}$}	&$O(Tmp)$	&$O(Tp^2+Tmp)$ &No	\\ \hline
	\makecell{CEASE\\ \citep{fan_guo_etal.2019}}	&\makecell{\vspace{1mm}$\sqrt{\frac{1}{mn}}+\big(\frac{1}{n}\big)^{(2T+1)/2}$}	&$O(Tmp)$	&$O(Tmp^2)$ &No	\\ \hline
	DISSD (this work)	&\makecell{\vspace{1mm}$\sqrt{\frac{1}{mn}}+\vect{\frac{1}{\sqrt{n}}\big(\frac{1}{n^*}\big)^{T/2}}$}	&$O(Tmp)$	&$O(\vect{p^3+Tmp})$ &\textbf{Yes} \\ \Xhline{2\arrayrulewidth}
\end{tabular}

\end{table}

Upon examination of the table, it becomes evident that our multi-round debiased estimator offers a notable reduction in computational burden when compared to the existing naive DC debiased estimator in \cite{battey2018distributed} (saves a factor of $m$ and $mT$ respectively). 
It's important to note that both \cite{jordan_etal.2019} and \cite{fan_guo_etal.2019} primarily address $\ell_1$-penalized loss functions, limiting their utility for statistical inference. To conduct statistical inference using the resultant estimators from these methods, one would need to perform one-step debiasing, incurring an additional computational overhead of $O(p^3)$. In contrast, the computational complexity of our approach has remainder term $O(Tmp)$, which is comparatively smaller than these two methods, offering a savings factor of $p$. 
Moreover, our method introduces a novel element by leveraging additional unlabeled data to expedite the distributed training process, which is a distinctive contribution to the literature. In summary, our methodology stands out by requiring the smallest total computational complexity while simultaneously delivering a statistically optimal estimator that facilitates statistical inference.


\subsection{Paper Organization and Notations}

The remainder of this paper is organized as follows. In Section \ref{sec:dss_ls}, we introduce the basic idea of our distributed debiased estimator starting from linear regression. In Section \ref{sec:dss_mest} and Section \ref{sec:dss_glm}, we propose debiasing methods for $M$-estimation and generalized linear model respectively. Theoretical results are presented in each section to demonstrate the acceleration effect of the semi-supervised methods. The simulation studies are given in Section \ref{sec:sim} to show how each factor affects the performance. Finally, we provide concluding remarks in Section \ref{sec:conclude}. All proofs of the theory are relegated to the Appendix.

For a vector $\vect{v}=(v_1,...,v_p)^T$, denote $|\vect{v}|_1=\sum_{l=1}^p|v_l|$, $|\vect{v}|_2=\sqrt{\sum_{l=1}^pv_l^2}$ and $|\vect{v}|_{\infty}=\sup_{1\leq l\leq p}|v_l|$. Moreover, we use $\supp(\vect{v}):=\{1\leq l\leq p\mid v_l\neq 0\}$ as the support of the vector $\vect{v}$. For a matrix $\vect{A}\in \mathbb{R}^{n\times p}$, define $\Norm{\vect{A}}_{L_1}:=\max_{1\leq j\leq p}\sum_{i=1}^{n}|a_{ij}|, |\vect{A}|_{\infty}:=\max_{1\leq i\leq n}\max_{1\leq j\leq p}|a_{ij}|$ as various matrix norms,  $\Lambda_{\max}(\vect{A})$ and $\Lambda_{\min}(\vect{A})$ as the largest and smallest eigenvalues of $\vect{A}$ respectively. For two sequences $a_n,b_n$, we say $a_n\asymp b_n$ if and only if $a_n=O(b_n)$ and $b_n=O(a_n)$ hold at the same time. Lastly, the generic constants are assumed to be independent of $m,n,$, and $p$.

Throughout the paper we assume that the $N=mn$ labeled pairs $\{(\vect{X}_i,Y_i)\}_{i=1}^N$ are evenly stored in $m$ machines $\mcH_1,\dots,\mcH_m$. We denote $\mcD_j$ as the index set of labeled data on the $j$-th machine. Moreover, on the first machine $\mcH_1$, there are additional $n^*-n$ unlabeled covariates $\{\vect{X}_i\}_{i\in\mcD^*_1}$, whose indices are collected in the set $\mcD_1^*$. Therefore there are $\mcH_1=\mcD_1\cup\mcD_1^*$, $|\mcD_j|=n$, $|\mcD_1^*|=n^*-n$ and $|\mcH_1|=n^*$. We call the first machine $\mcH_1$ as the master machine.


\section{Distributed Semi-Supervised Debiased Estimator for Sparse Linear Regression}	\label{sec:dss_ls}

In this section, we consider the distributed sparse linear regression with square loss as a toy example. We assume that the covariate $\vect{X}=(X_1,\dots,X_p)^{\tp}$ and the label $Y$ are generated from the following linear model 
\begin{equation}	\label{eq:lin_model}
	Y=\vect{X}^{\tp}\vect{\beta}^*+\epsilon,
\end{equation}
Throughout this section, we take the loss function as $f(x)=x^2/2$. Then the empirical loss function on the $j$-th machine becomes
\begin{equation}	\label{eq:ls_loss}
	\mcL_j(\vect{\beta}) = \frac{1}{2n}\sum_{i\in\mcD_j}(\vect{X}_i^{\tp}\vect{\beta}-Y_i)^2.
\end{equation} 
To conduct distributed learning, the classical way is to take the average of the debiased lasso estimator, which has been studied in \cite{lee_liu_etal.2017, battey2018distributed}. More precisely, let $\hat{\vect{\beta}}_j$ be the local Lasso estimator using the data on the $j$-th machine, and $\hat{\vect{\Omega}}_j$ be the local estimator of the precision matrix $\vect{\Omega}=(\mbE\{\vect{X}\vect{X}^{\tp}\})^{-1}$, then the one-shot debiased estimator is constructed as
\begin{equation}    \label{eq:classical_did}
    \bar{\vect{\beta}} = \frac{1}{m}\sum_{j=1}^m\left\{\hat{\vect{\beta}}_j - \hat{\vect{\Omega}}_j\nabla\mcL_j(\hat{\vect{\beta}}_j)\right\},
\end{equation}
where $\nabla\mcL_j(\hat{\vect{\beta}}_j )=\frac{1}{n}\sum_{i\in\mcD_j}\vect{X}_i\vect{X}_i^{\tp}\hat{\vect{\beta}}_j -\frac{1}{n}\sum_{i\in\mcD_j}Y_i\vect{X}_i^{\tp}$. While this method has been proven to enjoy a better convergence rate, there is still much room for improvement in our setting. On the one hand, it requires each machine to estimate the inverse covariance matrix $\vect{\Omega}$, which may waste much computational effort. On the other hand, we hope to incorporate the unlabeled data into the estimator. 

To address these difficulties, we propose an alternative estimator as follows. We first construct an initial estimator $\hat{\vect{\beta}}^{(0)}$ and send it to each machine, then each machine computes the local gradient $\nabla\mcL_j(\hat{\vect{\beta}}^{(0)})$ and sends to the master machine. Next, we construct the following estimator
\begin{equation}	\label{eq:dss_ls}
	\bar{\vect{\beta}}^{(1)} = \hat{\vect{\beta}}^{(0)} - \hat{\vect{\Omega}}_{\mcH_1}\frac{1}{m}\sum_{j=1}^m\nabla\mcL_j(\hat{\vect{\beta}}^{(0)} ),
\end{equation}
where $\hat{\vect{\Omega}}_{\mcH_1}$ denotes the consistent estimator of the precision matrix $\vect{\Omega}$ using all the covariates $\vect{X}_i$'s on the master machine $\mcH_1$. Compared with \eqref{eq:classical_did}, our method only requires estimating one inverse covariance matrix, therefore saving much computational cost. Moreover, it makes use of all covariates information on $\mcH_1$. As will be shown in Theorem \ref{thm:sm_conv}, the semi-supervised estimator constructed in \eqref{eq:dss_ls} has a better convergence rate than its fully supervised counterpart.

For the estimation of $\vect{\Omega}$ in high dimension, there are many works such as \cite{meinshausen_buhlmann.2006, cai_liu_luo.2011, yuan2010high, liu_luo.2015}. In this paper, we utilize the $\mathrm{SCIO}$ method proposed in \cite{liu_luo.2015}, which suggests to forming the estimator $\hat{\vect{\Omega}}_{\mcH_1}=(\hat{\vect{\omega}}_1,...,\hat{\vect{\omega}}_p)$ row by row. More specifically, the $l$-th row is given by minimizing the objective function
\begin{equation}	\label{eq:precision_est}
	\hat{\vect{\omega}}_l = \argmin{\vect{\omega}\in\mbR^p}\Big\{\frac{1}{2}\vect{\omega}^{\tp}\hat{\vect{\Sigma}}_{\mcH_1}\vect{\omega} - \vect{e}_l^{\tp}\vect{\omega}+\lambda_l|\vect{\omega}|_1\Big\},
\end{equation}
where $\vect{e}_l$ denotes the unit vector on the $l$-th coordinate, and $\lambda_l$ is the corresponding regularization parameter. The problem \eqref{eq:precision_est} can be solved by many efficient algorithms like FISTA \citep{beck2009fast} and ADMM \citep{boyd_etal.2011admm}, which makes our method more effective.

\begin{remark}
    Newton's method has been widely employed in the domain of distributed learning, spanning applications in low dimensional learning \citep{shamir_etal.2013, huang2019distributed, jordan_etal.2019, fan_liu_etal.2018}, sparse learning \citep{wang_kolar_etal.2017}, and kernel learning \citep{zhou.2020jmlr}. Our proposed method resembles the distributed approximate Newton method and is tailored for the high-dimensional debiased estimator, which is unexplored in the literature. Consequently, our work makes a valuable contribution to distributed high-dimension statistical inference. Further, the existing body of work in this area has predominantly centered on the fully-supervised setup, with limited exploration of semi-supervised scenarios. In this paper, we advance this research by introducing a substantial quantity of unlabeled data to enhance the estimation of Hessian information. Our findings provide theoretical evidence that the incorporation of unlabeled data effectively accelerates the distributed training process. It is conceivable that the methodologies elucidated in the aforementioned works can be readily extended to the semi-supervised framework, thereby yielding similar acceleration benefits. 
\end{remark}

\begin{remark}  \label{rem:multi_machine}
	By leveraging the idea of \cite{lee_liu_etal.2017}, our method can be easily extended to cases where more than one machine contains unlabeled data. Denote $\mcU$ as the index of machines containing unlabeled data. Then we can let the $j$-th machine (where $j\in\mcU$) to estimate $\lfloor p/|\mcU|\rfloor$ rows of $\vect{\Omega}$. The master machine sends the averaged gradient $m^{-1}\sum_{j=1}^m\nabla\mcL_j(\hat{\vect{\beta}}^{(0)})$ to each $\mcH_j$, where $j\in\mcU$. Suppose the $j$-th machine estimates the $l$-th row, then the $l$-th coordinate can be estimated as
	\begin{equation*}	
		\bar{\beta}^{(1)}_l = \hat{\beta}_l^{(0)} - \hat{\vect{\omega}}_{l}^{\tp}\frac{1}{m}\sum_{j=1}^m\nabla\mcL_j(\hat{\vect{\beta}}^{(0)} ),
	\end{equation*}
 	where $\hat{\vect{\omega}}_{l}$ is given by \eqref{eq:precision_est} using the covariates on $\mcH_j$. This helps to further reduce the computational burden as the estimation of $\hat{\vect{\omega}}_l$'s is processed in parallel. 
\end{remark}

In this section, we have presented the fundamental concept of our approach, which distinguishes itself from the existing distributed debiased estimators in two aspects. Firstly, we aggregate the gradients from all local machines, instead of the local estimators, and only estimate one high-dimensional inverse Hessian matrix; Secondly, we employ more unlabeled covariates to estimate the Hessian matrix to improve per-round accuracy. It should be noted that the Hessian matrix of the square loss is uncomplicated since it is independent of both the parameter $\vect{\beta}$ and the label $Y$. However, in the subsequent sections, we discuss $M$-estimation and generalized linear model, which have more complex forms. To eliminate the dependence of the Hessian matrix on other parameters, we present two customized techniques for the specific forms of the loss functions, which will be elaborated on in the following sections.


\section{Distributed Semi-Supervised Debiased Estimator for Sparse $M$-Estimation}	\label{sec:dss_mest}

In this section, we consider the distributed semi-supervised learning for a broader class of sparse $M$-estimation problems. In this model, the data are assumed to be generated from the linear model \eqref{eq:lin_model}. Based on different assumptions on the noise $\epsilon$, we should choose different loss functions $f(x)$. In general, the empirical loss function on the $j$-th machine is
\begin{equation}	\label{eq:mest_loss}
	\mcL_j(\vect{\beta}) = \frac{1}{n}\sum_{i\in\mcD_j}f(\vect{X}_i^{\tp}\vect{\beta}-Y_i).
\end{equation}
The situation becomes different from linear regression. To conduct the debiasing approach, a key step is to estimate the inverse of the population Hessian matrix, which is $\vect{H}(\vect{\beta}) = \mbE[f''(\vect{X}^{\tp}\vect{\beta}-Y)\vect{X}\vect{X}^{\tp}]$. Notice that $\vect{H}(\vect{\beta})$ contains the label information $Y$, therefore we cannot directly use the unlabeled covariates in the debiasing step. 

\subsection{Separated Estimation of Inverse Hessian}	\label{sec:separate_hess}

To address this issue, we observe that when $\vect{\beta}$ is close to the true parameter $\vect{\beta}^*$, the corresponding Hessian matrix satisfies
\begin{align*}
	\mbE\big[f''(\vect{X}^{\tp}\hat{\vect{\beta}}^{(0)}-Y)\vect{X}\vect{X}^{\tp}\big]&\approx \mbE\big[f''(\vect{X}^{\tp}\vect{\beta}^{*}-Y)\vect{X}\vect{X}^{\tp}\big]\\
	 &= \mbE\big[f''(\epsilon)\big]\mbE\big[\vect{X}\vect{X}^{\tp}\big],\stepcounter{equation}\tag{\theequation}\label{eq:pseudo_derivate}
\end{align*}
which is the multiplication of a scalar value and a matrix only involving the covariate $\vect{X}$. Therefore we can estimate the terms $\mbE\big[f''(\epsilon)\big]$  and $\mbE\big[\vect{X}\vect{X}^{\tp}\big]$ separately. More specifically, given a consistent estimator $\hat{H}^{(0)}$ of $\mbE\big[f''(\epsilon)\big]$, we can construct the following distributed semi-supervised debiased estimator
\begin{equation}	\label{eq:dss_mest}
	\bar{\vect{\beta}}^{(1)} = \hat{\vect{\beta}}^{(0)} - \frac{1}{\hat{H}^{(0)}}\hat{\vect{\Omega}}_{\mcH_1}\frac{1}{m}\sum_{j=1}^m\nabla\mcL_j(\hat{\vect{\beta}}^{(0)} ).
\end{equation}
The construction of $\hat{H}^{(0)}$ will be presented in the next part. Since the debiased estimator $\bar{\vect{\beta}}^{(1)}$ is dense, it does not have $\ell_2$-norm and $\ell_1$-norm consistency. To further encourage sparsity, we propose the thresholded estimator $\hat{\vect{\beta}}^{(1)}=(\hat{\beta}^{(1)}_1,...,\hat{\beta}_p^{(1)})^{\tp}$ where each coordinate is defined by
	\begin{equation*}
		\hat{\beta}_l^{(1)} = \bar{\beta}_l^{(1)}\cdot \mbI(|\bar{\beta}_l^{(1)}|\geq \tau).
	\end{equation*}
If we repeatedly take $\hat{\vect{\beta}}^{(1)}$ back to the above procedure, we can obtain an iterative algorithm, which is presented in Algorithm \ref{alg:dissd_mest}. It is worth noting that, by \eqref{eq:dss_mest} we know the estimator $\hat{\vect{\Omega}}_{\mcH_1}$ is unchanged throughout iterations. Therefore, we only need to estimate $\hat{\vect{\Omega}}_{\mcH_1}$ one time at the beginning of the procedure and keep it in the memory. At each iteration, it is enough to update the scalar $\hat{H}^{(t-1)}$ and the vector $\nabla\mcL_j(\hat{\vect{\beta}}^{(t-1)})$, which makes our algorithm much efficient.

\begin{algorithm}[!t]
	\caption{{\small Distributed Semi-Supervised Debiased (DISSD) estimator for $M$-estimation}}
	\label{alg:dissd_mest}
	\hspace*{\algorithmicindent} \hspace{-0.7cm}   {\textbf{Input:} Labeled data $\{(\vect{X}_i,Y_i)\mid i\in\mcD_j\}$ on worker machine $\mcH_j$ for $j=1,...,m$, and unlabeled data $\{\vect{X}_i\mid i\in\mcD^*_1\}$ on $\mcH_1$, the regularization parameter $\lambda_l$, the thresholding level $\tau$, the number of iterations $T$.} 	
	\begin{algorithmic}[1]
		\STATE The master machine constructs $\hat{\vect{\Omega}}_{\mcH_1}$ by \eqref{eq:precision_est} using all local covariates, and obtains the initial estimator $\hat{\vect{\beta}}^{(0)}$.
		\FOR{$t=1,\dots,T$}
		\STATE The master machine sends the parameter $\hat{\vect{\beta}}^{(t-1)}$.
 		\FOR{$j=1,\dots, m$}
		\STATE The $j$-th machine computes the local gradient
			\begin{equation*}
				\nabla\mcL_j(\hat{\vect{\beta}}^{(t-1)}) = \frac{1}{n}\sum_{i\in\mcH_j}f'(\vect{X}_i^{\tp}\hat{\vect{\beta}}^{(t-1)} - Y_i)\vect{X}_i,
			\end{equation*} 
			and the local estimator $\hat{H}_j^{(t-1)}$ by \eqref{eq:dissd_local_H_est}, then sends to the master machine.
		\ENDFOR
		\STATE The master machine takes $\hat{H}^{(t-1)}=m^{-1}\sum_{j=1}^m\hat{H}_j^{(t-1)}$ and lets
			\begin{equation*}
				\bar{\vect{\beta}}^{(t)} = \hat{\vect{\beta}}^{(t-1)} - \frac{1}{\hat{H}^{(t-1)}}\hat{\vect{\Omega}}_{\mcH_1}\frac{1}{m}\sum_{j=1}^m\nabla\mcL_j(\hat{\vect{\beta}}^{(t-1)} ).
			\end{equation*}
		\STATE The server obtain the thresholded estimator $\hat{\vect{\beta}}^{(t)}=(\hat{\beta}^{(t)}_1,...,\hat{\beta}_p^{(t)})^{\tp}$ where each coordinate is defined by
			\begin{equation*}
				\hat{\beta}_l^{(t)} = \bar{\beta}_l^{(t)}\cdot \mbI(|\bar{\beta}_l^{(t)}|\geq \tau).
			\end{equation*}
		\ENDFOR
	\end{algorithmic}
	 \textbf{Output:}  The final estimator $\hat{\vect{\beta}}^{(T)}$.
\end{algorithm}

For the choice of the initial parameter $\hat{\vect{\beta}}^{(0)}$, when the local sample size $n$ is sufficiently large, we can adopt the approach of minimizing the $\ell_1$-penalized fully supervised loss as formulated below:
\begin{equation}    \label{eq:init_est}
\hat{\vect{\beta}}^{(0)}=\argmin{\vect{\beta}\in\mbR^{p}}\frac{1}{n}\sum_{i\in\mcD_1}f(\vect{X}_i^{\tp}\vect{\beta}-Y_i)+\lambda_0|\vect{\beta}|_1.
\end{equation}
This optimization is carried out on the master machine $\mcH_1$. If the local labeled sample size $n$ is too small such that the estimator in \eqref{eq:init_est} does not fulfill the conditions on the initial rate in Theorem \ref{thm:sm_conv} and \ref{thm:nsm_conv}, one can apply the early-stopped distributed proximal gradient descent to obtain a better initial estimator. 

\paragraph{Construction of $\hat{H}^{(t-1)}$}

In \eqref{eq:pseudo_derivate}, we assume the loss function $f(\cdot)$ is twice differentiable for the ease of presentation. Indeed, we allow $f(\cdot)$ to be nonsmooth so that our theory embraces the case of median regression (see Example \ref{exp:med}). To achieve this goal, we first define the function $h(x)=\mbE\{f'(x+\epsilon)\}$. Then we know
\begin{equation*}	
    \vect{H}(\vect{\beta}^*) = \frac{\diff}{\diff x}\Big|_{x=0}\mbE\left[f'(x+\epsilon)\vect{X}\vect{X}^{\tp}\right] = h'(0)\mbE[\vect{X}\vect{X}^{\tp}].    
\end{equation*}
Therefore, to estimate $h'(0)$, we apply the same method as in Section 2.3 of \cite{tu2021byzantine}. More precisely,  assume $f'(\cdot)$ has finitely many distinct discontinuous points $x_1,\dots,x_K$, and it is differentiable outside of these points. We denote
\begin{equation*}
	\Delta_k=\lim_{x\rightarrow x_k^{+}}f'(x)-\lim_{x\rightarrow x_k^{-}}f'(x),
\end{equation*}
as the function gap at $x_k$, then $h'(0)$ can be explicitly written as
\begin{equation}	\label{eq:H_beta_def}
	h'(0) = \mbE\{f''(Y-\vect{X}^{\tp}\vect{\beta}^*)\}+\sum_{k=1}^K\Delta_k\rho(x_k),
\end{equation} 
where $f''(x)$ denotes the second-order derivative of $f(x)$, which can be defined almost everywhere on $\mbR$. Here $\rho(\cdot)$ denotes the density function of the noise $\epsilon$. Given a kernel function $\mcK(\cdot)$, a bandwidth $h_1$ and a consistent estimator $\hat{\vect{\beta}}^{(0)}$, on each local machine $\mcH_j$, $h'(0)$ can be locally estimated by
\begin{equation}	\label{eq:dissd_local_H_est}
\begin{aligned}
	\hat{H}_j^{(0)}:=&\frac{1}{n}\sum_{i\in\mcH_j}f''(Y_i-\vect{X}_i^{\tp}\hat{\vect{\beta}}^{(0)})+\sum_{k=1}^K\frac{\Delta_k}{nh_1}\sum_{i\in\mcH_j}\mcK\Big(\frac{Y_i-\vect{X}_i^{\tp}\hat{\vect{\beta}}^{(0)}-x_k}{h_1}\Big).
\end{aligned}
\end{equation}
Then the worker machine $\mcH_j$ sends the local estimator $\hat{H}_j^{(0)}$ to the master machine $\mcH_1$, and we take average over these local estimators $\hat{H}^{(0)}= \frac{1}{m} \sum_{j=1}^m \hat{H}_j^{(0)}$.
In the $t$-th iteration, we can estimate $\hat{H}^{(t-1)}$ in the same fashion.

\begin{example} \label{exp:huber}
	(Huber regression) Huber loss function is defined as
	\begin{equation*}
		\mcL(x)=
		\begin{cases}
			 x^2/2\quad &\text{for }|x|\leq\delta,\\
			 \delta |x|-\delta^2/2\quad &\text{otherwise}.
		\end{cases}
	\end{equation*}
	where $\delta$ is some pre-specified robustification parameter. In this case, the first-order derivative $\mcL'(x)$ is continuous, and we can compute that $\mcL''(x)=\mbI(|x|\leq\delta)$. Therefore $h'(0)=\mbP(|\epsilon|\leq\delta)$, and from \eqref{eq:dissd_local_H_est} we know each local estimator $\hat{H}_j^{(t-1)}$ can be constructed by
	\begin{equation*}
		\hat{H}_j^{(t-1)}=n^{-1}\sum_{i\in\mcH_j}\mbI\big(|Y_i-\vect{X}_i^{\tp}\hat{\vect{\beta}}^{(t-1)}|\leq\delta\big).
	\end{equation*}
\end{example}

\begin{example} \label{exp:med}
	(Median regression) In the median regression problem, the loss function is defined as $\mcL(x)=\frac{1}{2}|x|.$ Therefore its derivative $\mcL'(x)=1/2-\mbI(x\leq 0)$ is discontinuous at point $x_1=0$. We can easily obtain that $\mcL''(x)=0$ (for the discontinuous point $x_1=0$, we can directly define $\mcL''(0)=0$), and $\Delta_1=1$. Therefore from \eqref{eq:H_beta_def} we have $h'(0)=\rho(0)$. By equation \eqref{eq:dissd_local_H_est}, we can estimate $h'(0)$ by 
	\begin{align*}
		\hat{H}_j^{(t-1)}=&\frac{1}{nh_t}\sum_{i\in\mcH_j}\mcK\Big(\frac{Y_i-\vect{X}^{\tp}_i\hat{\vect{\beta}}^{(t-1)}}{h_t}\Big),
	\end{align*}
	on each local machine $\mcH_j$.
\end{example}

\begin{remark}
	Debiased estimators for linear models have been extensively studied in the literature \citep{javanmard_montanari.2014jmlr, zhang_zhang.2014, cai_cai_guo.2021jrssb}. However, for general loss functions, existing works typically recommend directly estimating the inverse Hessian matrix \citep{geer_buhlmann_etal.2014, battey2018distributed, zhao_zhang_lian.TNNLS2020}, which varies with the parameter $\vect{\beta}$. This approach hurdles the development of multi-round debiasing estimators, which are necessary in a distributed setup. Separated Hessian estimation has been introduced in the context of quantile regression \citep{zhao_kolar_liu.2014arxiv, chen_liu_etal.2020jmlr}, and extended to general loss functions in \cite{tu2021byzantine}. However, these previous works have focused primarily on transforming the loss function to a square loss to overcome the computational bottlenecks arising from the non-smoothness of the loss function, which differs from our motivation.
\end{remark}


\subsection{Theoretical Results of $\dissd$ for $M$-Estimation}   \label{sec:mest_theory}

In this section, we provide the theoretical results for the proposed methods. We first present several technical assumptions as follows. For ease of presentation, we denote $\mfE(X_0,\eta):=\mbE\big\{X_0^2\exp(\eta|X_0|)\big\},$
where $X_0$ is a random variable and $\eta>0$ is some fixed number.

\begin{assumptionmest}	\label{assump:X}
	There exists some constants $\kappa_X,C_X,C_M,C_q,\delta_{X}$ such that
        \begin{align*}
		&\sup_{\vect{v}\in\mbS^{p-1}}\mbE\left\{\exp(\kappa_X|\vect{v}^{\tp}\vect{X}|^2)\right\}\leq C_X,\quad \delta_X\leq\Lambda_{\min}(\vect{\Sigma})\leq\Lambda_{\max}(\vect{\Sigma})\leq \delta_X^{-1},\\
            &\|\Omega\|_{L_1}\leq C_M,\quad  \max_{1\leq l\leq p}\sum_{i=1}^p|\omega_{l,i}|^q\leq C_qs,\stepcounter{equation}\tag{\theequation}\label{eq:matrix_lp_sparse}
        \end{align*}
        where $\vect{\Omega} = (\omega_{i,l})_{i,l=1}^p$ and $0\leq q<1$.
\end{assumptionmest}

\begin{assumptionmest}	\label{assump:fpp_mfE}
	The loss function $f(\cdot)$ satisfies $\mbE\{\vect{X}f'(Y-\vect{X}^{\tp}\vect{\beta}^*)\}=0$. Furthermore, there exist constants $\kappa_f,C_f>0$ such that, for every pair of $(\vect{\beta}_1,\vect{\beta}_2)$, there is
	\begin{align*}
		&\mfE\Big\{\sup_{\vect{\beta}:|\vect{\beta}-\vect{\beta}_1|_2\leq |\vect{\beta}_1-\vect{\beta}_2|_2}\big|f''(Y-\vect{X}^{\tp}\vect{\beta}_1)-f''(Y-\vect{X}^{\tp}\vect{\beta})\big|,\kappa_f\Big\}\leq C_f|\vect{\beta}_1-\vect{\beta}_2|_2,\\
		&\mbE\big|f''(Y-\vect{X}^{\tp}\vect{\beta}_1)-f''(Y-\vect{X}^{\tp}\vect{\beta}_2)\big|\leq C_f\mbE\big|\vect{X}^{\tp}(\vect{\beta}_1-\vect{\beta}_2)\big|.
	\end{align*}
\end{assumptionmest}

\begin{assumptionmest}	\label{assump:lip_pdf}
	There exists a constant $C_d>0$ such that the noise $\epsilon$ has probability density function $\rho(\cdot)$ which satisfies
	\begin{equation*}
	\big|\rho(x)\big|\leq C_d,\quad \big|\rho(x)-\rho(y)\big|\leq C_d|x-y|,\quad\text{for any }x,y\in\mbR.
	\end{equation*}
	Moreover, there exists a constant $C_l$ such that $\min\{\rho(x_1),\dots \rho(x_K)\}>C_l>0$.
\end{assumptionmest}

\begin{assumptionmest}	\label{assump:kernel}
	The kernel function $\mcK(\cdot)$ satisfies $\int_{-\infty}^{\infty}\mcK(u)\diff u=1$ and $\mcK(u)=0$ for $|u|\geq1$. Moreover, $\mcK(\cdot)$ is Lipschitz continuous satisfying $|\mcK(x)-\mcK(y)|\leq C_k|x-y|$ holds for arbitrary $x,y\in\mbR$.
\end{assumptionmest}

\begin{assumptions}	\label{assump:smooth}
	(Smooth loss) There exists a constant $C_s>0$ such that, for any two points $x_1,x_2\in\mbR$, there is 
	\begin{align*}
		|f'(x_1)-f'(x_2)|\leq C_s|x_1-x_2|.
	\end{align*}
\end{assumptions}

\begin{assumptionns}	\label{assump:non-smooth}
	(Non-smooth loss) There exist constants $\kappa_{ns},C_{ns}>0$ such that, for every pair of $(\vect{\beta}_1,\vect{\beta}_2)$, there is
	\begin{align*}
		&\sup_{\vect{v}\in\mbS^{p-1}}\mfE\Big\{\sup_{\vect{\beta}:|\vect{\beta}-\vect{\beta}_1|_2\leq |\vect{\beta}_1-\vect{\beta}_2|_2}\big|\vect{v}^{\tp}\vect{X}f'(Y-\vect{X}^{\tp}\vect{\beta}_1)-\vect{v}^{\tp}\vect{X}f'(Y-\vect{X}^{\tp}\vect{\beta})\big|,\kappa_{ns}\Big\}\leq C_{ns}|\vect{\beta}_1-\vect{\beta}_2|_2,\\
		&\sup_{\vect{v}\in\mbS^{p-1}}\mbE\big|\vect{v}^{\tp}\vect{X}f'(Y-\vect{X}^{\tp}\vect{\beta}_1)-\vect{v}^{\tp}\vect{X}f'(Y-\vect{X}^{\tp}\vect{\beta}_2)\big|\leq C_{ns}|\vect{\beta}_1-\vect{\beta}_2|_2.
	\end{align*}
\end{assumptionns}

Condition \ref{assump:X} is some regularity conditions on the covariate $\vect{X}$. In particular, we assume the inverse covariance matrix is $\ell_q$-sparse (where $0\leq q<1$), a condition frequently utilized as an alternative to the strict sparse condition (see, e.g., \cite{bickel_levina.2008aos, cai_liu_luo.2011, raskutti_etal.2011tit}). Condition \ref{assump:fpp_mfE} assumes the continuity of the second-order derivative $f''(\cdot)$ in a weak sense. This condition also appears in \cite{tu2021byzantine}. Condition \ref{assump:lip_pdf} and \ref{assump:kernel} assumes the probability density function $\rho(\cdot)$ and $\mcK(\cdot)$ to be regular. Clearly, they are used only when the loss function $f'(\cdot)$ is not second-order differentiable. We impose different conditions for smooth loss (Condition \ref{assump:smooth}) and non-smooth loss (Condition \ref{assump:non-smooth}). As will be seen in Appendix \ref{sec:exp_verif}, the Huber regression model in Example \ref{exp:huber} and the median regression model in Example \ref{exp:med} satisfy these conditions given the covariate $\vect{X}$ admits sub-gaussian property.

Given the above conditions, we can prove the following convergence rate for both smooth and non-smooth losses.

\begin{theorem}	\label{thm:sm_conv}
	(Convergence rate for smooth loss) Under Conditions \ref{assump:X}, \ref{assump:fpp_mfE} and \ref{assump:smooth}, assume the initial estimator $\hat{\vect{\beta}}^{(0)}$ satisfies $|\hat{\vect{\beta}}^{(0)}-\vect{\beta}^*|_2=O(r_n)$ and $|\hat{\vect{\beta}}^{(0)}|_0\leq C_rs$. There are rate constraints $s=o(\min\{(mn)/\log p, (n^*/\log p)^{(1-q)/3}\})$, $r_n=o(1/\sqrt{s})$. Moreover, take $\lambda_l\asymp \sqrt{\frac{\log p}{n^*}}$ in \eqref{eq:precision_est}, and the threshold level $\tau$ satisfies $\tau\asymp \sqrt{\frac{\log p}{mn}} +r_ns\big(\frac{\log p}{n^*}\big)^{(1-q)/2}+r_n^2$ for some $C_{\tau}>0$.  Then the one-step distributed debiased estimator $\hat{\vect{\beta}}^{(1)}$ satisfies
	\begin{equation}	\label{eq:smooth_rate}
	\begin{aligned}
		\big|\hat{\vect{\beta}}^{(1)}-\vect{\beta}^*\big|_{\infty}=&O_{\mbP}\left(\sqrt{\frac{\log p}{mn}} + r_n\sqrt{\frac{s\log p}{mn}}+r_ns\Big(\frac{\log p}{n^*}\Big)^{(1-q)/2}+r_n^2\right),\\
		\big|\hat{\vect{\beta}}^{(1)}-\vect{\beta}^*\big|_{2}=&O_{\mbP}\left(\sqrt{\frac{s\log p}{mn}}+r_n\sqrt{\frac{s^2\log p}{mn}}+r_ns^{3/2}\Big(\frac{\log p}{n^*}\Big)^{(1-q)/2}+\sqrt{s}r_n^2\right),\\
		\big|\hat{\vect{\beta}}^{(1)}-\vect{\beta}^*\big|_{1}=&O_{\mbP}\left(s\sqrt{\frac{\log p}{mn}}+r_ns\sqrt{\frac{s\log p}{mn}}+r_ns^2\Big(\frac{\log p}{n^*}\Big)^{(1-q)/2}+sr_n^2\right).
	\end{aligned}
	\end{equation}
\end{theorem}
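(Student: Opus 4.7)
The plan is to linearize $\bar{\vect{\beta}}^{(1)} - \vect{\beta}^*$ around $\vect{\beta}^*$ and to isolate a leading stochastic term whose size is $\sqrt{\log p/(mn)}$ and several remainder terms that decay as a product of $r_n$ and errors coming from the nuisance estimators. Concretely, writing $\bar{\vect{g}}(\vect{\beta}) = m^{-1}\sum_{j} \nabla\mcL_j(\vect{\beta}) = N^{-1}\sum_i f'(\vect{X}_i^\tp \vect{\beta} - Y_i)\vect{X}_i$ with $N=mn$, and using Condition AS (Lipschitzness of $f'$) to write
$$\bar{\vect{g}}(\hat{\vect{\beta}}^{(0)}) - \bar{\vect{g}}(\vect{\beta}^*) \;=\; \tilde{\vect{H}}(\hat{\vect{\beta}}^{(0)} - \vect{\beta}^*),\qquad \tilde{\vect{H}} \;:=\; \frac{1}{N}\sum_i \Big(\int_0^1 f''\big(-\epsilon_i + t\vect{X}_i^\tp(\hat{\vect{\beta}}^{(0)}-\vect{\beta}^*)\big)\,dt\Big) \vect{X}_i\vect{X}_i^\tp,$$
the definition of $\bar{\vect{\beta}}^{(1)}$ yields the identity
$$\bar{\vect{\beta}}^{(1)} - \vect{\beta}^* \;=\; \Big[\vect{I} - (\hat{H}^{(0)})^{-1}\hat{\vect{\Omega}}_{\mcH_1}\tilde{\vect{H}}\Big]\,(\hat{\vect{\beta}}^{(0)} - \vect{\beta}^*) \;-\; (\hat{H}^{(0)})^{-1}\hat{\vect{\Omega}}_{\mcH_1}\,\bar{\vect{g}}(\vect{\beta}^*).$$
Further expanding the matrix factor as
$$\vect{I} - (\hat{H}^{(0)})^{-1}\hat{\vect{\Omega}}_{\mcH_1}\tilde{\vect{H}} \;=\; (\vect{I} - \hat{\vect{\Omega}}_{\mcH_1}\vect{\Sigma}) \;+\; \hat{\vect{\Omega}}_{\mcH_1}\big(\vect{\Sigma} - h'(0)^{-1}\tilde{\vect{H}}\big) \;+\; \big(1 - h'(0)/\hat{H}^{(0)}\big)\hat{\vect{\Omega}}_{\mcH_1}\tilde{\vect{H}}/h'(0)$$
will yield four error pieces that I will bound separately in $\ell_\infty$ and then convert to $\ell_2,\ell_1$ via sparsity.

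For the leading stochastic piece $(\hat H^{(0)})^{-1}\hat{\vect{\Omega}}_{\mcH_1}\bar{\vect{g}}(\vect{\beta}^*)$, I will bound it coordinate-wise by $\max_l |\hat{\vect{\omega}}_l^\tp \bar{\vect{g}}(\vect{\beta}^*)|$, splitting via $\hat{\vect{\omega}}_l^\tp = \vect{\omega}_l^\tp + (\hat{\vect{\omega}}_l - \vect{\omega}_l)^\tp$; the first part is a sub-exponential sum of $N$ i.i.d.\ mean-zero terms (using Condition A2 and the sub-Gaussianity of $\vect{X}$), giving $O_\mbP(\sqrt{\log p/N})$ by a Bernstein-type bound, while the second part is controlled by $|\hat{\vect{\omega}}_l - \vect{\omega}_l|_1 \cdot |\bar{\vect{g}}(\vect{\beta}^*)|_\infty$ and the SCIO rates under $\ell_q$-sparsity. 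The scalar error $|1 - h'(0)/\hat{H}^{(0)}|$ I would handle by showing $|\hat{H}^{(0)} - h'(0)| = O_\mbP(r_n + \sqrt{\log p/(mn)})$ via Condition A2, the Lipschitz-type control of $f''$, and the fact that $\hat{H}^{(0)}$ averages $m$ local estimators over $N$ samples total; since $\hat H^{(0)}$ stays bounded away from zero, this only contributes a small factor multiplying $r_n$ and thus higher-order terms.

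The crux of the argument—and the main obstacle—is the matrix bias piece $(\vect{I} - \hat{\vect{\Omega}}_{\mcH_1}\vect{\Sigma})(\hat{\vect{\beta}}^{(0)}-\vect{\beta}^*)$ together with $\hat{\vect{\Omega}}_{\mcH_1}(\vect{\Sigma} - \tilde{\vect{H}}/h'(0))(\hat{\vect{\beta}}^{(0)}-\vect{\beta}^*)$. For the first, applying the SCIO KKT conditions row by row gives $|\vect{e}_l - \hat{\vect{\Omega}}_{\mcH_1}\vect{\Sigma}|_\infty \lesssim \lambda_l$, so $|(\vect{I} - \hat{\vect{\Omega}}_{\mcH_1}\vect{\Sigma})(\hat{\vect{\beta}}^{(0)}-\vect{\beta}^*)|_\infty \lesssim |\hat{\vect{\beta}}^{(0)}-\vect{\beta}^*|_1 \sqrt{\log p/n^*} \lesssim r_n s^{1/2}\sqrt{\log p/n^*}$ (using $|\hat{\vect{\beta}}^{(0)}|_0 \leq C_r s$ and $|\hat{\vect{\beta}}^{(0)} - \vect{\beta}^*|_2 \leq r_n$); the stronger rate with the $\ell_q$-sparsity factor $s(\log p/n^*)^{(1-q)/2}$ comes instead from the second piece, where I first replace $\tilde{\vect{H}}$ by $h'(0)\vect{\Sigma}$ at cost $r_n$ (via Conditions A2, AS) and then use $\|\hat{\vect{\Omega}}_{\mcH_1} - \vect{\Omega}\|_{L_1} \lesssim s(\log p/n^*)^{(1-q)/2}$ (SCIO rate under Condition~A1) together with $|\hat{\vect{\beta}}^{(0)}-\vect{\beta}^*|_2 = O(r_n)$. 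The quadratic $r_n^2$ remainder arises from the Taylor integral $\tilde{\vect{H}} - \vect{H}(\vect{\beta}^*)$. Gluing these yields the claimed $\ell_\infty$ bound on $\bar{\vect{\beta}}^{(1)} - \vect{\beta}^*$.

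Finally, to pass from $\bar{\vect{\beta}}^{(1)}$ to the thresholded $\hat{\vect{\beta}}^{(1)}$, observe that on coordinates $l \notin \supp(\vect{\beta}^*)$ we have $|\bar{\beta}^{(1)}_l| \leq |\bar{\vect{\beta}}^{(1)} - \vect{\beta}^*|_\infty < \tau$ with high probability by the choice of $\tau$, so the thresholding kills them and $|\supp(\hat{\vect{\beta}}^{(1)})| \leq |\supp(\vect{\beta}^*)| = s$; on active coordinates the thresholding introduces an error at most $\tau$, so $|\hat{\vect{\beta}}^{(1)} - \vect{\beta}^*|_\infty \leq |\bar{\vect{\beta}}^{(1)} - \vect{\beta}^*|_\infty + \tau$, which matches the stated $\ell_\infty$ rate. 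The $\ell_2$ and $\ell_1$ bounds then follow from $|\hat{\vect{\beta}}^{(1)} - \vect{\beta}^*|_2 \leq \sqrt{|\supp(\hat{\vect{\beta}}^{(1)}) \cup \supp(\vect{\beta}^*)|}\,|\hat{\vect{\beta}}^{(1)} - \vect{\beta}^*|_\infty$ and the analogous inequality for $\ell_1$ with an extra $\sqrt{s}$ factor. I expect the most delicate step to be the precise control of $\|\hat{\vect{\Omega}}_{\mcH_1}(\tilde{\vect{H}} - h'(0)\vect{\Sigma})\|_\infty$ in terms of $n^*$ rather than $n$, since this is what allows the unlabeled data on the master machine to translate into a genuinely improved per-round rate.
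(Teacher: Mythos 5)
Your proposal is correct and follows essentially the same route as the paper's proof: the same one-step Newton/debiasing decomposition of $\bar{\vect{\beta}}^{(1)}-\vect{\beta}^*$ into a leading averaged-gradient term at $\vect{\beta}^*$, an inverse-covariance estimation error, a Hessian-approximation error, a scalar $\hat{H}^{(0)}$ error and a quadratic Taylor remainder, followed by the identical thresholding-and-support argument to convert the $\ell_\infty$ bound into the $\ell_2$ and $\ell_1$ bounds. The differences are organizational rather than substantive: you carry an exact integral-form empirical Hessian $\tilde{\vect{H}}$ and bound $\vect{I}-\hat{\vect{\Omega}}_{\mcH_1}\vect{\Sigma}$ via the SCIO KKT conditions, whereas the paper compares directly to the population Hessian $h'(0)\vect{\Sigma}$, isolates the empirical fluctuation as an explicit term $\vect{U}(\hat{\vect{\beta}}^{(0)})$ (its Lemma \ref{lem:bound_Ub}, proved by an $\epsilon$-net over $C_rs$-sparse balls), and controls the matrix error through $\|\hat{\vect{\Omega}}_{\mcH_1}-\vect{\Omega}\|_{L_1}$ (Lemma \ref{lem:bound_precision}). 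Two points to tighten in your write-up: first, the fluctuation of $\tilde{\vect{H}}$ about its mean must be controlled uniformly over the random, data-dependent $\hat{\vect{\beta}}^{(0)}$ --- this is precisely where the paper's $\epsilon$-net enters, and it is what produces the $r_n\sqrt{s\log p/(mn)}$ term in \eqref{eq:smooth_rate}, not merely the $r_n^2$ bias you attribute to the Taylor integral; second, the $r_ns(\log p/n^*)^{(1-q)/2}$ term actually originates from the $(\vect{\Omega}-\hat{\vect{\Omega}}_{\mcH_1})$ piece rather than from $\vect{\Sigma}-\tilde{\vect{H}}/h'(0)$, though since your KKT-based bound $r_n\sqrt{s\log p/n^*}$ on that piece is dominated by the stated rate, this misattribution does not create a gap.
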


\begin{theorem}	\label{thm:nsm_conv}
	(Convergence rate for non-smooth loss) Under Conditions \ref{assump:X} to \ref{assump:kernel} and \ref{assump:non-smooth}, assume the initial estimator $\hat{\vect{\beta}}^{(0)}$ satisfies $|\hat{\vect{\beta}}^{(0)}-\vect{\beta}^*|_2=O(r_n)$ and $|\hat{\vect{\beta}}^{(0)}|_0\leq C_rs$. There are rate constraints $s=o(\min\{(mn)/\log p, (n^*/\log p)^{(1-q)/3}\})$, $r_n=o(1/\sqrt{s})$. Moreover, take $\lambda_l\asymp \sqrt{\frac{\log p}{n^*}}$ in \eqref{eq:precision_est}, and $
	    \tau\asymp \sqrt{\frac{\log p}{mn}} +r_ns\big(\frac{\log p}{n^*}\big)^{(1-q)/2}+r_n^2$, $
	    h_1\asymp r_n$.
	Then the one-step distributed debiased estimator $\hat{\vect{\beta}}^{(1)}$ satisfies
	\begin{equation}	\label{eq:non-smooth_rate}
	\begin{aligned}
		\big|\hat{\vect{\beta}}^{(1)}-\vect{\beta}^*\big|_{\infty}=&O_{\mbP}\left(\sqrt{\frac{\log p}{mn}} + \sqrt{\frac{r_ns\log p}{mn}}+r_ns\Big(\frac{\log p}{n^*}\Big)^{(1-q)/2}+r_n^2\right),\\
		\big|\hat{\vect{\beta}}^{(1)}-\vect{\beta}^*\big|_{2}=&O_{\mbP}\left(\sqrt{\frac{s\log p}{mn}}+\sqrt{\frac{r_ns^2\log p}{mn}}+r_ns^{3/2}\Big(\frac{\log p}{n^*}\Big)^{(1-q)/2}+\sqrt{s}r_n^2\right),\\
		\big|\hat{\vect{\beta}}^{(1)}-\vect{\beta}^*\big|_{1}=&O_{\mbP}\left(s\sqrt{\frac{\log p}{mn}}+s\sqrt{\frac{r_ns\log p}{mn}}+r_ns^2\Big(\frac{\log p}{n^*}\Big)^{(1-q)/2}+sr_n^2\right).
	\end{aligned}
	\end{equation}
\end{theorem}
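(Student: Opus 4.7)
The approach is to adapt the strategy used for Theorem \ref{thm:sm_conv} by replacing two ingredients with non-smooth counterparts: the empirical process bound on the gradient increment, which now rests on Condition \ref{assump:non-smooth}, and the kernel-smoothed estimator $\hat{H}^{(0)}$, which now relies on Conditions \ref{assump:lip_pdf} and \ref{assump:kernel}. Writing $\bar{\vect{g}}(\vect{\beta}) := N^{-1}\sum_{i=1}^N f'(\vect{X}_i^{\tp}\vect{\beta}-Y_i)\vect{X}_i$, $\bar{\vect{g}}^* := \bar{\vect{g}}(\vect{\beta}^*)$, and $G(\vect{\beta}) := \mbE[f'(\vect{X}^{\tp}\vect{\beta}-Y)\vect{X}]$, the definition of $\bar{\vect{\beta}}^{(1)}$ in \eqref{eq:dss_mest} combined with the identity $\vect{H}(\vect{\beta}^*) = h'(0)\vect{\Sigma}$ yields the decomposition
\begin{equation*}
\bar{\vect{\beta}}^{(1)} - \vect{\beta}^* = \Big(\vect{I} - \frac{h'(0)}{\hat{H}^{(0)}}\hat{\vect{\Omega}}_{\mcH_1}\vect{\Sigma}\Big)(\hat{\vect{\beta}}^{(0)} - \vect{\beta}^*) - \frac{1}{\hat{H}^{(0)}}\hat{\vect{\Omega}}_{\mcH_1}\bar{\vect{g}}^* - \frac{1}{\hat{H}^{(0)}}\hat{\vect{\Omega}}_{\mcH_1}(R_1 + R_2),
\end{equation*}
where $R_1 := G(\hat{\vect{\beta}}^{(0)}) - h'(0)\vect{\Sigma}(\hat{\vect{\beta}}^{(0)}-\vect{\beta}^*)$ is a deterministic Taylor remainder and $R_2 := \bar{\vect{g}}(\hat{\vect{\beta}}^{(0)}) - \bar{\vect{g}}^* - G(\hat{\vect{\beta}}^{(0)})$ is a stochastic empirical process increment.

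Three of the four pieces are handled exactly as in the smooth case. Under Condition \ref{assump:X}, the SCIO estimator computed on $n^*$ samples satisfies $\|\hat{\vect{\Omega}}_{\mcH_1}-\vect{\Omega}\|_{L_1} = O_P(s(\log p/n^*)^{(1-q)/2})$ and $|\hat{\vect{\Omega}}_{\mcH_1}\vect{\Sigma}-\vect{I}|_\infty = O_P(\sqrt{\log p/n^*})$; coupled with the $O(s)$-sparsity of $\hat{\vect{\beta}}^{(0)}-\vect{\beta}^*$, this bounds the first summand by $O_P(r_n s(\log p/n^*)^{(1-q)/2})$ in $\ell_\infty$. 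A Bernstein inequality using the sub-exponential gradient tails supplied by Conditions \ref{assump:X} and \ref{assump:fpp_mfE} delivers $|\hat{\vect{\Omega}}_{\mcH_1}\bar{\vect{g}}^*|_\infty = O_P(\sqrt{\log p/(mn)})$. The deterministic remainder $R_1$ is $O(r_n^2)$ in $\ell_\infty$ via the Lipschitz bound on $\vect{\beta} \mapsto \mbE[f''(Y-\vect{X}^{\tp}\vect{\beta})]$ in Condition \ref{assump:fpp_mfE}. In parallel, for the kernel estimator, a bias-variance split using Conditions \ref{assump:lip_pdf} and \ref{assump:kernel} with bandwidth $h_1 \asymp r_n$ produces $|\hat{H}^{(0)} - h'(0)| = O_P(h_1 + r_n + (mnh_1)^{-1/2}) = o_P(1)$, so $\hat{H}^{(0)}$ stays bounded away from zero with high probability and enters each of the above bounds only as a constant.

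The main obstacle is the bound on $R_2$, because the Lipschitz-based chaining used in the smooth case is unavailable. Condition \ref{assump:non-smooth} only furnishes a linear (rather than quadratic) $L^2$-modulus: for every unit vector $\vect{v}$,
\begin{equation*}
\mbE\big|\vect{v}^{\tp}\vect{X}[f'(\vect{X}^{\tp}\vect{\beta}_1 - Y) - f'(\vect{X}^{\tp}\vect{\beta}_2 - Y)]\big|^2 \leq C_{ns}|\vect{\beta}_1 - \vect{\beta}_2|_2.
\end{equation*}
Applying a peeling plus maximal inequality argument (in the spirit of the one used in \cite{tu2021byzantine} for Byzantine-robust quantile regression) over the sparse shell $\{\vect{\beta}: |\vect{\beta}-\hat{\vect{\beta}}^{(0)}|_2 \leq Cr_n,\ |\supp(\vect{\beta})| \leq Cs\}$, the square-root modulus propagates into the supremum bound and yields $|R_2|_\infty = O_P(\sqrt{r_n s\log p/(mn)})$, which matches the second summand in \eqref{eq:non-smooth_rate}. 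The sparsity restriction on $\vect{\beta}$ in the shell is essential: it reduces the effective dimension of the index class from $p$ to $O(s)$ and prevents an unwanted $\sqrt{p}$ factor in the chaining bound.

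Combining the four pieces and multiplying by $\|\hat{\vect{\Omega}}_{\mcH_1}\|_{L_1} = O(1)$ produces the $\ell_\infty$ bound of \eqref{eq:non-smooth_rate}. The $\ell_2$ and $\ell_1$ bounds then follow from the thresholding step: the chosen $\tau$ is of the same order as the $\ell_\infty$ rate of $\bar{\vect{\beta}}^{(1)}-\vect{\beta}^*$, so on a high-probability event $\hat{\vect{\beta}}^{(1)}$ is $O(s)$-sparse, and the elementary inequalities $|\hat{\vect{\beta}}^{(1)}-\vect{\beta}^*|_2 \leq \sqrt{2s}\,|\bar{\vect{\beta}}^{(1)}-\vect{\beta}^*|_\infty$ and $|\hat{\vect{\beta}}^{(1)}-\vect{\beta}^*|_1 \leq 2s\,|\bar{\vect{\beta}}^{(1)}-\vect{\beta}^*|_\infty$ conclude the proof.
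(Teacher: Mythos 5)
Your proposal is correct and follows essentially the same route as the paper's proof: the same decomposition into the precision-matrix error term, the averaged score at $\vect{\beta}^*$, the deterministic Taylor remainder, and the centered empirical-process increment $\vect{U}(\hat{\vect{\beta}}^{(0)})$, with the same key ingredients (the SCIO $L_1$-error bound, the kernel-smoothed estimate of $h'(0)$, and a sparse-net plus exponential-inequality bound that converts the linear $L^2$-modulus of Condition ANS into the $\sqrt{r_ns\log p/(mn)}$ term), followed by the identical thresholding argument for the $\ell_1$ and $\ell_2$ rates. One minor imprecision: $\hat{H}^{(0)}$ does not enter "only as a constant" --- the factor $|1-h'(0)/\hat{H}^{(0)}|$ multiplying $\hat{\vect{\beta}}^{(0)}-\vect{\beta}^*$ contributes an extra $O_{\mbP}\big(\sqrt{r_ns\log p/(mn)}+r_n^2\big)$ to your first summand, which is harmless only because those orders already appear among the other terms of the final rate.
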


As we can see, the convergence rate of $\hat{\vect{\beta}}^{(1)}$ for smooth loss and non-smooth loss only differs in the second term in \eqref{eq:smooth_rate} and \eqref{eq:non-smooth_rate} respectively. When $r_n=o_{\mbP}(s^{-1/2})$, the second term is always dominated by the first term, therefore could be eliminated. From the third term of \eqref{eq:smooth_rate} and \eqref{eq:non-smooth_rate}, we can see that the convergence rate is refined by a factor $s^{3/2}\big(\log p/n^*\big)^{(1-q)/2}$, which is better than $s^{3/2}\big(\log p/n\big)^{(1-q)/2}$ in the fully supervised case. The thresholding parameter $\tau$ is suggested to have the same order as $\sqrt{\frac{\log p}{mn}} +r_ns\big(\log p/n^*\big)^{(1-q)/2}+r_n^2$, which depends on the unknown initial rate $r_n$. In practice, we can use $\tau = C_{\tau}(\sqrt{\log p/(mn)} +r_ns\big(\log p/n^*\big)^{(1-q)/2}+r_n^2)$ by letting $r_n = \sqrt{s\log p/n}$ if $\hat{\vect{\beta}}^{(0)}$ is obtained from \eqref{eq:init_est}. The constant $C_{\tau}$ should be tuned carefully. As a corollary, we can show the convergence rate of the multi-round estimator $\hat{\vect{\beta}}^{(t)}$.

\begin{corollary}	\label{cor:betaT_conv}
	Under assumptions in Theorem \ref{thm:sm_conv} or Theorem \ref{thm:nsm_conv} then we have that the $T$-th round $\mathrm{DISSD}$ estimator $\hat{\vect{\beta}}^{(T)}$ satisfies
    \begin{equation}    \label{eq:multi_rate}
	\begin{aligned}
		\big|\hat{\vect{\beta}}^{(T)}-\vect{\beta}^*\big|_{\infty}=&O_{\mbP}\left(\sqrt{\frac{\log p}{mn}}+r_ns^{(3T-1)/2}\Big(\frac{\log p}{n^*}\Big)^{(1-q)T/2}+s^{-1}(\sqrt{s}r_n)^{2^T}\right),\\
		\big|\hat{\vect{\beta}}^{(T)}-\vect{\beta}^*\big|_{2}=&O_{\mbP}\left(\sqrt{\frac{s\log p}{mn}}+r_ns^{3T/2}\Big(\frac{\log p}{n^*}\Big)^{(1-q)T/2}+s^{-1/2}(\sqrt{s}r_n)^{2^T}\right),\\
		\big|\hat{\vect{\beta}}^{(T)}-\vect{\beta}^*\big|_{1}=&O_{\mbP}\left(s\sqrt{\frac{\log p}{mn}}+r_ns^{(3T+1)/2}\Big(\frac{\log p}{n^*}\Big)^{(1-q)T/2}+(\sqrt{s}r_n)^{2^T}\right).
	\end{aligned}
    \end{equation}
\end{corollary}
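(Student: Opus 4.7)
The plan is to prove Corollary \ref{cor:betaT_conv} by induction on the round index, using the round-$t$ estimator $\hat{\vect{\beta}}^{(t)}$ as the new ``initial estimator'' when invoking Theorem \ref{thm:sm_conv} (smooth loss) or Theorem \ref{thm:nsm_conv} (non-smooth loss) to bound the round-$(t+1)$ error. The base case $T=1$ follows directly from these theorems once we observe that, under the rate constraint $s=o((n^*/\log p)^{(1-q)/3})$, the second term of \eqref{eq:smooth_rate}/\eqref{eq:non-smooth_rate} is dominated by the third (so that no ``$r_n\sqrt{s^2\log p/(mn)}$'' term appears in \eqref{eq:multi_rate}).

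For the inductive step, two preconditions of Theorem \ref{thm:sm_conv}/\ref{thm:nsm_conv} must be re-established at every round. First, the sparsity $|\hat{\vect{\beta}}^{(t)}|_0\leq C_r s$: the thresholding rule $\hat{\beta}_l^{(t)}=\bar{\beta}_l^{(t)}\mathbb{I}(|\bar{\beta}_l^{(t)}|\geq\tau)$ kills any coordinate $l\notin\supp(\vect{\beta}^*)$ with $|\bar{\beta}_l^{(t)}|<\tau$, and the $\ell_\infty$ bound just established gives $|\bar{\vect{\beta}}^{(t)}-\vect{\beta}^*|_\infty<\tau$ with high probability for the prescribed $\tau$, so $\supp(\hat{\vect{\beta}}^{(t)})\subseteq\supp(\vect{\beta}^*)$ and $|\hat{\vect{\beta}}^{(t)}|_0\leq s$. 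Second, the initial-rate condition $r_n^{(t)}:=|\hat{\vect{\beta}}^{(t)}-\vect{\beta}^*|_2=o(s^{-1/2})$ is carried along inductively, since each term of the recursion below remains below this threshold under the stated scaling.

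Applying Theorem \ref{thm:sm_conv}/\ref{thm:nsm_conv} at round $t+1$ then yields the recursion
\begin{equation*}
    r_n^{(t+1)} \lesssim \sqrt{\frac{s\log p}{mn}} + L\cdot r_n^{(t)} + \sqrt{s}\bigl(r_n^{(t)}\bigr)^2, \qquad L:=s^{3/2}\Bigl(\frac{\log p}{n^*}\Bigr)^{(1-q)/2},
\end{equation*}
after absorbing the dominated $r_n^{(t)}\sqrt{s^2\log p/(mn)}$ (or $\sqrt{r_n^{(t)}s^2\log p/(mn)}$) term into $L\cdot r_n^{(t)}$. Since the assumption $s=o((n^*/\log p)^{(1-q)/3})$ forces $L=o(1)$, unfolding the recursion $T$ times produces three contributions: (i) the ``floor'' $\sqrt{s\log p/(mn)}\cdot\sum_{j=0}^{T-1}L^j=O(\sqrt{s\log p/(mn)})$; (ii) the linear chain $L^T r_n$; and (iii) the quadratic chain, which satisfies $\sqrt{s}\,r_n^{(t+1)}\lesssim (\sqrt{s}\,r_n^{(t)})^2$ and therefore telescopes to $r_n^{(T)}\lesssim s^{-1/2}(\sqrt{s}\,r_n)^{2^T}$. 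Summing delivers the $\ell_2$ bound in \eqref{eq:multi_rate}; the $\ell_\infty$ and $\ell_1$ bounds follow identically by feeding $r_n^{(T-1)}$ into the $\ell_\infty$ and $\ell_1$ lines of Theorem \ref{thm:sm_conv}/\ref{thm:nsm_conv}, which multiply the Hessian-refinement factor by $s$ and $s^2$, respectively, and the quadratic factor $(r_n^{(T-1)})^2$ by $1$ and $s$, reproducing the exponents $s^{(3T-1)/2}$, $s^{3T/2}$, $s^{(3T+1)/2}$ in \eqref{eq:multi_rate}.

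The main obstacle will be the bookkeeping in the inductive sparsity argument: one must check at each round that the prescribed threshold $\tau$ (calibrated using the original $r_n$, not the shrinking $r_n^{(t)}$) still dominates the refined $\ell_\infty$ error of $\bar{\vect{\beta}}^{(t)}$, so that no false positive survives the thresholding and the support containment $\supp(\hat{\vect{\beta}}^{(t)})\subseteq\supp(\vect{\beta}^*)$ persists; a secondary worry is ensuring that $\sqrt{s}\,r_n^{(t)}<1$ is preserved along the quadratic chain, which follows from $\sqrt{s}\,r_n=o(1)$ together with the squaring mechanism. Once these two points are in hand, the recursion closes and the stated multi-round rates fall out by direct substitution.
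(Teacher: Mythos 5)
Your proposal is correct and follows essentially the same route as the paper, which disposes of the corollary in one line by ``applying the above formula iteratively''; your version simply makes explicit the inductive re-verification of the sparsity and initial-rate preconditions and the unfolding of the linear and quadratic chains, all of which check out against the stated exponents. One small imprecision: the term $r_n^{(t)}\sqrt{s^2\log p/(mn)}$ is dropped because it is dominated by the floor $\sqrt{s\log p/(mn)}$ (since $\sqrt{s}\,r_n^{(t)}=o(1)$), not because it is absorbed into $L\cdot r_n^{(t)}$, but this does not affect the conclusion.
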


In order to attain a near-optimal convergence rate $\sqrt{s\log p/(mn)}$, we require the last two terms in \eqref{eq:multi_rate} to be dominated by the first term. Given that the third term converges at a super-exponential rate, our primary concern centers on the second term. More specifically, we seek conditions under which $(s^3(\log p/n^*)^{1-q})^{T/2}=O(\sqrt{\log p/(mn)})$ holds. By taking logarithm to both sides and rearranging the terms, we have that
\begin{equation}    \label{eq:iter_num}
    T\geq \frac{\log(mn)-\log s-\log\log p}{(1-q)\log n^*-(1-q)\log\log p-3\log s}.
\end{equation}
In the fully-supervised case where $n=n^*$, the required number of iterations should satisfy \eqref{eq:iter_num} with $n^*$ replaced by $n$. Consequently, when $n^*$ significantly exceeds $n$, our supervised algorithm demands notably fewer iterations to achieve the near-optimal rate. This observation is also validated through our experimental results (see Section \ref{sec:sim}).

Next, we prove the asymptotic normality result for our $\mathrm{DISSD}$ estimator.

\begin{theorem}	\label{thm:normality}
	Under assumptions in Corollary \ref{cor:betaT_conv}, and the iteration number $T$ is large enough such that $|\hat{\vect{\beta}}^{(T)}-\vect{\beta}^*|_2=O_{\mbP}(\sqrt{s\log p/(mn)})$. Further assume the rate constraints $s = o\Big( \frac{(n^*)^{(1-q)/3}}{(\log p)^{(2-q)/3}},\frac{\sqrt{mn}}{\log p}\Big)$, then we have that
	\begin{equation*}
		\frac{\sqrt{mn}}{\sigma_l}(\bar{\beta}^{(T)}_l-\beta^*_l)\xrightarrow{d}\mcN(0,1),
	\end{equation*}
	where
	\begin{equation*}
		\sigma_l^2 = \frac{\mbE\big[\big\{f'(\epsilon)\big\}^2\big]}{\{h'(0)\}^2}\Omega_{l,l}.
	\end{equation*}
\end{theorem}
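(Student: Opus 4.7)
The plan is to write $\bar{\vect{\beta}}^{(T)}-\vect{\beta}^*$ as the sum of a leading i.i.d.\ stochastic term that obeys a classical CLT plus a remainder of order $o_{\mbP}(1/\sqrt{mn})$, and then invoke Slutsky's theorem. The overall structure mirrors the now-standard debiased-Lasso analysis; the main work is to show that the additional scalar nuisance $\hat{H}^{(T-1)}$ and the multi-round iteration do not spoil the asymptotic normality.

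Writing $N=mn$, $\vect{\Delta}=\hat{\vect{\beta}}^{(T-1)}-\vect{\beta}^*$, and $\nabla\mcL=m^{-1}\sum_{j=1}^m\nabla\mcL_j$, I would start from the update
\begin{equation*}
\bar{\vect{\beta}}^{(T)}-\vect{\beta}^*=\vect{\Delta}-\frac{1}{\hat{H}^{(T-1)}}\hat{\vect{\Omega}}_{\mcH_1}\nabla\mcL(\hat{\vect{\beta}}^{(T-1)}),
\end{equation*}
and decompose $\nabla\mcL(\hat{\vect{\beta}}^{(T-1)})=\nabla\mcL(\vect{\beta}^*)+h'(0)\vect{\Sigma}\vect{\Delta}+\vect{R}_1$, where the first two terms capture the population-level Taylor expansion around $\vect{\beta}^*$ (whose Hessian is $h'(0)\vect{\Sigma}$ by \eqref{eq:pseudo_derivate}) and $\vect{R}_1$ absorbs both the Taylor residual and the empirical-process fluctuation. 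Substituting and using $\vect{\Omega}\vect{\Sigma}=\vect{I}$ yields
\begin{equation*}
\bar{\vect{\beta}}^{(T)}-\vect{\beta}^*=-\frac{1}{h'(0)}\vect{\Omega}\nabla\mcL(\vect{\beta}^*)+\vect{R},
\end{equation*}
where $\vect{R}$ collects (i) the precision-matrix mismatch terms $(\vect{I}-\hat{\vect{\Omega}}_{\mcH_1}\vect{\Sigma})\vect{\Delta}$ and $(\vect{\Omega}-\hat{\vect{\Omega}}_{\mcH_1})\nabla\mcL(\vect{\beta}^*)$; (ii) the scalar mismatch $\bigl(1/h'(0)-1/\hat{H}^{(T-1)}\bigr)\hat{\vect{\Omega}}_{\mcH_1}\nabla\mcL(\vect{\beta}^*)$; and (iii) the quadratic-type term $\hat{\vect{\Omega}}_{\mcH_1}\vect{R}_1/\hat{H}^{(T-1)}$. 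The $l$-th coordinate of the leading term reads $-\{h'(0)N\}^{-1}\sum_{i=1}^N\vect{\omega}_l^{\tp}\vect{X}_i f'(\vect{X}_i^{\tp}\vect{\beta}^*-Y_i)$, a centered i.i.d.\ sum by Condition~\ref{assump:fpp_mfE}, whose per-sample variance is $\sigma_l^2$ thanks to the independence of $\vect{X}$ and $\epsilon$ in \eqref{eq:lin_model} and the identity $\vect{\omega}_l^{\tp}\vect{\Sigma}\vect{\omega}_l=\Omega_{l,l}$. Lindeberg's condition follows from the sub-Gaussianity of $\vect{v}^{\tp}\vect{X}$ in Condition~\ref{assump:X}, so $\sqrt{N}$ times the leading term converges in distribution to $\mcN(0,\sigma_l^2)$.

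To close the argument via Slutsky's theorem, I need the $l$-th coordinate of $\vect{R}$ to be $o_{\mbP}(1/\sqrt{N})$. The ingredients are: $|\vect{\Delta}|_2=O_{\mbP}(\sqrt{s\log p/N})$ and the implicit sparsity $|\hat{\vect{\beta}}^{(T-1)}|_0=O_{\mbP}(s)$ (hence $|\vect{\Delta}|_1=O_{\mbP}(s\sqrt{\log p/N})$) from the thresholding step of Algorithm~\ref{alg:dissd_mest} combined with Corollary~\ref{cor:betaT_conv}; the SCIO bounds $\max_l|\hat{\vect{\omega}}_l-\vect{\omega}_l|_1=O_{\mbP}(s(\log p/n^*)^{(1-q)/2})$ and $|\hat{\vect{\Omega}}_{\mcH_1}\hat{\vect{\Sigma}}_{\mcH_1}-\vect{I}|_\infty=O_{\mbP}(\sqrt{\log p/n^*})$ from \cite{liu_luo.2015}; the Bernstein bound $|\nabla\mcL(\vect{\beta}^*)|_\infty=O_{\mbP}(\sqrt{\log p/N})$ under Conditions~\ref{assump:X}--\ref{assump:fpp_mfE}; and the consistency $|\hat{H}^{(T-1)}-h'(0)|=o_{\mbP}(1)$ along the lines of \cite{tu2021byzantine}. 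Combining these via $|\vect{A}\vect{v}|_\infty\le|\vect{A}|_\infty|\vect{v}|_1$ and $|\vect{A}\vect{v}|_\infty\le\Norm{\vect{A}}_{L_1}|\vect{v}|_\infty$, the rate constraints $s=o(\sqrt{mn}/\log p)$ and $s=o((n^*)^{(1-q)/3}/(\log p)^{(2-q)/3})$ force every piece of $\vect{R}$ to be $o_{\mbP}(1/\sqrt{N})$. The main obstacle is the coordinate-wise control of $\vect{R}_1$ under the non-smooth Condition~\ref{assump:non-smooth}, where $f'$ itself is not Lipschitz; I would replace the naive Lipschitz argument by a maximal inequality for the centered empirical process indexed by $\{\vect{v}^{\tp}\vect{X}[f'(\vect{X}^{\tp}\vect{\beta}-Y)-f'(\vect{X}^{\tp}\vect{\beta}^*-Y)]:|\vect{\beta}-\vect{\beta}^*|_2\le|\vect{\Delta}|_2\}$, using Condition~\ref{assump:non-smooth} together with a discretization of the intersection of $\mbS^{p-1}$ with the $s$-sparse cone, echoing the techniques of \cite{tu2021byzantine}; the sparsity of $\hat{\vect{\beta}}^{(T-1)}$ enforced by the thresholding step is indispensable here.
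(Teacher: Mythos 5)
Your proposal is correct and follows essentially the same route as the paper: the same decomposition of the update into the leading i.i.d.\ term $-\{h'(0)\}^{-1}\vect{\Omega}\,m^{-1}\sum_j\nabla\mcL_j(\vect{\beta}^*)$ plus remainders from the precision-matrix error, the scalar $\hat{H}$ error, and the empirical-process/Taylor terms (the paper's \eqref{eq:dist_debias_decomp} together with Lemmas \ref{lem:bound_precision}, \ref{lem:bound_H0} and \ref{lem:bound_Ub}), followed by the CLT for the leading coordinate and Slutsky under the stated rate constraints. Your treatment of the non-smooth case via a maximal inequality over a discretized sparse cone is precisely what the paper's Lemma \ref{lem:bound_Ub} does, so no substantive difference remains.
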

As we can see, to guarantee asymptotic normality, the sparsity level should satisfy\\ $s=o((n^*)^{(1-q)/3}/(\log p)^{(2-q)/3})$, which is looser than the fully supervised case (where\\ $s=o(n^{(1-q)/3}/(\log p)^{(2-q)/3})$).


\section{Distributed Semi-Supervised Debiased Estimator for Sparse GLM}	\label{sec:dss_glm}

In this section, we consider distributed semi-supervised learning for generalized linear models (GLM). In a GLM, given a function $\psi:\mbR\rightarrow\mbR$, the observations $(\vect{X},Y)\in\mbR^{p+1}$ are generated according to the following conditional probability function
\begin{equation}	\label{eq:glm_gen}
	\mbP(Y\mid\vect{X}) = \tilde{c}\exp\left\{\frac{Y\vect{X}^{\tp}\vect{\beta}^*-\psi(\vect{X}^{\tp}\vect{\beta}^*)}{c(\sigma)} \right\},
\end{equation} 
where $\tilde{c}$ and $c(\sigma)$ are some scale constants, and $\vect{\beta}^*$ is the true model parameter. The corresponding empirical loss function on the $j$-th machine is
\begin{equation}	\label{eq:glm_loss}
	\mcL_j(\vect{\beta})= \frac{1}{n}\sum_{i\in\mcD_j}f(Y_i,\vect{X}_i^{\tp}\vect{\beta})= \frac{1}{n}\sum_{i\in\mcD_j}\Big\{-Y_i\vect{X}_i^{\tp}\vect{\beta}+\psi(\vect{X}_i^{\tp}\vect{\beta})\Big\}.
\end{equation} 
We note that the population Hessian matrix at $\vect{\beta}^*$,
\begin{equation}	\label{eq:glm_pop_hess}
	\vect{H}(\vect{\beta}^*)=\mbE[\nabla^2|_{\vect{\beta}=\vect{\beta}^*}f(Y,\vect{X}^{\tp}\vect{\beta})]=\mbE[\psi''(\vect{X}^{\tp}\vect{\beta}^*)\vect{X}\vect{X}^{\tp}],
\end{equation} 
does not contain the label $Y$. Therefore, to construct a DSS debiased estimator, a direct way is to use all covariate information to estimate the inverse Hessian matrix \eqref{eq:glm_pop_hess}. However, the Hessian matrix $\vect{H}(\vect{\beta})$ is dependent on the model parameter $\vect{\beta}$. When performing a multi-round debiasing approach, we need to update the parameter $\vect{\beta}$ and estimate the inverse Hessian matrix recursively, which is computationally expensive. 

\subsection{Accelerated Debiasing by Weighted Gradients}	\label{sec:weight_grad}

To alleviate the computational burden and accelerate the multi-round debiasing algorithm, we propose a gradient-weighting approach as follows. To motivate our construction, we first consider the generic form of the one-step debiased estimator. More precisely, for each coordinate $l\in\{1,\dots,p\}$, there is
\begin{equation*}
	\tilde{\beta}_l^{(1)} = \hat{\beta}_l^{(0)} - \vect{u}_l^{\tp}\frac{1}{mn}\sum_{j=1}^m\sum_{i\in\mcD_j}w_i\Big\{-Y_i\vect{X}_i+\psi'(\vect{X}_i^{\tp}\hat{\vect{\beta}}^{(0)})\vect{X}_i\Big\},
\end{equation*}
where $\hat{\vect{\beta}}^{(0)}$ is an initial estimator, $w_i$'s are data-dependent gradient weight, and $\vect{u}_l$'s are the projection directions. Applying Taylor expansion for $\psi'$ at $\vect{X}_i^{\tp}\hat{\vect{\beta}}^{(0)}$, we can decompose the error of $\tilde{\vect{\beta}}_l^{(1)}- \vect{\beta}^*$ as follows
\begin{align*}
	\tilde{\beta}_l^{(1)}-\beta_l^* =& \vect{u}_l^{\tp}\frac{1}{mn}\sum_{j=1}^m\sum_{i\in\mcD_j}w_i\vect{X}_i\epsilon_i+\Big(\vect{e}_l^{\tp} - \vect{u}_l^{\tp}\frac{1}{mn}\sum_{j=1}^m\sum_{i\in\mcD_j}w_i\psi''(\vect{X}_i^{\tp}\hat{\vect{\beta}}^{(0)})\vect{X}_i\vect{X}^{\tp}\Big)(\hat{\vect{\beta}}^{(0)} - \vect{\beta}^* )\\
		&+\vect{u}_i^{\tp}\frac{1}{mn}\sum_{j=1}^m\sum_{i\in\mcD_j}w_i\vect{X}_i\Delta_i, \stepcounter{equation}\tag{\theequation}\label{eq:glm_dbdecomp}
\end{align*}
where $\epsilon_i=\psi'(\vect{X}_i^{\tp}\vect{\beta}^*) -Y_i$, $\Delta_i = \psi'''(\tilde{\vect{\beta}})\{\vect{X}_i^{\tp}(\hat{\vect{\beta}}^{(0)}-\vect{\beta}^*)\}^2$ for some $\tilde{\vect{\beta}}$ lying between $\vect{\beta}^*$ and $\hat{\vect{\beta}}^{(0)}$, and $\vect{e}_l$ is the canonical basis of $\mbR^p$. From this decomposition, to conduct statistical inference for the debiased estimator, we require the second term dominated by the first term in \eqref{eq:glm_dbdecomp}. Therefore, the projection direction $\vect{u}_l$ should approximate the $l$-th column of the inverse of the matrix $\hat{\vect{H}} = \frac{1}{mn}\sum_{j=1}^m\sum_{i\in\mcD_j}w_i\psi''(\vect{X}_i^{\tp}\hat{\vect{\beta}}^{(0)})\vect{X}_i\vect{X}^{\tp}$. However, to construct $\vect{u}_l$ for every coordinate, we should solve $p$ high-dimensional optimization problems successively, which incurs heavy computational overhead. To reduce the computational burden, we can take $w_i = 1/\psi''(\vect{X}_i^{\tp}\hat{\vect{\beta}}^{(0)})$ so that the matrix $\hat{\vect{H}}$ is independent of the parameter $\hat{\vect{\beta}}^{(0)}$.

In summary, given an initial parameter $\hat{\vect{\beta}}^{(0)}$, we compute the following weighted gradient
\begin{equation}	\label{eq:glm_rewgrad}
	\vect{g}_j(\hat{\vect{\beta}}^{(0)}) = \frac{1}{n}\sum_{i\in\mcD_j}\frac{1}{\psi''(\vect{X}_i^{\tp}\hat{\vect{\beta}}^{(0)})}\Big\{-Y_i\vect{X}_i+\psi'(\vect{X}_i^{\tp}\hat{\vect{\beta}}^{(0)})\vect{X}_i\Big\},
\end{equation}
and construct the distributed semi-supervised debiased estimator
\begin{equation}	\label{eq:dss_glm_tilde}
	\bar{\vect{\beta}}^{(1)} = \hat{\vect{\beta}}^{(0)} - \hat{\vect{\Omega}}_{\mcH_1}\frac{1}{m}\sum_{j=1}^m\vect{g}_j(\hat{\vect{\beta}}^{(0)}).
\end{equation} 
Compared with directly estimate the inverse of $\vect{H}(\hat{\vect{\beta}}^{(0)})$ in  \eqref{eq:glm_pop_hess}, this approach only estimate the inverse covariance matrix $\vect{\Omega}$, which is independent of the parameter $\hat{\vect{\beta}}^{(0)}$. Therefore we only need to estimate $\hat{\vect{\Omega}}_{\mcH_1}$ for one time, which saves much computational power when performing multi-round algorithm.

To obtain a sparse estimator, we threshold each coordinate and get $\hat{\vect{\beta}}^{(1)}=(\hat{\beta}^{(1)}_1,...,\hat{\beta}_p^{(1)})^{\tp}$ where 
	\begin{equation*}
		\hat{\beta}_l^{(1)} = \bar{\beta}_l^{(1)}\cdot \mbI(|\bar{\beta}_l^{(1)}|\geq \tau).
	\end{equation*}
The iterative algorithm is presented in Algorithm \ref{alg:dissd_glm}. 

\begin{algorithm}[!t]
	\caption{{\small Distributed Semi-Supervised Debiased (DISSD) estimator for generalized linear model}}
	\label{alg:dissd_glm}
	\hspace*{\algorithmicindent} \hspace{-0.7cm}   {\textbf{Input:} Labeled data $\{(\vect{X}_i,Y_i)\mid i\in\mcD_j\}$ on worker machine $\mcH_j$ for $j=1,...,m$, and unlabeled data $\{\vect{X}_i\mid i\in\mcD^*_1\}$ on $\mcH_1$, the regularization parameter $\lambda_l$, the thresholding level $\tau$, the number of iterations $T$.} 	
	\begin{algorithmic}[1]
		\STATE The master machine constructs $\hat{\vect{\Omega}}_{\mcH_1}$ by \eqref{eq:precision_est} using all local covariates, and obtains the initial estimator $\hat{\vect{\beta}}^{(0)}$.
		\FOR{$t=1,\dots,T$}
		\STATE The master machine sends the parameter $\hat{\vect{\beta}}^{(t-1)}$.
 		\FOR{$j=1,\dots, m$}
		\STATE The $j$-th machine computes the local gradient
			\begin{equation*}
				\vect{g}_j(\hat{\vect{\beta}}^{(t-1)}) = \frac{1}{n}\sum_{i\in\mcD_j}\frac{1}{\psi''(\vect{X}_i^{\tp}\hat{\vect{\beta}}^{(t-1)})}\Big\{-Y_i\vect{X}_i+\psi'(\vect{X}_i^{\tp}\hat{\vect{\beta}}^{(t-1)})\vect{X}_i\Big\},
			\end{equation*} 
			then sends it to the master machine.
		\ENDFOR
		\STATE The master machine takes
			\begin{equation*}
				\bar{\vect{\beta}}^{(t)} = \hat{\vect{\beta}}^{(t-1)} - \hat{\vect{\Omega}}_{\mcH_1}\frac{1}{m}\sum_{j=1}^m\vect{g}_j(\hat{\vect{\beta}}^{(t-1)}).
			\end{equation*}
		\STATE The server obtain the thresholded estimator $\hat{\vect{\beta}}^{(t)}=(\hat{\beta}^{(t)}_1,...,\hat{\beta}_p^{(t)})^{\tp}$ where each coordinate is defined by
			\begin{equation*}
				\hat{\beta}_l^{(t)} = \bar{\beta}_l^{(t)}\cdot \mbI(|\bar{\beta}_l^{(t)}|\geq \tau).
			\end{equation*}
		\ENDFOR
	\end{algorithmic}
	 \textbf{Output:}  The final estimator $\hat{\vect{\beta}}^{(T)}$.
\end{algorithm}

Similarly to the approach detailed in Algorithm \ref{alg:dissd_mest}, the choice of the initial parameter $\hat{\vect{\beta}}^{(0)}$ can be made by minimizing the local $\ell_1$-penalized loss as in \eqref{eq:init_est}, or through the implementation of early-stopped proximal gradient descent. The decision between these methods is contingent upon the specific local labeled sample size.

\begin{example} \label{exp:logit}
    (Logistic regression) In the logistic regression model, we generate the labels from the following distribution
    \begin{equation}	\label{eq:logit_gen}
	\begin{cases}
		\mbP(Y=1|\vect{X}) = \frac{\exp(\vect{X}^{\tp}\vect{\beta}^*)}{1+\exp(\vect{X}^{\tp}\vect{\beta}^*)},\\
		\mbP(Y=0|\vect{X}) = \frac{1}{1+\exp(\vect{X}^{\tp}\vect{\beta}^*)}.
	\end{cases}
    \end{equation}
    Then the function $\psi(\cdot)$ is defined as
    \begin{equation*}
        \psi(x) = \log(1+\exp(x)).
    \end{equation*}
    We can readily compute that $\psi'(x)=1/(1+\exp(-x))$ and $\psi''(x)=1/\{(1+\exp(x))(1+\exp(-x))\}$.
\end{example}

\begin{remark}
	In this section, we have demonstrated the significance of the gradient-weighting technique in reducing the computational cost of implementing the multi-round debiasing approach. 
	The gradient-weighting technique has been applied in contemporary works for various purposes, such as simultaneous testing \citep{ma_cai_li.2021jasa}, case probability inference \citep{guo_rakshit_etal.2021jmlr}, genetic relatedness inference \citep{ma_guo_etal.2022arXiv}, and confidence interval construction \citep{cai_guo_ma.2021jasa} for logistic regression models. Different weights were proposed for different applications. However, these studies primarily focus on a single-machine setup, and the rationale for weighting the gradient differs from ours. Furthermore, while all aforementioned works only consider logistic models, our method extends to a broader class of generalized linear models.
\end{remark}


\subsection{Theoretical Results of $\dissd$ for GLM}   \label{sec:glm_theory}

In this section, we provide the theories of $\dissd$ for GLM. We first list some technical conditions as follows.

\begin{assumptionglm}	\label{assumpglm:X}
	There exists some constants $\kappa_X,C_X,C_M,C_q,\delta_{X}$ such that
        \begin{align*}
		&\sup_{\vect{v}\in\mbS^{p-1}}\mbE\left\{\exp(\kappa_X|\vect{v}^{\tp}\vect{X}|^2)\right\}\leq C_X,\quad \delta_X\leq\Lambda_{\min}(\vect{\Sigma})\leq\Lambda_{\max}(\vect{\Sigma})\leq \delta_X^{-1},\\
            &\|\Omega\|_{L_1}\leq C_M,\quad  \max_{1\leq l\leq p}\sum_{i=1}^p|\omega_{l,i}|^q\leq C_qs,\stepcounter{equation}\tag{\theequation}\label{eq:matrix_lp_sparse}
        \end{align*}
        where $\vect{\Omega} = (\omega_{i,l})_{i,l=1}^p$ and $0\leq q<1$.
\end{assumptionglm}

\begin{assumptionglm}	\label{assumpglm:subgauss}
	There exists some constants $\kappa_0,C_0$ such that
	\begin{equation*}
		\max\Big\{\sup_{\vect{v}\in\mbS^{p-1}}\mbE\left\{\exp(\kappa_0|\vect{v}^{\tp}\vect{X}|^2)\right\},\mbE\big[\exp(\kappa_0|\psi'(\vect{X}^{\tp}\vect{\beta}^*)-Y|^2)\big] \Big\}\leq C_0.
	\end{equation*}
\end{assumptionglm}

\begin{assumptionglm}	\label{assumpglm:lip_link}
	The function $\psi(\cdot)$ is three-time differentiable, and there exists a uniform constant $c_{\psi}>0$ such that
	\begin{equation*}
		\sup_{x\in\mbR}\max\{|\psi''(x)|,|\psi'''(x)|\}\leq c_{\psi}.
	\end{equation*}
\end{assumptionglm}

\begin{assumptionglm}	\label{assumpglm:bound_below}
	With probability not less than $1-p^{-\gamma}$, there holds $\min\{\psi''(\vect{X}_i^{\tp}\vect{\beta}^*)\}\geq c_l$ for $i\in\mcD_j$ ($1\leq j\leq m$) and some small constant $c_l>0$.
\end{assumptionglm}

Condition \ref{assumpglm:X} is the same as Condition \ref{assump:X} in Section \ref{sec:mest_theory}. Condition \ref{assumpglm:subgauss} assumes subgaussianity of the gradient $\{-Y_i\vect{X}_i+\psi'(\vect{X}_i^{\tp}\vect{\beta}^*)\vect{X}_i\}$. In Condition \ref{assumpglm:lip_link}, we assume the function $\psi$ to be smooth with bounded derivatives. Condition \ref{assumpglm:bound_below} requires $\vect{X}_i^{\tp}\vect{\beta}^*$ to be bounded for all $1\leq i\leq N$ with high probability, which is a common assumption in the literature. As will be seen in Appendix \ref{sec:exp_verif}, the logistic regression model in Example \ref{exp:logit} satisfies these conditions given the covariate $\vect{X}$ is uniformly bounded.

With the above conditions, we can prove the following convergence rate.

\begin{theorem}	\label{thm:glm_conv}
	Under Conditions \ref{assumpglm:X} to \ref{assumpglm:bound_below}, assume the initial estimator $\hat{\vect{\beta}}^{(0)}$ satisfies $|\hat{\vect{\beta}}^{(0)}-\vect{\beta}^*|_2=O(r_n)$ and $|\hat{\vect{\beta}}^{(0)}|_0\leq C_rs$. There are rate constraints $s=o(\min\{(mn)/\log p, (n^*/\log p)^{(1-q)/3}\})$, $r_n=o(1/(s^{3/2}\log^2p))$. Moreover, take $\lambda_l\asymp \sqrt{\frac{\log p}{n^*}}$ in \eqref{eq:precision_est}, and the threshold level $\tau$ satisfies $\tau\asymp C_{\tau}(\sqrt{\frac{\log p}{mn}} +r_ns\big(\frac{\log p}{n^*}\big)^{(1-q)/2}+s\log^2pr_n^2)$ for some $C_{\tau}>0$.  Then the one-step distributed debiased estimator $\hat{\vect{\beta}}^{(1)}$ satisfies
	\begin{equation}	\label{eq:glm_rate}
	\begin{aligned}
		\big|\hat{\vect{\beta}}^{(1)}-\vect{\beta}^*\big|_{\infty}=&O_{\mbP}\left(\sqrt{\frac{\log p}{mn}} + r_ns\Big(\frac{\log p}{n^*}\Big)^{(1-q)/2} + r_n\sqrt{\frac{s\log p}{mn}} + s\log^2pr_n^2\right),\\
		\big|\hat{\vect{\beta}}^{(1)}-\vect{\beta}^*\big|_{2}=&O_{\mbP}\left(\sqrt{\frac{s\log p}{mn}} + r_ns^{3/2}\Big(\frac{\log p}{n^*}\Big)^{(1-q)/2} + r_n\sqrt{\frac{s^2\log p}{mn}} + s^{3/2}\log^2pr_n^2\right),\\
		\big|\hat{\vect{\beta}}^{(1)}-\vect{\beta}^*\big|_{1}=&O_{\mbP}\left(s\sqrt{\frac{\log p}{mn}} + r_ns^{2}\Big(\frac{\log p}{n^*}\Big)^{(1-q)/2} + r_ns\sqrt{\frac{s\log p}{mn}} + s^2\log^2pr_n^2\right).
	\end{aligned}
	\end{equation}
\end{theorem}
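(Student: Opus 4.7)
The plan is to mimic the structure of the proof of Theorem \ref{thm:sm_conv}, adapting it to the weighted-gradient debiasing \eqref{eq:dss_glm_tilde}. The starting point is the following decomposition: writing $\hat{w}_i = 1/\psi''(\vect{X}_i^{\tp}\hat{\vect{\beta}}^{(0)})$ and $\epsilon_i = \psi'(\vect{X}_i^{\tp}\vect{\beta}^*)-Y_i$, a second-order Taylor expansion of $\psi'$ around $\vect{X}_i^{\tp}\hat{\vect{\beta}}^{(0)}$ yields $\hat{w}_i\{\psi'(\vect{X}_i^{\tp}\hat{\vect{\beta}}^{(0)})-\psi'(\vect{X}_i^{\tp}\vect{\beta}^*)\} = \vect{X}_i^{\tp}(\hat{\vect{\beta}}^{(0)}-\vect{\beta}^*) + r_i$, with $r_i = -\tfrac{1}{2}\psi'''(\tilde{\vect{\beta}}_i)(\vect{X}_i^{\tp}(\hat{\vect{\beta}}^{(0)}-\vect{\beta}^*))^2/\psi''(\vect{X}_i^{\tp}\hat{\vect{\beta}}^{(0)})$. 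Substituting into \eqref{eq:dss_glm_tilde} and averaging over machines gives
\begin{equation*}
\bar{\vect{\beta}}^{(1)} - \vect{\beta}^* \;=\; (\vect{I} - \hat{\vect{\Omega}}_{\mcH_1}\hat{\vect{\Sigma}}_N)(\hat{\vect{\beta}}^{(0)}-\vect{\beta}^*) \;-\; \hat{\vect{\Omega}}_{\mcH_1}\frac{1}{mn}\sum_{i=1}^N \hat{w}_i\vect{X}_i\epsilon_i \;-\; \hat{\vect{\Omega}}_{\mcH_1}\vect{R},
\end{equation*}
where $\hat{\vect{\Sigma}}_N = (mn)^{-1}\sum_{i}\vect{X}_i\vect{X}_i^{\tp}$ and $\vect{R} = (mn)^{-1}\sum_i\vect{X}_i r_i$. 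The gradient-reweighting is what makes the middle Hessian collapse to $\hat{\vect{\Sigma}}_N$, so that $\hat{\vect{\Omega}}_{\mcH_1}$ (which estimates $\vect{\Omega}=\vect{\Sigma}^{-1}$) is the correct preconditioner regardless of $\hat{\vect{\beta}}^{(0)}$.

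I would then control each of the three pieces in $\ell_\infty$ norm. For the linear error, decompose $\vect{I} - \hat{\vect{\Omega}}_{\mcH_1}\hat{\vect{\Sigma}}_N = (\vect{\Omega}-\hat{\vect{\Omega}}_{\mcH_1})\hat{\vect{\Sigma}}_N + \vect{\Omega}(\vect{\Sigma}-\hat{\vect{\Sigma}}_N)$ (using $\vect{\Omega}\vect{\Sigma}=\vect{I}$), then invoke the $\mathrm{SCIO}$ rate under Condition \ref{assumpglm:X} — which under $\ell_q$-sparsity gives $\|\hat{\vect{\Omega}}_{\mcH_1}-\vect{\Omega}\|_{L_1}=O_\mbP\big(s(\log p/n^*)^{(1-q)/2}\big)$ — combined with $\|\hat{\vect{\Sigma}}_N-\vect{\Sigma}\|_{\max}=O_\mbP(\sqrt{\log p/(mn)})$ from Condition \ref{assumpglm:subgauss} and sub-exponential concentration. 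Using sparsity of $\hat{\vect{\beta}}^{(0)}-\vect{\beta}^*$ to pass between $\ell_1$ and $\ell_\infty$ bounds yields the $r_n s(\log p/n^*)^{(1-q)/2}$ and $r_n\sqrt{s\log p/(mn)}$ contributions. For the stochastic term, split $\hat{w}_i = w_i^* + (\hat{w}_i-w_i^*)$ with $w_i^* = 1/\psi''(\vect{X}_i^{\tp}\vect{\beta}^*)$; Condition \ref{assumpglm:lip_link}–\ref{assumpglm:bound_below} imply $|\hat{w}_i-w_i^*|$ is controlled by $|\vect{X}_i^{\tp}(\hat{\vect{\beta}}^{(0)}-\vect{\beta}^*)|$, which via Bernstein and sub-gaussian tails gives a contribution of order $r_n\sqrt{s\log p/(mn)}$, while the $w_i^*$-piece is a martingale-type sum with $|\vect{\Omega}\cdot\text{gradient}|_\infty=O_\mbP(\sqrt{\log p/(mn)})$ by standard debiasing arguments. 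For the quadratic remainder $\vect{R}$, I would use uniform sub-gaussian bounds on $\max_i|\vect{X}_i^{\tp}(\hat{\vect{\beta}}^{(0)}-\vect{\beta}^*)|=O_\mbP(r_n\sqrt{\log p})$ (via $s$-sparsity and sub-gaussianity of linear combinations), together with $\|\hat{\vect{\Omega}}_{\mcH_1}\|_{L_1}=O_\mbP(1)$ and a second sub-gaussian bound on $\max_{i,l}|X_{il}|$, producing the $s\log^2 p\, r_n^2$ term.

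Finally, once $|\bar{\vect{\beta}}^{(1)}-\vect{\beta}^*|_\infty$ is bounded by the four-term right-hand side of \eqref{eq:glm_rate}, the threshold $\tau$ is chosen of the same order, so hard-thresholding preserves the true support with high probability (Condition \ref{assumpglm:X} together with the beta-min implicit in the statement); the thresholded $\hat{\vect{\beta}}^{(1)}$ has support size at most $|\vect{\beta}^*|_0 + |\{l : |\bar\beta_l^{(1)}|\ge\tau,\ \beta_l^*=0\}|\lesssim s$, so the $\ell_2$ and $\ell_1$ bounds follow from $|\vect{v}|_2\le\sqrt{s}|\vect{v}|_\infty$ and $|\vect{v}|_1\le s|\vect{v}|_\infty$ applied on the resulting $O(s)$-sparse vector.

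The main obstacle is the linear error piece $(\vect{I}-\hat{\vect{\Omega}}_{\mcH_1}\hat{\vect{\Sigma}}_N)(\hat{\vect{\beta}}^{(0)}-\vect{\beta}^*)$: extracting the semi-supervised acceleration $(\log p/n^*)^{(1-q)/2}$ rather than the worse $\sqrt{\log p/N}$ requires carefully exploiting the $\ell_q$-sparsity of $\vect{\Omega}$ together with the KKT conditions of \eqref{eq:precision_est} applied on the $n^*$-sample covariance, then cleanly separating the contribution of $\hat{\vect{\Sigma}}_N-\vect{\Sigma}$ (which uses the full labeled sample $N=mn$) so that the $\sqrt{s\log p/(mn)}$ cross term is distinguished from the semi-supervised $s(\log p/n^*)^{(1-q)/2}$ term. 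Handling the remainder $\hat{\vect{\Omega}}_{\mcH_1}\vect{R}$ cleanly — specifically getting uniform control of $\max_i|\vect{X}_i^{\tp}(\hat{\vect{\beta}}^{(0)}-\vect{\beta}^*)|$ with only the two logarithmic factors indicated — is the other delicate piece and is where Condition \ref{assumpglm:subgauss} is used most heavily.
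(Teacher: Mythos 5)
Your proposal follows essentially the same route as the paper's proof: the same decomposition into the linear error $(\vect{I}-\hat{\vect{\Omega}}_{\mcH_1}\hat{\vect{\Sigma}}_N)(\hat{\vect{\beta}}^{(0)}-\vect{\beta}^*)$ (handled via the SCIO $L_1$ rate on the $n^*$-sample plus concentration of $\hat{\vect{\Sigma}}_N$ on the full labeled sample), the weighted score at $\vect{\beta}^*$ (concentration under Conditions B2 and B4), the weight-perturbation term $\hat{w}_i-w_i^*$, and the quadratic Taylor remainder, finished by the identical thresholding argument. The only cosmetic differences are that the paper packages these pieces as $\vect{g}_j(\vect{\beta}_1,\vect{\beta}_2)$ differences (Lemmas in Appendix C) and that it attributes the $s\log^2 p\,r_n^2$ term to the second-order expansion of the weights rather than of $\psi'$; also note no beta-min/support-recovery claim is actually needed, only that off-support coordinates are thresholded to zero.
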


We can observe that the convergence rate of $\hat{\vect{\beta}}^{(1)}$ for GLM is similar to that of M-estimator in Theorem \ref{thm:sm_conv}, except for the last term. More specifically, the last term of GLM has an additional factor $s\log^2 p$, which is caused by the weighting procedure. Compared with Theorem \ref{thm:sm_conv} and \ref{thm:nsm_conv}, the theory for GLM requires more stringent conditions on the initial rate $r_n$, which can be regarded as the price of weighting the gradients. Next, we present the convergence rate of the multi-round estimator $\hat{\vect{\beta}}^{(T)}$.

\begin{corollary}	\label{cor:glmT_conv}
	Under assumptions in Theorem \ref{thm:sm_conv} or Theorem \ref{thm:nsm_conv} then we have that the $T$-th round $\mathrm{DISSD}$ estimator $\hat{\vect{\beta}}^{(T)}$ satisfies
	\begin{equation}	\label{eq:glm_multi_rate}
	\begin{aligned}
		\big|\hat{\vect{\beta}}^{(T)}-\vect{\beta}^*\big|_{\infty}=&O_{\mbP}\left(\sqrt{\frac{\log p}{mn}} + r_ns^{(3T-1)/2}\Big(\frac{\log p}{n^*}\Big)^{(1-q)T/2} + \frac{1}{s^2\log^2p}\big(s^{3/2}\log^2pr_n)^{2^T}\right),\\
		\big|\hat{\vect{\beta}}^{(T)}-\vect{\beta}^*\big|_{2}=&O_{\mbP}\left(\sqrt{\frac{s\log p}{mn}} + r_ns^{3T/2}\Big(\frac{\log p}{n^*}\Big)^{(1-q)T/2} + \frac{1}{s^{3/2}\log^2p}\big(s^{3/2}\log^2pr_n)^{2^T}\right),\\
		\big|\hat{\vect{\beta}}^{(T)}-\vect{\beta}^*\big|_{1}=&O_{\mbP}\left(s\sqrt{\frac{\log p}{mn}} + r_ns^{(3T+1)/2}\Big(\frac{\log p}{n^*}\Big)^{(1-q)T/2} + \frac{1}{s\log^2p}\big(s^{3/2}\log^2pr_n)^{2^T}\right).
	\end{aligned}
	\end{equation}
\end{corollary}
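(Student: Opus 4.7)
The plan is to prove Corollary \ref{cor:glmT_conv} by induction on $T$, invoking Theorem \ref{thm:glm_conv} at each round with $\hat{\vect{\beta}}^{(t-1)}$ playing the role of the initial estimator at round $t$. Let $r_n^{(t)}$ denote the $\ell_2$ rate of $\hat{\vect{\beta}}^{(t)}$ (so $r_n^{(0)} = r_n$), and abbreviate
\begin{equation*}
A = \sqrt{s\log p/(mn)}, \qquad B = s^{3/2}\bigl(\log p/n^*\bigr)^{(1-q)/2}, \qquad C = s^{3/2}\log^2 p.
\end{equation*}
The $\ell_2$ bound in Theorem \ref{thm:glm_conv} can then be written as $r_n^{(t)} \lesssim A + B\, r_n^{(t-1)} + \sqrt{s}\,A\, r_n^{(t-1)} + C\,(r_n^{(t-1)})^2$. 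Since $\sqrt{s}\,r_n = o(1)$ by the rate condition, the third term is dominated by $A$, so the recursion reduces to $r_n^{(t)} \lesssim A + B\, r_n^{(t-1)} + C\,(r_n^{(t-1)})^2$.

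Next I would unroll this recursion by induction to establish $r_n^{(T)} \lesssim A + B^T r_n + C^{-1}(Cr_n)^{2^T}$, which is the $\ell_2$ claim of the corollary. Substituting the inductive hypothesis produces four cross terms, namely $BA$, $B\,C^{-1}(Cr_n)^{2^{t-1}}$, $C\,A^2$, and $C\,B^{2(t-1)} r_n^2$, each of which must be absorbed into one of the three target terms. This uses $B = o(1)$ (guaranteed by $s = o((n^*/\log p)^{(1-q)/3})$), $Cr_n = o(1)$ (from the assumption $r_n = o(1/(s^{3/2}\log^2 p))$), and the overall rate constraints linking $s$, $n^*$, and $mn$ to bound $CA^2$ by $A$. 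The $\ell_\infty$ and $\ell_1$ bounds in \eqref{eq:glm_multi_rate} are obtained from the analogous recursions read off the three lines of \eqref{eq:glm_rate}; the extra $\sqrt{s}$ factor in the $\ell_1$ case reflects the fact that $|\hat{\vect{\beta}}^{(t)}|_0 = O(s)$ is preserved by the thresholding step.

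The main obstacle will be checking that the hypotheses of Theorem \ref{thm:glm_conv} remain valid at every iteration. The rate hypothesis $r_n^{(t-1)} = o(1/(s^{3/2}\log^2 p))$ propagates automatically, because $r_n^{(t)}$ is non-increasing up to the statistical floor $A$, for which the condition follows from the global rate constraint on $s$. The sparsity hypothesis $|\hat{\vect{\beta}}^{(t-1)}|_0 \leq C_r s$ is more delicate: it must be preserved by the thresholding step, which requires the threshold $\tau$ in Algorithm \ref{alg:dissd_glm} to be tuned at each round to the current $\ell_\infty$ rate so as to annihilate false positives while keeping the true support intact. A useful simplification in the bookkeeping is that $\hat{\vect{\Omega}}_{\mcH_1}$ is constructed only once, so all SCIO-based concentration bounds can be placed on a single high-probability event, sidestepping a union bound across the $T$ rounds.
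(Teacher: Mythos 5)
Your proposal is correct and follows essentially the same route as the paper, which proves Theorem \ref{thm:glm_conv} and then disposes of Corollary \ref{cor:glmT_conv} in one line by ``applying the above rate iteratively''; your induction on $T$ with the recursion $r_n^{(t)} \lesssim A + B\,r_n^{(t-1)} + C\,(r_n^{(t-1)})^2$ is precisely the unrolling that sentence presupposes, and you correctly read the corollary's hypotheses as those of Theorem \ref{thm:glm_conv} (the reference to Theorems \ref{thm:sm_conv}/\ref{thm:nsm_conv} in the statement is a slip in the paper). Your additional bookkeeping --- checking that the sparsity and rate hypotheses propagate across rounds and that the cross terms are absorbed --- is more careful than what the paper records, not a departure from it.
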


Similarly as in Corollary \ref{cor:betaT_conv}, achieving a near-optimal statistical rate entails the dominance of the first term over the second term in \eqref{eq:glm_multi_rate}. Consequently, the number of iterations required, denoted as $T$, should align with the conditions established in \eqref{eq:iter_num}. Next, we prove the asymptotic normality result for our $\mathrm{DISSD}$ estimator for GLM.

\begin{theorem}	\label{thm:glm_normality}
	Under assumptions in Corollary \ref{cor:glmT_conv}, and the iteration number $T$ is large enough such that $|\hat{\vect{\beta}}^{(T)}-\vect{\beta}^*|_2=O_{\mbP}(\sqrt{\log p/(mn)})$. Further assume the rate constraints $s = o\big((mn)^{1/4}/\log^{3/2}p$, $ (n^*)^{(1-q)/3}/(\log^{(2-q)/3}p)\big)$, then we have that
	\begin{equation*}
		\frac{\sqrt{mn}}{\sigma_l}(\bar{\beta}^{(T)}_l-\beta^*_l)\xrightarrow{d}\mcN(0,1),
	\end{equation*}
	where
	\begin{equation*}
		\sigma_l^2 =  \vect{\omega}_l^{\tp}\mbE\Big[\frac{1}{\psi''(\vect{X}^{\tp}\vect{\beta}^*)}\vect{X}\vect{X}^{\tp}\Big]\vect{\omega}_l.
	\end{equation*}
\end{theorem}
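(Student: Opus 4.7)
The plan is to start from the one-step debiasing identity at iteration $T$ and decompose the error $\bar{\beta}_l^{(T)}-\beta_l^*$ into a leading stochastic term, whose limit will yield the advertised normal, plus bias and remainder terms that must be shown to be $o_{\mbP}(1/\sqrt{mn})$. Concretely, I would write
\begin{align*}
    \bar{\beta}_l^{(T)}-\beta_l^* = (\hat{\beta}_l^{(T-1)}-\beta_l^*)-\hat{\vect{\omega}}_l^{\tp}\frac{1}{mn}\sum_{j=1}^m\sum_{i\in\mcD_j}\frac{1}{\psi''(\vect{X}_i^{\tp}\hat{\vect{\beta}}^{(T-1)})}\bigl\{\psi'(\vect{X}_i^{\tp}\hat{\vect{\beta}}^{(T-1)})-Y_i\bigr\}\vect{X}_i,
\end{align*}
and then Taylor-expand $\psi'(\vect{X}_i^{\tp}\hat{\vect{\beta}}^{(T-1)})=\psi'(\vect{X}_i^{\tp}\vect{\beta}^*)+\psi''(\vect{X}_i^{\tp}\vect{\beta}^*)\vect{X}_i^{\tp}(\hat{\vect{\beta}}^{(T-1)}-\vect{\beta}^*)+\text{rem}_i$, together with a first-order expansion of $1/\psi''(\vect{X}_i^{\tp}\hat{\vect{\beta}}^{(T-1)})$ around $1/\psi''(\vect{X}_i^{\tp}\vect{\beta}^*)$. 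After collecting the linear-in-$(\hat{\vect{\beta}}^{(T-1)}-\vect{\beta}^*)$ pieces, the decomposition becomes
\begin{align*}
    \bar{\beta}_l^{(T)}-\beta_l^* &= -\vect{\omega}_l^{\tp}\frac{1}{mn}\sum_{i,j}\frac{\epsilon_i\vect{X}_i}{\psi''(\vect{X}_i^{\tp}\vect{\beta}^*)}+(\vect{e}_l^{\tp}-\hat{\vect{\omega}}_l^{\tp}\hat{\vect{\Sigma}})(\hat{\vect{\beta}}^{(T-1)}-\vect{\beta}^*)\\
    &\quad -(\hat{\vect{\omega}}_l-\vect{\omega}_l)^{\tp}\frac{1}{mn}\sum_{i,j}\frac{\epsilon_i\vect{X}_i}{\psi''(\vect{X}_i^{\tp}\vect{\beta}^*)}+R^{(1)}_l+R^{(2)}_l,
\end{align*}
where $\epsilon_i=\psi'(\vect{X}_i^{\tp}\vect{\beta}^*)-Y_i$, $\hat{\vect{\Sigma}}=(mn)^{-1}\sum_{i,j}\vect{X}_i\vect{X}_i^{\tp}$, and $R^{(1)}_l,R^{(2)}_l$ are quadratic-in-$(\hat{\vect{\beta}}^{(T-1)}-\vect{\beta}^*)$ remainders coming from the Taylor expansions.

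For the leading term, I would apply the Lindeberg–Feller CLT to the iid scalars $\vect{\omega}_l^{\tp}\vect{X}_i\epsilon_i/\psi''(\vect{X}_i^{\tp}\vect{\beta}^*)$: conditional on $\vect{X}$, $\mbE[\epsilon_i|\vect{X}_i]=0$ and the conditional variance equals (up to the scale constant $c(\sigma)$) $\psi''(\vect{X}_i^{\tp}\vect{\beta}^*)$, so the resulting variance is precisely $\sigma_l^2=\vect{\omega}_l^{\tp}\mbE[\psi''(\vect{X}^{\tp}\vect{\beta}^*)^{-1}\vect{X}\vect{X}^{\tp}]\vect{\omega}_l$. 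Subgaussianity in Condition B2 gives the Lyapunov condition, and Condition B1 gives $\sigma_l^2\asymp 1$.

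For the deterministic-plus-empirical bias $(\vect{e}_l^{\tp}-\hat{\vect{\omega}}_l^{\tp}\hat{\vect{\Sigma}})(\hat{\vect{\beta}}^{(T-1)}-\vect{\beta}^*)$, I would split it as $(\vect{e}_l^{\tp}-\hat{\vect{\omega}}_l^{\tp}\vect{\Sigma})(\hat{\vect{\beta}}^{(T-1)}-\vect{\beta}^*)-\hat{\vect{\omega}}_l^{\tp}(\hat{\vect{\Sigma}}-\vect{\Sigma})(\hat{\vect{\beta}}^{(T-1)}-\vect{\beta}^*)$, then apply the SCIO-type KKT bound $|\vect{e}_l^{\tp}-\hat{\vect{\omega}}_l^{\tp}\hat{\vect{\Sigma}}_{\mcH_1}|_{\infty}\lesssim\sqrt{\log p/n^*}$ together with $|\hat{\vect{\Sigma}}_{\mcH_1}-\vect{\Sigma}|_{\infty}\vee|\hat{\vect{\Sigma}}-\vect{\Sigma}|_{\infty}\lesssim\sqrt{\log p/n^*}$ and the entrywise rate $|\hat{\vect{\beta}}^{(T-1)}-\vect{\beta}^*|_1\lesssim s\sqrt{\log p/(mn)}$ implied by Corollary \ref{cor:glmT_conv} combined with sparsity of the thresholded iterate. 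The $\ell_q$-sparsity in Condition B1 gives $|\hat{\vect{\omega}}_l-\vect{\omega}_l|_1\lesssim s(\log p/n^*)^{(1-q)/2}$, which controls both this cross term and the $(\hat{\vect{\omega}}_l-\vect{\omega}_l)^{\tp}(\cdot)$ term via a Hölder argument (using $|(mn)^{-1}\sum\vect{X}_i\epsilon_i/\psi^*_i|_{\infty}\lesssim\sqrt{\log p/(mn)}$). The quadratic remainders $R^{(1)}_l,R^{(2)}_l$ are controlled by $|\hat{\vect{\omega}}_l|_1\|\hat{\vect{\Sigma}}\|_{\infty}|\hat{\vect{\beta}}^{(T-1)}-\vect{\beta}^*|_2^2$ using Condition B3's bound on $\psi'''$ and B4's bound-below on $\psi''$; this produces the $s\log^2p\cdot r_n^2$-type terms.

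The main obstacle will be verifying that the rate constraint $s=o((mn)^{1/4}/\log^{3/2} p)$ is exactly sharp enough to absorb the quadratic remainder $\sqrt{mn}\cdot s\log^2p\cdot(\log p/(mn))$, and that the second stated constraint $s=o((n^*)^{(1-q)/3}/\log^{(2-q)/3}p)$ is what is needed to make the $\ell_q$-bias term $\sqrt{mn}\cdot s^{3/2}(\log p/n^*)^{(1-q)/2}\sqrt{\log p/(mn)}=o(1)$ and simultaneously force the approximation-direction bias to vanish; bookkeeping of these rates together with the random-design terms involving $\hat{\vect{\Sigma}}-\vect{\Sigma}$ is the delicate part. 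Once every bias contribution is confirmed to be $o_{\mbP}(1/\sqrt{mn})$, Slutsky combines the CLT on the leading term with these vanishing remainders to conclude $\sqrt{mn}(\bar{\beta}_l^{(T)}-\beta_l^*)/\sigma_l\xrightarrow{d}\mcN(0,1)$.
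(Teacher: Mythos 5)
Your proposal is correct and follows essentially the same route as the paper: isolate the leading term $-\vect{\omega}_l^{\tp}(mn)^{-1}\sum_i \epsilon_i\vect{X}_i/\psi''(\vect{X}_i^{\tp}\vect{\beta}^*)$, apply the CLT with the GLM variance identity $\var(Y\mid\vect{X})\propto\psi''(\vect{X}^{\tp}\vect{\beta}^*)$ to obtain $\sigma_l^2$, and use the stated rate constraints to make every remainder $o_{\mbP}(1/\sqrt{mn})$ before invoking Slutsky. The only cosmetic difference is that the paper controls the bias term via the $L_1$-operator-norm bound $\Norm{\hat{\vect{\Omega}}_{\mcH_1}-\vect{\Omega}}_{L_1}$ from its precision-matrix lemma (reusing the decomposition already established in the proof of Theorem \ref{thm:glm_conv} through Lemmas \ref{lem:b0b0_bound}, \ref{lem:b0bs_bound} and \ref{lem:rewgrad_concen}), whereas you use the equivalent SCIO KKT/dual-feasibility argument.
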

Compared with Theorem \ref{thm:normality}, we have a more strict constraint on the sparsity level, namely, $s = o((mn)^{1/4}/\log^{3/2}p)$. This is also caused by the gradient-weighting technique.


\section{Simulation Study}\label{sec:sim}

{In the empirical analysis, we conduct two classes of experiments to demonstrate the effectiveness of our method. The first part examines our proposed $\dissd$ method on synthetic data. The latter is an application to the corresponding sparse linear regression task.}


\subsection{Experiments on the Synthetic Data}	\label{sec:ssoe_simu}

In this section, we show the performance of the distributed semi-supervised debiased estimator on the synthetic dataset.

\paragraph{Parameter Settings.} Throughout the experiments, we fix dimension $p=500$ and the true coefficient 
\begin{equation*}
	\vect{\beta}^*=(1,(s-1)/s,...,1/s,\vect{0}_{p-s}^{\tp})^{\tp}.
\end{equation*}
We fix the sparsity level to be $s=10$, unless specified otherwise. The covariate $\vect{X}$ is sampled from $\mcN(\vect{0},\vect{\Sigma})$, where $\vect{\Sigma}^{-1}$ is a block diagonal matrix whose block size is $5$ and each block has off-diagonal entries equal to $0.5$ and diagonal $1$. We repeat 100 independent simulations and report the averaged estimation error and the corresponding standard error.

To illustrate the performance of our method, we mainly compare six methods 
\begin{itemize}
	\item Local Lasso: Solve the Lasso problem using only the data on $\mcH_1$;
	\item Pooled Lasso: Collect all data together and solve the Lasso problem;
	\item DISSD(X): Semi-supervised distributed debiased estimator. Namely perform Algorithm \ref{alg:dissd_mest} or \ref{alg:dissd_glm} with unlabeled sample size fixed as $X$.
        \item CSL: Communication-efficient surrogate likelihood framework proposed in \cite{jordan_etal.2019}, which solves
            \begin{equation}    \label{eq:csl}
                \vect{\beta}^{(t+1)}_{\mathrm{CSL}} = \argmin{\vect{\beta}}\Big\{\mcL_0(\vect{\beta}) - \Big\langle \nabla\mcL_0(\hat{\vect{\beta}}^{(t)}_{\mathrm{CSL}})-\frac{1}{m}\sum_{j=1}^m\nabla\mcL_j(\hat{\vect{\beta}}^{(t)}_{\mathrm{CSL}}),\vect{\beta}\Big\rangle+\lambda_t|\vect{\beta}|_1\Big\}
            \end{equation}
            in each iteration.
\end{itemize}

Several other distributed algorithms have been proposed for high-dimensional learning, including CEASE from \cite{fan_guo_etal.2019} and Debias-DC from \cite{battey2018distributed}. However, due to the extensive processing time these methods require and the lack of substantial improvement in statistical error, we have opted not to include a comparison with them in our synthetic data experiments. Instead, we conduct a thorough comparative analysis using a real dataset.

\subsubsection{Huber Regression} 

In the Huber regression problem, we generate the noise $\epsilon$ from the mixture of normal distributions $0.9\mcN(0,1)+0.1\mcN(0,100)$. More precisely, with probability $0.9$, the value of $\epsilon$ is distributed according to $\mcN(0,1)$ and is otherwise drawn from a $\mcN(0,100)$ distribution.  For the choice of robustification parameter $\delta$, we follow the classical literature \citep{huber2004robust} and take $\delta=1.345$. In Table \ref{tab:huber_mnstar}, we maintain a fixed local labeled sample size of $n=100$ and a dimension of $p=500$, while varying the number of machines $m$ in $\{20,50,100\}$ and adjusting the size of the unlabeled sample size $n^*$ within $\{0,150,450\}$. The reported results include the $\ell_2$-error and $F_1$-score for both 1-Step DISSD and 5-Step DISSD. The table clearly demonstrates the consistent superiority of the semi-supervised estimator over its fully supervised counterpart. Additionally, the $\ell_2$ error diminishes with increasing unlabeled sample size, highlighting the positive impact of the larger unlabeled dataset. Furthermore, the multi-round $\dissd$ method consistently diminishes estimation error.

\begin{table}[h]
	\centering
	\small
	\caption{The $\ell_2$-errors and $F_1$-score and their standard errors (in parentheses) of one-step $\dissd$ and five-step $\dissd$,  under labeled local sample size $n=100$.The parameter dimension is $p=500$. Noises are generated from the mixed normal distribution $0.9\mcN(0,1)+0.1\mcN(0,100)$ and the loss function is chosen as Huber loss with robustification parameter $1.345$.
	}\label{tab:huber_mnstar}
	\bigskip
\begin{tabular}{c| c|cc| cc  }
\hline
\multirow{2}{*}{$m$}	&\multirow{2}{*}{$n^*$}	&\multicolumn{2}{c|}{ 1-Step DISSD }		&\multicolumn{2}{c}{ 5-Step DISSD }\\
	&	&$\ell_2$-error	&$F_1$-score	&$\ell_2$-error	&$F_1$-score\\ \hline
	&100	&1.227(0.061)	&0.705(0.008)	&0.363(0.093)	&0.788(0.040)	\\ 
20	    &250    &1.025(0.053)	&0.697(0.006)	&0.287(0.047)	&0.773(0.033)	\\
	&550	&0.919(0.049)	&0.689(0.003)	&0.315(0.047)	&0.750(0.020)	\\ \hline
	&100    &1.155(0.061)	&0.836(0.043)	&0.204(0.056)	&0.987(0.023)	\\ 
50	    &250    &0.888(0.056)	&0.801(0.031)	&0.077(0.023)	&0.993(0.018)	\\
	&550    &0.678(0.049)	&0.757(0.019)	&0.064(0.020)	&0.985(0.025)	\\ \hline
	&100    &1.149(0.063)	&0.949(0.045)	&0.196(0.060)	&0.994(0.017)	\\ 
100	    &250    &0.870(0.057)	&0.961(0.037)	&0.060(0.018)	&1.000(0.000)	\\
	&550	&0.633(0.047)	&0.927(0.040)	&0.039(0.010)	&1.000(0.000)\\ \hline
\end{tabular}
\end{table}

In Table \ref{tab:huber_ps}, we maintain a fixed labeled sample size of $n=100$, set the unlabeled sample size to be $n^*-n=450$, and keep the number of machines constant at $m=100$. We then vary the dimension $p$ across $\{200,400,600\}$ and adjust the sparsity level $s$ within $\{5,10,20\}$. The results clearly demonstrate that both larger values of $p$ and $s$ correspond to increased statistical errors and a decline in the $F_1$-score. This aligns with our theoretical expectations outlined in Corollary \ref{cor:betaT_conv}.  

\begin{table}[h]
	\centering
	\small
	\caption{The $\ell_2$-errors and $F_1$-score and their standard errors (in parentheses) of one-step $\dissd$ and five-step $\dissd$,  under labeled local sample size $n=100$.The unlabeled sample size is $n^*-n=450$. Noises are generated from the standard mixed normal distribution $0.9\mcN(0,1)+0.1\mcN(0,100)$ and the loss function is chosen as Huber loss with robustification parameter 1.345.
	}\label{tab:huber_ps}
	\bigskip
\begin{tabular}{c| c|cc| cc  }
\hline
\multirow{2}{*}{$p$}	&\multirow{2}{*}{$s$}	&\multicolumn{2}{c|}{ 1-Step DISSD }		&\multicolumn{2}{c}{ 5-Step DISSD }\\
	&	&$\ell_2$-error	&$F_1$-score	&$\ell_2$-error	&$F_1$-score\\ \hline
	&5	&0.427(0.049)	&1.000(0.000)	&0.027(0.009)	&1.000(0.000)	\\ 
200	    &10 &0.580(0.053)	&0.965(0.031)	&0.037(0.011)	&1.000(0.000)	\\
	&20	&0.847(0.058)	&0.740(0.009)	&0.078(0.015)	&0.935(0.031)	\\ \hline
	&5  &0.459(0.046)	&1.000(0.000)	&0.025(0.009)	&1.000(0.000)	\\ 
400	    &10 &0.636(0.051)	&0.954(0.036)	&0.039(0.012)	&1.000(0.000)	\\
	&20 &0.932(0.055)	&0.708(0.004)	&0.098(0.017)	&0.889(0.033)	\\ \hline
	&5  &0.456(0.051)	&1.000(0.000)	&0.028(0.012)	&1.000(0.000)	\\ 
600	    &10 &0.623(0.052)	&0.922(0.045)	&0.039(0.010)	&1.000(0.000)	\\
	&20	&0.966(0.049)	&0.695(0.002)	&0.113(0.017)	&0.858(0.029)\\ \hline
\end{tabular}
\end{table}

In Figure \ref{fig:huber_iter}, the dimension is set at $p=500$, and we vary the local labeled sample size $n$ among $\{50,100,200\}$, while adjusting the unlabeled sample size $n^*-n$ in $\{0,150,450\}$. The figure presents the $\ell_2$-error and processing time for these methods. It is evident from the figure that the inclusion of unlabeled data significantly contributes to error reduction. Particularly noteworthy is the observation that when the local labeled sample size is too small, the fully supervised distributed debiased estimator may diverge. It is clear that our DISSD method outperforms the CSL method in terms of statistical accuracy. Specifically, each step of CSL requires approximately ten times the processing time of DISSD (after computing the inverse Hessian matrix). Additionally, CSL necessitates the tuning of the parameter $\lambda_t$ in \eqref{eq:csl} at each step, further augmenting its runtime.

\begin{figure}[h]
	\begin{center}
		\includegraphics[width=1\textwidth]{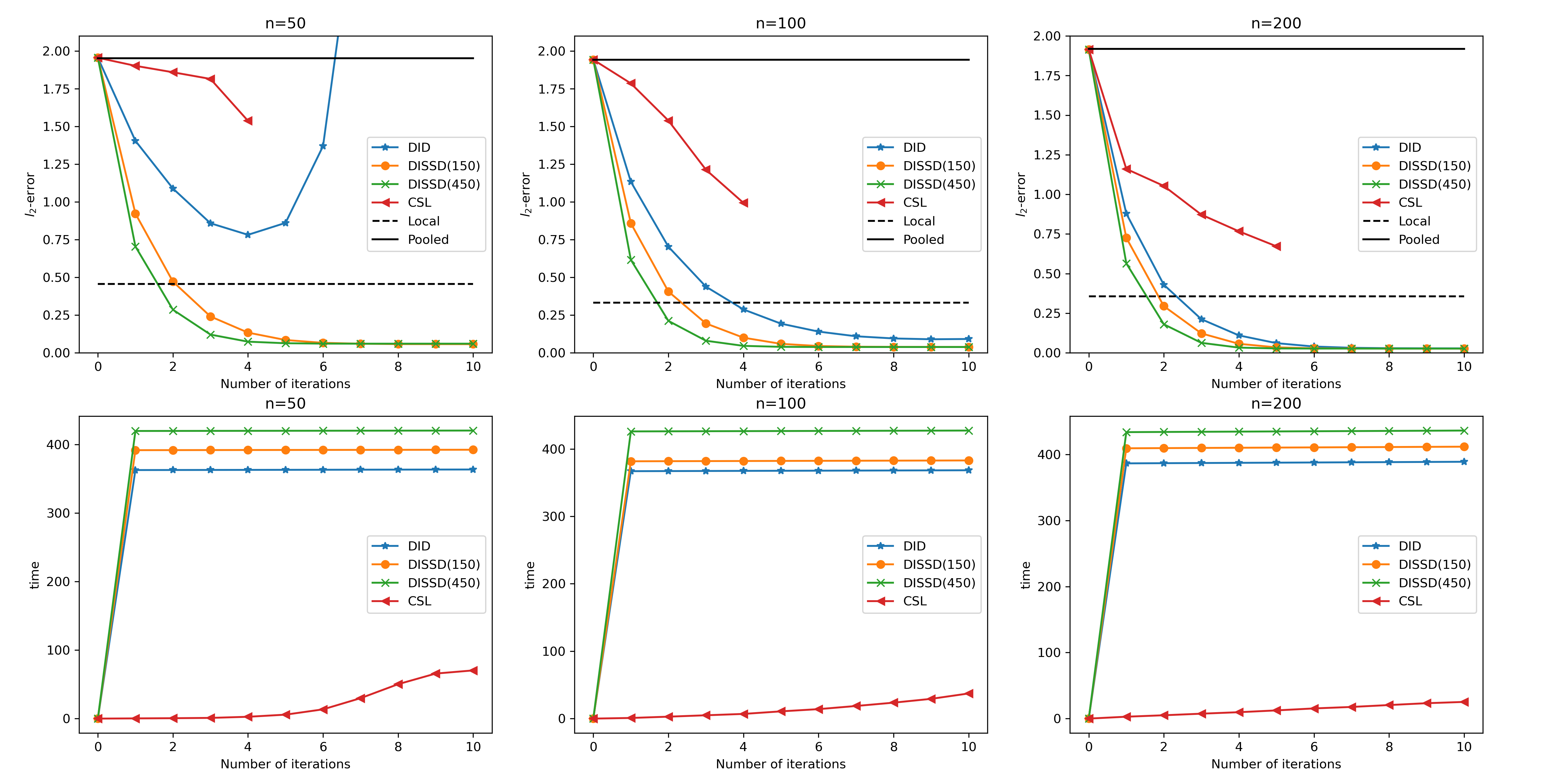}
	\end{center}
	\caption{The $\ell_2$-error (the first row) and the processing time (the second row) over the number of iterations in Huber regression. The local labeled sample size takes value in $\{50,100,200\}$, the local unlabeled sample size takes value in $\{0,150,450\}$, the number of machine is $100$, and the dimension $p$ is $500$}
	\label{fig:huber_iter}
\end{figure}

In Figure \ref{fig:huber_initial}, we investigate the efficacy of our method under three distinct initializations: local estimator, random initialization, and 10-round distributed proximal gradient descent (PGD). The dimension is set at $p=500$, and we vary the local labeled sample size $n$ among $\{50,100,200\}$, while taking the unlabeled sample size $n^*-n=150$. The figure presents the $\ell_2$-error for these methods. The results reveal a clear trend: when the local labeled sample size is relatively large, all three initializations in the DISSD method yield consistent estimators. However, in scenarios where the local labeled sample size is limited, DISSD with random initialization exhibits divergence as the iteration number increases. Notably, the early-stopped distributed proximal gradient descent outperforms other methods, thereby expediting the training process. It is essential to note that while PGD provides superior performance, it necessitates additional communication compared to the local estimator. Consequently, the choice of initialization should consider both statistical accuracy and communication constraints. In cases where communication costs are a concern, employing the local estimator for initialization is a prudent choice. 

\begin{figure}[h]
	\begin{center}
		\includegraphics[width=1\textwidth]{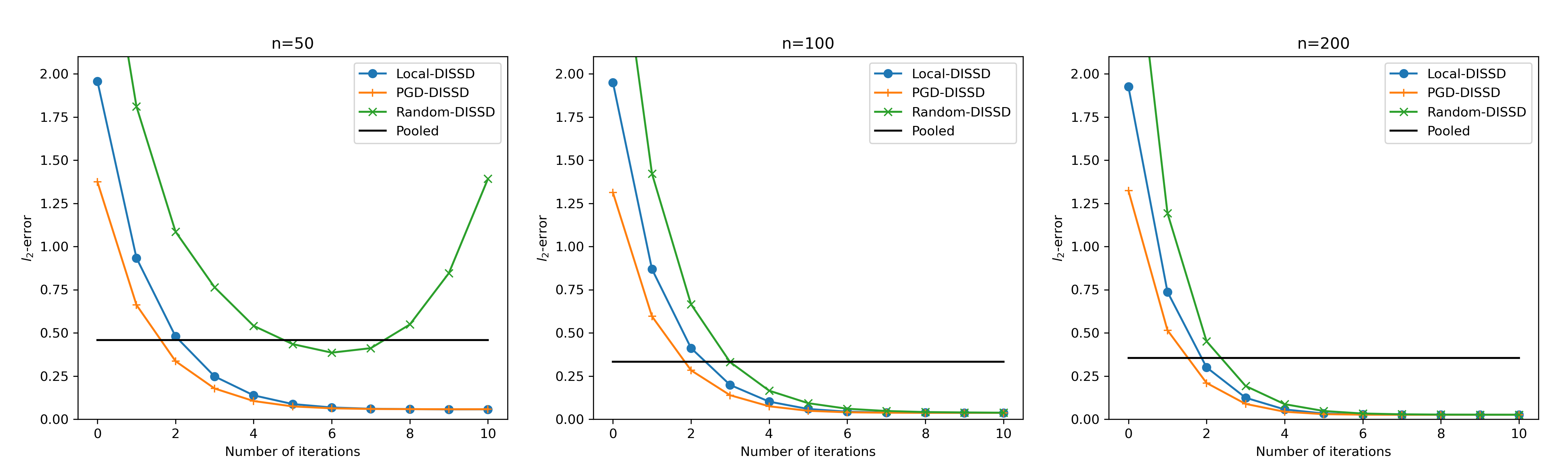}
	\end{center}
	\caption{The $\ell_2$-error over the number of iterations in Huber regression with different initializations. The local labeled sample size takes value in $\{50,100,200\}$, the local unlabeled sample size is $150$, the number of machine is $100$, and the dimension $p$ is $500$}
	\label{fig:huber_initial}
\end{figure}

\subsubsection{Median Regression} 

In the problem of median regression, we generate the noises $\epsilon$ from standard Cauchy distribution $\mathrm{Cauchy}(0, 1)$. We chose the kernel function $\mcK(\cdot)$ as biweight kernel function, which can be formulated as
\begin{equation*}
	\mcK(x)=
	\begin{cases}
		-\frac{315}{64}x^6+\frac{735}{64}x^4-\frac{525}{64}x^2+\frac{105}{64},\quad& \text{if }|x|\leq 1,\\
		0\quad&\text{if }|x|>1.
	\end{cases}
\end{equation*} 
 As for the bandwidth $h_{t}$, we uniformly take $h_t = 0.1$. The results are shown in Table \ref{tab:med_mnstar}, Table \ref{tab:med_ps} and Figure \ref{fig:med_iter}. In Figure \ref{fig:med_iter}, we exclude CSL from the comparison due to its original theory, which does not extend to non-smooth loss functions such as absolute deviation loss. Similar phenomena to the Huber regression models can be observed from the simulation results. In Table \ref{tab:med_mnstar}, we observe that the 1-Step DISSD method exhibits a decreasing $\ell_2$-error with the increase in the unlabeled sample size $n^*-n$, aligning with our theoretical findings in Corollary \ref{cor:betaT_conv}. However, an interesting trend emerges with the 5-step DISSD method when the number of machines $m$ is relatively small. In this scenario, the $\ell_2$-error when $n^*=550$ is slightly larger than that when $n^*=250$. We suspect that this phenomenon arises due to a larger unlabeled sample size introducing a greater bias in the smaller order terms.  

\begin{table}[H]
	\centering
	\small
	\caption{The $\ell_2$-errors and $F_1$-score and their standard errors (in parentheses) of one-step $\dissd$ and five-step $\dissd$,  under labeled local sample size $n=100$. The parameter dimension is $p=500$. Noises are generated from standard Cauchy distribution $\mathrm{Cauchy}(0,1)$ and the loss function is chosen as the absolute deviation loss.
	}\label{tab:med_mnstar}
	\bigskip
\begin{tabular}{c| c|cc| cc  }
\hline
\multirow{2}{*}{$m$}	&\multirow{2}{*}{$n^*$}	&\multicolumn{2}{c|}{ 1-Step DISSD }		&\multicolumn{2}{c}{ 5-Step DISSD }\\
	&	&$\ell_2$-error	&$F_1$-score	&$\ell_2$-error	&$F_1$-score\\ \hline
	&100   &1.357(0.077)	&0.687(0.009)	&3.598(15.238)	&0.685(0.013)	\\ 
20	    &250   &1.241(0.078)	&0.684(0.007)	&1.203(0.816)	&0.684(0.009)	\\
	&550   &1.215(0.115)	&0.681(0.005)	&1.505(0.931)	&0.679(0.005)	\\ \hline
	&100   &1.183(0.090)	&0.738(0.033)	&0.272(0.109)	&0.888(0.070)	\\ 
50	    &250   &0.966(0.085)	&0.721(0.025)	&0.148(0.061)	&0.895(0.055)	\\
	&550   &0.825(0.071)	&0.705(0.017)	&0.153(0.047)	&0.867(0.055)	\\ \hline
	&100   &1.144(0.085)	&0.859(0.053)	&0.201(0.072)	&0.994(0.020)	\\ 
100	    &250   &0.885(0.091)	&0.836(0.055)	&0.070(0.023)	&0.997(0.012)	\\
	&550   &0.671(0.091)	&0.786(0.045)	&0.052(0.015)	&0.996(0.013)\\ \hline
\end{tabular}
\end{table}

\begin{table}[H]
	\centering
	\small
	\caption{The $\ell_2$-errors and $F_1$-score and their standard errors (in parentheses) of one-step $\dissd$ and five-step $\dissd$,  under labeled local sample size $n=100$.The unlabeled sample size is $n^*-n=450$. Noises are generated from standard Cauchy distribution $\mathrm{Cauchy}(0,1)$ and the loss function is chosen as the absolute deviation loss.
	}\label{tab:med_ps}
	\bigskip
\begin{tabular}{c| c|cc| cc  }
\hline
\multirow{2}{*}{$p$}	&\multirow{2}{*}{$s$}	&\multicolumn{2}{c|}{ 1-Step DISSD }		&\multicolumn{2}{c}{ 5-Step DISSD }\\
	&	&$\ell_2$-error	&$F_1$-score	&$\ell_2$-error	&$F_1$-score\\ \hline
	&5	&0.449(0.094)	&0.998(0.016)	&0.033(0.012)	&1.000(0.000)	\\ 
200	    &10 &0.629(0.110)	&0.860(0.057)	&0.049(0.014)	&0.998(0.009)	\\
	&20	&0.875(0.150)	&0.721(0.007)	&0.142(0.021)	&0.834(0.028)	\\ \hline
	&5  &0.459(0.093)	&0.998(0.016)	&0.033(0.012)	&1.000(0.000)	\\ 
400	    &10 &0.651(0.104)	&0.799(0.051)	&0.052(0.016)	&0.998(0.010)	\\
	&20 &0.992(0.136)	&0.695(0.004)	&0.193(0.024)	&0.774(0.018)	\\ \hline
	&5  &0.471(0.090)	&0.996(0.020)	&0.035(0.012)	&1.000(0.000)	\\ 
600	    &10 &0.673(0.095)	&0.771(0.042)	&0.052(0.019)	&0.995(0.016)	\\
	&20	&1.069(0.095)	&0.687(0.002)	&0.248(0.022)	&0.740(0.011)\\ \hline
\end{tabular}
\end{table}

\begin{figure}[H]
	\begin{center}
		\includegraphics[width=1\textwidth]{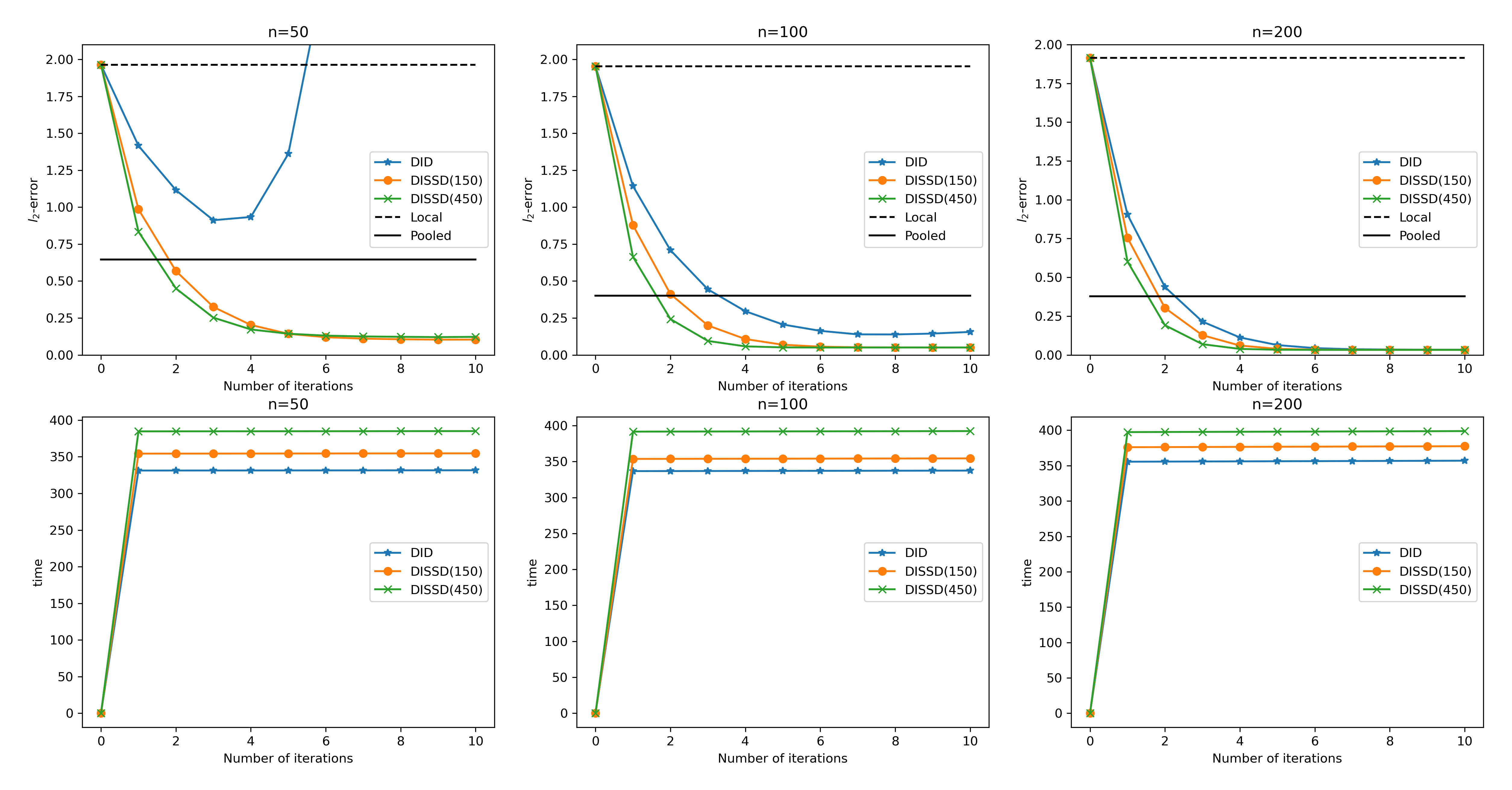}
	\end{center}
	\caption{The $\ell_2$-error (the first row) and the processing time (the second row) over the number of iterations in median regression. The local labeled sample size takes value in $\{50,100,200\}$, the local unlabeled sample size takes value in $\{0,150,450\}$, the number of machines is $100$, and the dimension $p$ is $500$.}
	\label{fig:med_iter}
\end{figure}

\subsubsection{Logistic Regression} 

In the logistic regression model, we generate the labels according to the distribution in \eqref{eq:logit_gen}. The detailed comparison of these approaches is given in Table \ref{tab:logit_mnstar}, Table \ref{tab:logit_ps}, and Figure \ref{fig:logit_iter}.

\begin{table}[h]
	\centering
	\small
	\caption{The $\ell_2$-errors and $F_1$-score and their standard errors (in parentheses) of one-step $\dissd$ and five-step $\dissd$,  under labeled local sample size $n=100$.The parameter dimension is $p=500$. Labels are generated according to \eqref{eq:logit_gen}, and the loss function is chosen as a logistic loss.
	}\label{tab:logit_mnstar}
	\bigskip
\begin{tabular}{c| c|cc| cc  }
\hline
\multirow{2}{*}{$m$}	&\multirow{2}{*}{$n^*$}	&\multicolumn{2}{c|}{ 1-Step DISSD }		&\multicolumn{2}{c}{ 5-Step DISSD }\\
	&	&$\ell_2$-error	&$F_1$-score	&$\ell_2$-error	&$F_1$-score\\ \hline
	&100	&1.416(0.054)	&0.713(0.011)	&0.896(0.300)            &0.694(0.007)	\\ 
20	    &250    &1.249(0.046)	&0.704(0.007)	&1.187(0.140)	        &0.683(0.003)	\\
	&550	&1.146(0.041)	&0.694(0.004)	&2.196(1.234)	    &0.676(0.003)	\\ \hline
	&100    &1.380(0.054)	&0.843(0.049)	&0.401(0.071)	&0.831(0.042)	\\ 
50	    &250    &1.180(0.044)	&0.842(0.041)	&0.349(0.046)	        &0.753(0.020)	\\
	&550    &1.025(0.040)	&0.792(0.028)	&0.515(0.056)	&0.710(0.008)	\\ \hline
	&100    &1.375(0.054)	&0.907(0.049)	&0.354(0.075)	&0.958(0.033)	\\ 
100	    &250    &1.170(0.041)	&0.961(0.035)	&0.162(0.039)	        &0.932(0.042)	\\
	&550	&1.005(0.037)	&0.958(0.035)	&0.189(0.035)	&0.835(0.037)\\ \hline
\end{tabular}
\end{table}

\begin{table}[h]
	\centering
	\small
	\caption{The $\ell_2$-errors and $F_1$-score and their standard errors (in parentheses) of one-step $\dissd$ and five-step $\dissd$,  under labeled local sample size $n=100$.The unlabeled sample size is $n^*-n=450$. Labels are generated according to \eqref{eq:logit_gen}, and the loss function is chosen as a logistic loss.
	}\label{tab:logit_ps}
	\bigskip
\begin{tabular}{c| c|cc| cc  }
\hline
\multirow{2}{*}{$p$}	&\multirow{2}{*}{$s$}	&\multicolumn{2}{c|}{ 1-Step DISSD }		&\multicolumn{2}{c}{ 5-Step DISSD }\\
	&	&$\ell_2$-error	&$F_1$-score	&$\ell_2$-error	&$F_1$-score\\ \hline
	&5	&0.652(0.042)	&0.986(0.037)	&0.057(0.024)	&1.000(0.000)	\\ 
200	    &10 &0.991(0.038)	&0.965(0.031)	&0.149(0.032)	&0.897(0.043)	\\
	&20	&1.559(0.032)	&0.796(0.018)	&0.386(0.034)	&0.731(0.008)	\\ \hline
	&5  &0.678(0.040)	&0.988(0.034)	&0.054(0.021)	&1.000(0.000)	\\ 
400	    &10 &1.010(0.036)	&0.961(0.033)	&0.171(0.031)	&0.856(0.036)	\\
	&20 &1.600(0.030)	&0.753(0.011)	&0.500(0.036)	&0.704(0.003)	\\ \hline
	&5  &0.680(0.040)	&0.984(0.039)	&0.052(0.021)	&1.000(0.000)	\\ 
600	    &10 &1.021(0.037)	&0.951(0.037)	&0.201(0.034)	&0.822(0.038)	\\
	&20	&1.604(0.031)	&0.727(0.009)	&0.617(0.038)	&0.693(0.002)\\ \hline
\end{tabular}
\end{table}

\begin{figure}[h]
	\begin{center}
		\includegraphics[width=1\textwidth]{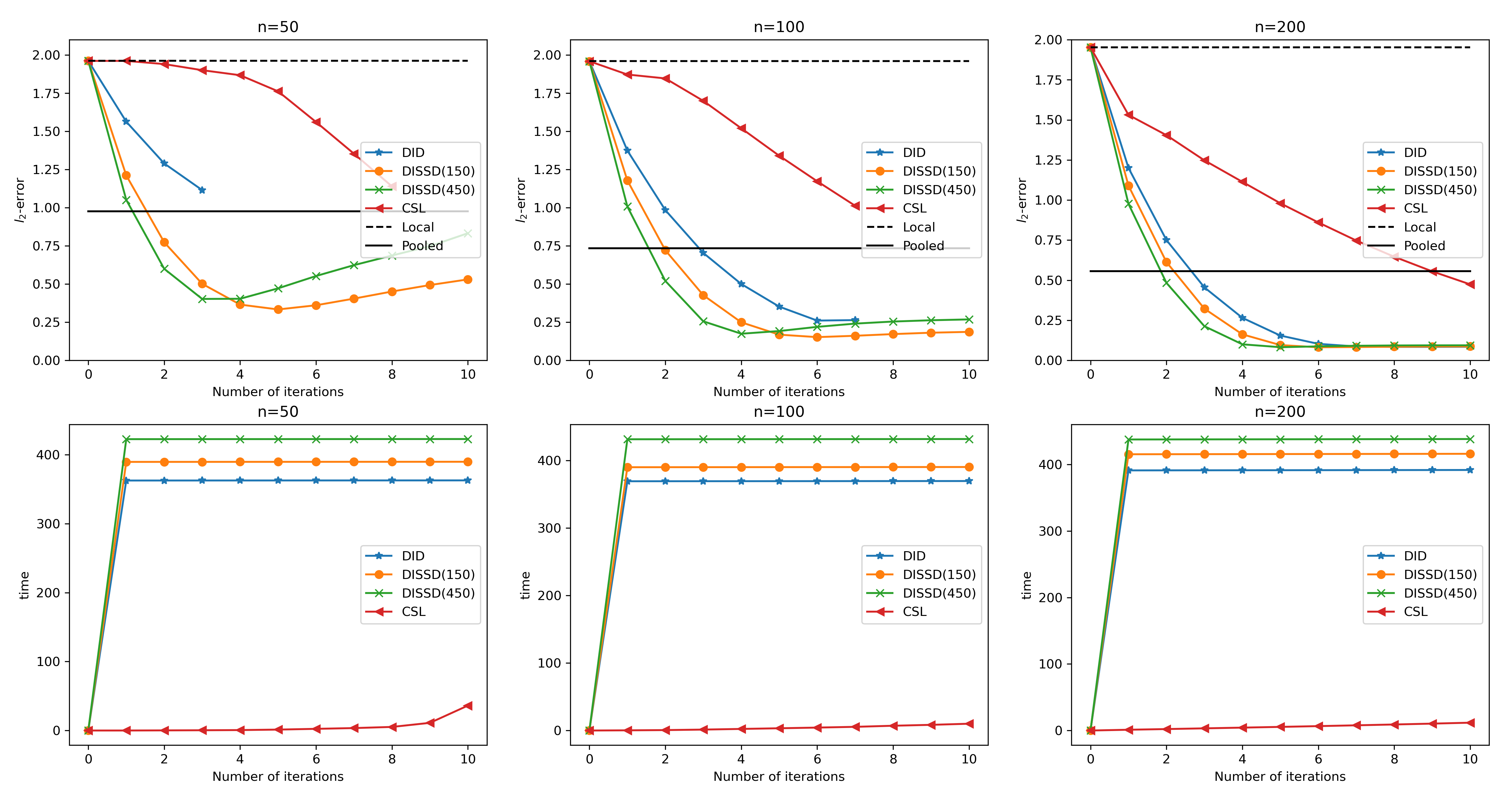}
	\end{center}
	\caption{The $\ell_2$-error (the first row) and the processing time (the second row) over the number of iterations in logistic regression. The local labeled sample size takes value in $\{50,100,200\}$, the local unlabeled sample size takes value in $\{0,150,450\}$, the number of machines is $100$, and the dimension $p$ is $500$.}
	\label{fig:logit_iter}
\end{figure}

We can observe similar trends as in the former simulation results, that the semi-supervised estimator always outperforms the fully supervised counterpart. In Table \ref{tab:logit_mnstar} and Figure \ref{fig:logit_iter}, we find that a larger unlabeled sample size plays a positive role in accelerating the convergence in the early-stage iterations, corroborating the findings in Corollary \ref{cor:glmT_conv}. However, an intriguing observation surfaces when either the local sample size $n$ or the number of machines $m$ is small: a larger unlabeled sample size adversely affects the statistical error in the post-stage iterations. This phenomenon challenges our current theoretical understanding and may necessitate the development of additional theoretical techniques for a comprehensive explanation. 

\subsection{Application to Real-World Benchmarks}

In the study, we analyze the Ames Housing data set\footnote{http://jse.amstat.org/v19n3/decock.pdf}, which was compiled by Dean De Cock for use in data science education. This data set contains all sales that occurred within Ames from 2006 to 2010. The response variable is the house sale price; the covariates include various house features. We aim to learn a sparse least square regression model to predict the house price by applying our proposed methods and to compare the performance in terms of prediction error. After weeding out the outliers and applying principal component analysis, we obtained 2902 observations and 51 features. We can compute that the $\ell_{0.1}$-sparsity of the empirical inverse covariance matrix of the covariate is $2.38$, which suffices Condition \ref{assump:X}.  We randomly partition the data set into 2000 training data and 612 testing data. Among them, we remove the label of 290 data and treat it as the unlabeled covariates. We perform 100 random partitions of the dataset and report the averaged prediction error and processing time on the testing set. We consider three cases where $(m,n)=(200,10),(100,20)$ and $(40,50)$ respectively. In our experimental analysis, we conducted a comparative evaluation of our DISSD method against the CSL method from \cite{jordan_etal.2019}, the CEASE method from \cite{fan_guo_etal.2019}, and the Debias-DC method detailed in \cite{battey2018distributed}. The results are summarized in Figure \ref{fig:realdata_iter}. Notably, the semi-supervised methods consistently outperform their fully-supervised counterparts. Particularly for small sample sizes (specifically $n=10$ and $20$), CSL, CEASE, and Debias-DC all exhibit instability, leading to divergence. Even at a sample size of $50$, these methods are surpassed by our DISSD method in terms of performance. From the computational perspective, both CEASE and Debias-DC demonstrate considerably longer processing times compared to other methods.

\begin{figure}[h]
	\begin{center}
		\includegraphics[width=1\textwidth]{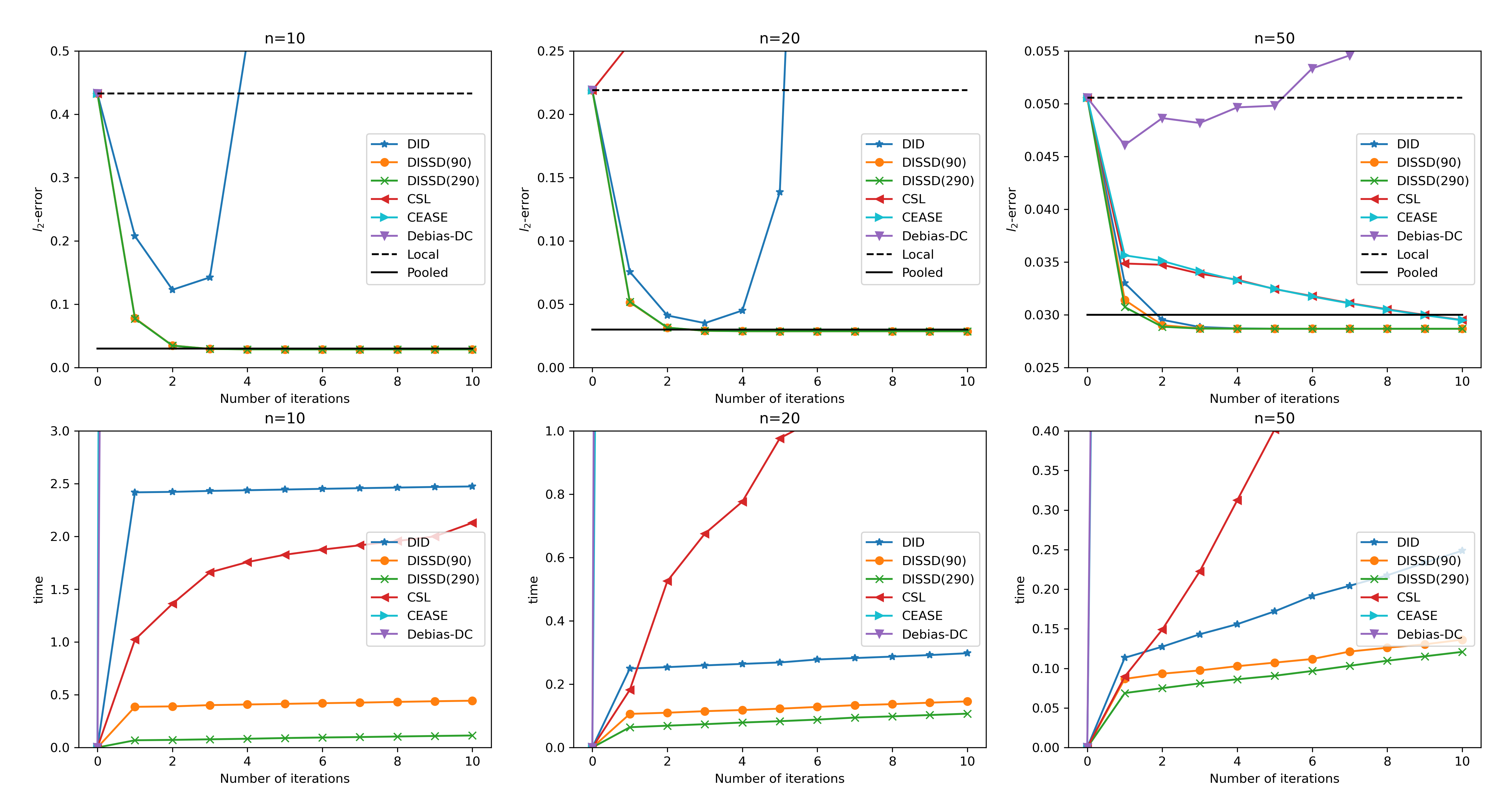}
	\end{center}
	\caption{The prediction error (the first row) and the processing time (the second row) over the number of iterations in linear regression for Ames Housing dataset. The total labeled training sample size is $2000$, the local labeled sample size varies in $\{10,20,50\}$, and the local unlabeled sample size is $\{0,90,290\}$, and the dimension $p$ is $51$.}
	\label{fig:realdata_iter}
\end{figure}


\section{Concluding Remarks and Future Study}\label{sec:conclude}

In this paper, we investigate the problem of semi-supervised sparse learning in a distributed setup and demonstrate that incorporating unlabeled data can enhance the statistical accuracy in distributed learning. Throughout this paper, we construct the inverse covariance matrix $\vect{\Omega}$ using SCIO \cite{liu_luo.2015}, which requires $\vect{\Omega}$ to be sparse in each column. However, there exist other debiasing approaches, such as \cite{javanmard_montanari.2014jmlr, geer_buhlmann_etal.2014, zhang_zhang.2014, cai_cai_guo.2021jrssb}, that can eliminate the sparsity constraint on $\vect{\Omega}$. It would be interesting to explore the application of these approaches to our DISSD method in future work to see if similar results can be achieved. Furthermore, in addition to debiased estimators, other methods for sparse learning exist, such as adding $\ell_1$ penalty \citep{tibshirani1996regression} or non-convex penalties \citep{fan_li.2001, zhang2010jmlr, zhang.2010aos}, or using the iterative hard thresholding method \citep{blumensath_davies.2009, jain_tewari_kar.2014nips, yuan_li_zhang.2018jmlr}. Investigating the use of distributed semi-supervised methods for these techniques is of significant importance.


\bibliographystyle{asa}
\bibliography{dist_semi_debias}

\newpage


\section{Appendix}

The appendix consists of three parts. Appendix \ref{sec:lemma} includes some technical lemmas for the proof. The proof of theorems in Section \ref{sec:mest_theory} in the main paper are given in Appendix \ref{sec:mest_proof}. The proof of theories in Section \ref{sec:glm_theory} is presented in Appendix \ref{sec:glm_proof}. 


\subsection{Technical Lemmas}	\label{sec:lemma}

\begin{lemma}	\label{lemma:cai_liu}
	(Exponential Inequality, Lemma 1 in \cite{cai_liu.2011}) Let $X_1,...,X_n$ be independent random variables all with means zero. Suppose that there exist some constants $\eta>0$ and $B_{n}$ such that $\sum_{i=1}^{n}\mfE (X_{i},\eta)\leq B_{n}^{2}$, where $\mfE(X,\eta)=\mbE[X^2\exp(\eta |X|)]$. Then uniformly for $0<x\leq B_{n}$ and $n\geq 1$, there is
	\begin{equation*}
		\mbP\left\{\sum_{i=1}^nX_i\geq (\eta+\eta^{-1})B_{n}x\right\}\leq\exp\left(-x^2\right).
	\end{equation*}
\end{lemma}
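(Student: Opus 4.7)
The statement is a Bernstein-type exponential inequality under a weighted second-moment condition, so I will follow the standard Chernoff--Markov template and do all the work inside a single moment-generating-function bound.

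The plan is to first apply the exponential Markov inequality: for any $\lambda>0$,
\begin{equation*}
\mbP\Big(\sum_{i=1}^n X_i \geq t\Big) \leq e^{-\lambda t}\prod_{i=1}^n \mbE\big[e^{\lambda X_i}\big],
\end{equation*}
so the whole proof reduces to controlling $\mbE[e^{\lambda X_i}]$ in terms of $\mfE(X_i,\eta)=\mbE[X_i^2 e^{\eta|X_i|}]$. The key lemma I would establish is: for every $0<\lambda\leq\eta$ and every mean-zero $X$,
\begin{equation*}
\mbE\big[e^{\lambda X}\big]\;\leq\;\exp\!\Big(\tfrac{\lambda^2}{2}\,\mfE(X,\eta)\Big).
\end{equation*}

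To prove this auxiliary bound I would expand $e^{\lambda X}-1-\lambda X=\sum_{k\geq 2}\lambda^k X^k/k!$, take expectations (the linear term vanishes since $\mbE X=0$), and exploit the factorial identity $k!\geq 2(k-2)!$ valid for $k\geq 2$. Combined with $\lambda\leq\eta$ and $|X|^k = X^2|X|^{k-2}$, this gives termwise
\begin{equation*}
\frac{\lambda^k|X|^k}{k!}\;\leq\;\frac{\lambda^2 X^2}{2}\cdot\frac{(\lambda|X|)^{k-2}}{(k-2)!}\;\leq\;\frac{\lambda^2 X^2}{2}\cdot\frac{(\eta|X|)^{k-2}}{(k-2)!},
\end{equation*}
so summing over $k\geq 2$ yields $\mbE[e^{\lambda X}]-1\leq \tfrac{\lambda^2}{2}\mbE[X^2 e^{\eta|X|}] = \tfrac{\lambda^2}{2}\mfE(X,\eta)$, and the elementary inequality $1+y\leq e^y$ finishes the auxiliary lemma.

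Applying this to each $X_i$ independently and using the hypothesis $\sum_i \mfE(X_i,\eta)\leq B_n^2$ gives $\mbE[e^{\lambda S_n}]\leq \exp(\lambda^2 B_n^2/2)$, so
\begin{equation*}
\mbP\big(S_n \geq t\big)\;\leq\;\exp\!\Big(\tfrac{\lambda^2 B_n^2}{2}-\lambda t\Big),\qquad 0<\lambda\leq \eta.
\end{equation*}
It remains to plug in $t=(\eta+\eta^{-1})B_n x$ and make a clever choice of $\lambda$. I would pick $\lambda = \eta x/B_n$; the constraint $x\leq B_n$ exactly ensures $\lambda\leq\eta$, and a direct computation gives $\lambda t - \lambda^2 B_n^2/2 = (\eta^2+1)x^2 - \eta^2 x^2/2 = (1+\eta^2/2)x^2\geq x^2$, which yields the claimed bound $\exp(-x^2)$.

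The main obstacle is the MGF estimate of the auxiliary lemma, because one has to recognize the precise combinatorial trick ($k!\geq 2(k-2)!$) that converts a Taylor expansion into the weighted second moment $\mfE(X,\eta)$; without it one would only get a sub-Gaussian MGF bound requiring boundedness or a true exponential moment, not just $\mbE[X^2 e^{\eta|X|}]$. Everything else (Chernoff, choice of $\lambda$, and verifying the constant $\eta+\eta^{-1}$ is large enough) is routine one-line algebra once the MGF bound is in hand.
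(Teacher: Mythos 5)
Your proof is correct: the Chernoff--Markov reduction, the MGF bound $\mbE[e^{\lambda X}]\leq 1+\tfrac{\lambda^2}{2}\mfE(X,\eta)$ obtained from the truncated Taylor expansion with $k!\geq 2(k-2)!$, and the choice $\lambda=\eta x/B_n$ (for which $x\leq B_n$ guarantees $\lambda\leq\eta$) all check out, and the final exponent $-(1+\eta^2/2)x^2\leq -x^2$ is computed correctly. The paper itself gives no proof of this lemma --- it is imported verbatim as Lemma 1 of \cite{cai_liu.2011} --- and your argument is essentially the standard one used in that reference, so there is nothing to compare beyond noting the match.
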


\begin{lemma}	\label{lem:bound_precision}
	(Theorem 1 of \cite{wu_wang_liu.2023aism}) Under Condition \ref{assump:X}, the estimator $\hat{\vect{\Omega}}_{\mcH_1}$ of the inverse covariance matrix $\vect{\Omega}$ satisfies
	\begin{equation*}
		\|\hat{\vect{\Omega}}_{\mcH_1}-\vect{\Omega}\|_{L_1} =O_{\mbP}\left( s\Big(\frac{\log p}{n^*}\Big)^{(1-q)/2}\right).
	\end{equation*}
\end{lemma}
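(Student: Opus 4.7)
The plan is to follow the standard CLIME/SCIO-type analysis, leveraging the column-wise decomposition in (\ref{eq:precision_est}) together with the weak $\ell_q$-sparsity of the rows of $\vect{\Omega}$ from Condition \ref{assump:X}. Since the statement is cited from Theorem 1 of Wu, Wang, Liu (2023), one may in principle invoke that result directly; but a self-contained plan proceeds as follows.

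First, I would establish a concentration bound for the sample covariance matrix in the element-wise max norm. Under the sub-gaussian tail of $\vect{X}$ in Condition \ref{assump:X}, each entry $\hat{\Sigma}_{ij} - \Sigma_{ij}$ is an average of centred sub-exponential variables, so Lemma \ref{lemma:cai_liu} applied entrywise, followed by a union bound over the $p^2$ entries, yields
\[
|\hat{\vect{\Sigma}}_{\mcH_1} - \vect{\Sigma}|_\infty = O_{\mbP}\!\left(\sqrt{\log p/n^*}\right),
\]
which aligns with the choice $\lambda_l \asymp \sqrt{\log p/n^*}$.

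Next, I would analyse each column $\hat{\vect{\omega}}_l$ via the KKT conditions of the convex program (\ref{eq:precision_est}). Since the true column $\vect{\omega}_l$ satisfies $\vect{\Sigma}\vect{\omega}_l = \vect{e}_l$, the bound $\|\vect{\Omega}\|_{L_1}\le C_M$ gives $|\hat{\vect{\Sigma}}_{\mcH_1}\vect{\omega}_l - \vect{e}_l|_\infty \le C_M|\hat{\vect{\Sigma}}_{\mcH_1} - \vect{\Sigma}|_\infty$, so $\vect{\omega}_l$ is almost feasible for the sub-gradient condition at level $\lambda_l$. A basic-inequality argument combining this with the eigenvalue bound $\Lambda_{\min}(\vect{\Sigma})\ge \delta_X$ (which transfers to $\hat{\vect{\Sigma}}_{\mcH_1}$ on a suitable restricted cone with high probability) produces a preliminary $\ell_1$ estimate on $\hat{\vect{\omega}}_l - \vect{\omega}_l$. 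To convert this into the final rate, I would split the index set into "large" coordinates $\{i : |\omega_{l,i}| > \lambda_l\}$ and "small" ones, controlling the small part directly by the $\ell_q$-sparsity via $\sum_{|\omega_{l,i}|\le \lambda_l}|\omega_{l,i}| \le \lambda_l^{1-q}\sum_i |\omega_{l,i}|^q \le C_q s\,\lambda_l^{1-q}$; this produces the stated $s(\log p/n^*)^{(1-q)/2}$ rate for $|\hat{\vect{\omega}}_l - \vect{\omega}_l|_1$. Taking a maximum (rather than a sum, since the $L_1$-operator norm is a column-max) over $l = 1,\dots,p$ and absorbing the extra $\log p$ into the earlier union bound completes the proof.

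The main obstacle is the weak-sparsity step: when $q=0$ the argument collapses to a textbook Lasso-type bound using restricted strong convexity, but for $0 < q < 1$ one must carefully partition by magnitude and trade the cardinality of the "large" block against the $\ell_1$ mass of the "small" block; I would follow the device used in Cai--Liu--Luo (2011) to carry this out, also verifying that the restricted eigenvalue / compatibility property needed to invert the basic inequality survives on the effective support after thresholding.
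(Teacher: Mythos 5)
This lemma is not proved in the paper at all: it is imported verbatim as Theorem~1 of the cited reference (Wu, Wang and Liu, 2023), so there is no in-paper argument to compare against. Your reconstruction is the standard and, as far as I can tell, correct route to that external result: entrywise sub-exponential concentration of $\hat{\vect{\Sigma}}_{\mcH_1}$ via Lemma \ref{lemma:cai_liu} plus a union bound, near-feasibility of the true column $\vect{\omega}_l$ for the program \eqref{eq:precision_est} using $\|\vect{\Omega}\|_{L_1}\le C_M$, a basic inequality and cone condition at $\lambda_l\asymp\sqrt{\log p/n^*}$, and then the magnitude-thresholding device for the $\ell_q$ ball, which trades the cardinality bound $|\{i:|\omega_{l,i}|>\lambda_l\}|\le C_q s\lambda_l^{-q}$ against the tail mass $\sum_{|\omega_{l,i}|\le\lambda_l}|\omega_{l,i}|\le C_q s\lambda_l^{1-q}$ to produce $|\hat{\vect{\omega}}_l-\vect{\omega}_l|_1=O_{\mbP}(s\lambda_l^{1-q})$, and finally a column-max. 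A vestigial proof sketch left in the source of the paper follows the same basic-inequality-plus-cone skeleton but only treats the exactly sparse case and lands on the $q=0$ rate $s\sqrt{\log p/n^*}$; your handling of general $0\le q<1$ is the part that actually matches the stated rate. The two points you should make explicit when writing this out are (i) that the restricted-eigenvalue/compatibility condition must hold for $\hat{\vect{\Sigma}}_{\mcH_1}$ over the cone anchored at the \emph{effective} support of size $\asymp s\lambda_l^{-q}$, which requires $n^*\gtrsim s\lambda_l^{-q}\log p$ and is guaranteed by the paper's standing constraint $s=o((n^*/\log p)^{(1-q)/3})$; and (ii) that the union bound over the $p$ columns is absorbed by taking the deviation level in the entrywise concentration proportional to $\sqrt{\gamma\log p/n^*}$ for $\gamma$ large, exactly as you indicate. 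With those caveats the plan is sound.
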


\subsection{Proof of Theories in Section \ref{sec:mest_theory}}	\label{sec:mest_proof}

\begin{lemma}	\label{lem:bound_H0}
	Under Condition \ref{assump:X} to \ref{assump:kernel}, there is
	\begin{equation}	\label{eq:H_bound_nonsmooth}
		|\hat{H}^{(0)}-h'(0)|=O_{\mbP}\left(\sqrt{\frac{s\log p}{mnh_1}}+r_n\right).
	\end{equation}
\end{lemma}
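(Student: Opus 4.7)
\textbf{Proof Proposal for Lemma \ref{lem:bound_H0}.} Setting $N=mn$, the natural first step is to split the target error along the two components of $h'(0)$:
\begin{align*}
\hat{H}^{(0)}-h'(0)=\underbrace{\frac{1}{N}\sum_{i=1}^{N}\bigl[f''(Y_i-\vect{X}_i^{\tp}\hat{\vect{\beta}}^{(0)})-\mbE f''(\epsilon)\bigr]}_{\text{(I)}}+\sum_{k=1}^K\Delta_k\underbrace{\Bigl[\tfrac{1}{Nh_1}\sum_{i=1}^{N}\mcK\bigl(\tfrac{Y_i-\vect{X}_i^{\tp}\hat{\vect{\beta}}^{(0)}-x_k}{h_1}\bigr)-\rho(x_k)\Bigr]}_{\text{(II}_k\text{)}}.
\end{align*}
Since $K$ is a fixed constant, it suffices to control (I) and each (II$_k$) separately, and add the contributions.

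For term (I), I would further split it as $N^{-1}\sum_i[f''(Y_i-\vect{X}_i^{\tp}\hat{\vect{\beta}}^{(0)})-f''(\epsilon_i)]+N^{-1}\sum_i[f''(\epsilon_i)-\mbE f''(\epsilon)]$. The second piece is $O_{\mbP}(1/\sqrt{N})$ by the exponential inequality in Lemma \ref{lemma:cai_liu}. For the first piece, Condition A2 provides both the expectation bound $\mbE|f''(Y-\vect{X}^{\tp}\hat{\vect{\beta}}^{(0)})-f''(\epsilon)|\lesssim |\hat{\vect{\beta}}^{(0)}-\vect{\beta}^*|_2=O(r_n)$ and the controlled sub-exponential moment $\mfE(\cdot,\kappa_f)\lesssim r_n$, so Lemma \ref{lemma:cai_liu} applied to the centered version delivers a deviation of order $r_n\sqrt{\log p/N}$, dominated by $r_n$. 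Together, (I) is $O_{\mbP}(r_n+1/\sqrt{N})=O_{\mbP}(r_n)$.

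For each term (II$_k$), I would decompose as
\begin{align*}
\text{(II}_k\text{)}&=\bigl[\tfrac{1}{h_1}\mbE\mcK(\tfrac{\epsilon-x_k}{h_1})-\rho(x_k)\bigr]+\bigl[\tfrac{1}{h_1}\mbE\mcK(\tfrac{Y-\vect{X}^{\tp}\hat{\vect{\beta}}^{(0)}-x_k}{h_1})-\tfrac{1}{h_1}\mbE\mcK(\tfrac{\epsilon-x_k}{h_1})\bigr]\\
&\quad+\bigl[\tfrac{1}{Nh_1}\sum_i\mcK(\tfrac{Y_i-\vect{X}_i^{\tp}\hat{\vect{\beta}}^{(0)}-x_k}{h_1})-\tfrac{1}{h_1}\mbE\mcK(\tfrac{Y-\vect{X}^{\tp}\hat{\vect{\beta}}^{(0)}-x_k}{h_1})\bigr].
\end{align*}
A change of variables combined with the Lipschitz bound on $\rho$ from Condition A3 controls the kernel bias by $O(h_1)=O(r_n)$, and the same Lipschitz bound together with the sub-Gaussian tails of $\vect{X}^{\tp}(\hat{\vect{\beta}}^{(0)}-\vect{\beta}^*)$ gives a bound of $O(|\hat{\vect{\beta}}^{(0)}-\vect{\beta}^*|_2)=O(r_n)$ for the second bracket.

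The main obstacle is the last bracket, which requires a uniform deviation bound over $\hat{\vect{\beta}}^{(0)}$ in the random sparse set $\{\vect{\beta}:|\vect{\beta}-\vect{\beta}^*|_2\lesssim r_n,|\vect{\beta}|_0\leq C_rs\}$. I would handle this by an $\epsilon$-net argument over this set: the $L_1$ Lipschitz property of $\mcK$ from Condition A4 reduces the supremum to a finite maximum over a net whose log-cardinality is $O(s\log p)$, and each summand is bounded by $\|\mcK\|_\infty/h_1$ with variance $O(1/h_1)$ (since the density of $Y-\vect{X}^{\tp}\vect{\beta}$ is bounded under A1 and A3). Applying Lemma \ref{lemma:cai_liu} and a union bound over the net then yields the deviation bound $O_{\mbP}(\sqrt{s\log p/(Nh_1)})$. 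Choosing $h_1$ small enough so that the discretization error is negligible (one can take a $(r_n/\sqrt{N})$-net so the Lipschitz remainder is lower order), and summing the three pieces, gives the claimed rate $O_{\mbP}(\sqrt{s\log p/(mnh_1)}+r_n)$.
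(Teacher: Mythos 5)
Your proposal is correct and follows essentially the same route as the paper: a bias-plus-deviation decomposition, an $\epsilon$-net over the $C_rs$-sparse ball of radius $r_n$ with log-cardinality $O(s\log p)$, the Lipschitz properties of $\rho$, $\mbE\{f''\}$ and $\mcK$ to control the bias and discretization terms, and Lemma \ref{lemma:cai_liu} with the variance bound $O(1/h_1)$ for the kernel deviation, yielding $\sqrt{s\log p/(mnh_1)}+r_n$. The only differences are cosmetic: you treat the $f''$ and kernel components separately while the paper bundles them into a single functional $D_{h_1}(\vect{\beta})$ and runs one net argument, and your suggested mesh $r_n/\sqrt{N}$ should be taken polynomially finer (the paper uses $r_nn^{-M}$ with $M$ large), since the kernel term's Lipschitz constant in $\vect{\beta}$ is of order $\sqrt{s}/h_1^2$ and the coarser mesh does not make the remainder negligible in general.
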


\begin{proof}
	For simplicity, we assume that $f'(\cdot)$ only has one discontinuous point $x_1$, \ie, $K=1$. Define $\mcS=\big\{S_q\subseteq\{1,...,p\}\big| |S_q|_0=C_rs \big\}$ as the collection of index sets having $C_rs$ elements. Then we know that $|\mcS|=C_p^{C_rs}\leq p^{C_rs}$. For each $S_q\in\mcS$, we denote the parameter sets $\Theta_q=\{\vect{\beta}: \supp(\vect{\beta})\subseteq S_q,\; |\vect{\beta}-\vect{\beta}^*|_2\leq r_n\}$, and construct the following random variable
	\begin{equation*}
		D_{h}(\vect{\beta})=\frac{1}{mn}\sum_{j=1}^m\sum_{i\in\mcH_j}f''\big(Y_i-\vect{X}^{\tp}_{i}\vect{\beta}\big)+\frac{\Delta_1}{mnh_1}\sum_{j=1}^m\sum_{i\in\mcH_j}\mcK\Big(\frac{Y_i-\vect{X}_i^{\tp}\vect{\beta}-x_1}{h_1}\Big).
	\end{equation*} 
	We construct $\mfN_q$, an $r_nn^{-M}$-net of the set $\Theta_q$. By definition we know that $\Theta_q$ is a ball in $\mbR^{C_rs}$ with radius $r_n$. By Lemma 5.2 of \cite{vershynin.2010}, we have $\card(\mfN_q)\leq (1+2n^M)^s$. From Lipschitz continuity of $\mcK'(\cdot)$ in Assumption \ref{assump:kernel}, we have
	\begin{align*}
		&\max_{\tilde{\vect{\beta}}\in\mfN_q}\sup_{\tilde{\vect{\beta}}:|\tilde{\vect{\beta}}-\vect{\beta}|_2\leq r_nn^{-M}}\left|D_{h}(\vect{\beta})-D_{h}(\tilde{\vect{\beta}})\right|\\
		\leq& \max_{\tilde{\vect{\beta}}\in\mfN_q}\sup_{\vect{\beta}:|\tilde{\vect{\beta}}-\vect{\beta}|_2\leq r_nn^{-M}}\frac{1}{mn}\sum_{j=1}^m\sum_{i\in\mcH_j}\Big|f''\big(Y_i-\vect{X}^{\tp}_{i}\vect{\beta}\big)-f''\big(Y_i-\vect{X}^{\tp}_{i}\tilde{\vect{\beta}}\big)\Big|\\
		&+\frac{C_kr_n}{mn^{M+1}h_1^2}\sum_{j=1}^m\sum_{i\in\mcH_j}|\vect{X}_{i,S_q}|_2.
	\end{align*}
	From Assumption \ref{assump:X}, we can show that
	\begin{equation*}
		\frac{C_kr_n}{n^{M+1}h_1^2}\sum_{j=1}^m\sum_{i\in\mcH_j}|\vect{X}_{i,S_k}|_2=O_{\mbP}(n^{-2}),
	\end{equation*}
	by letting $M$ large enough. Moreover, from Assumption \ref{assump:fpp_mfE}, we can apply Lemma \ref{lemma:cai_liu} to the i.i.d. random variables
	\begin{equation*}
		\sup_{\vect{\beta}:|\tilde{\vect{\beta}}-\vect{\beta}|_2\leq r_nn^{-M}}\Big|f''\big(Y_i-\vect{X}^{\tp}_{i}\vect{\beta}\big)-f''\big(Y_i-\vect{X}^{\tp}_{i}\tilde{\vect{\beta}}\big)\Big|,
	\end{equation*}
	and prove that, for any $\gamma>0$, there exists a constant $C_1$ such that
	\begin{equation*}
		\mbP\left\{\frac{1}{mn}\sum_{j=1}^m\sum_{i\in\mcH_j}\sup_{\vect{\beta}:|\tilde{\vect{\beta}}-\vect{\beta}|_2\leq r_nn^{-M}}\Big|f''\big(Y_i-\vect{X}^{\tp}_{i}\vect{\beta}\big)-f''\big(Y_i-\vect{X}^{\tp}_{i}\tilde{\vect{\beta}}\big)\Big|\geq C_1\frac{s\log p}{mn}\right\}=O(p^{-\gamma s}).
	\end{equation*}
	Therefore we have
	\begin{equation}	\label{eq:hath_lemma_term1}
	\begin{aligned}
		\max_{\tilde{\vect{\beta}}\in\mfN_q}\sup_{\tilde{\vect{\beta}}:|\tilde{\vect{\beta}}-\vect{\beta}|_2\leq r_nn^{-M}}\left|D_{h_1}(\vect{\beta})-D_{h_1}(\tilde{\vect{\beta}})\right|=O_{\mbP}\Big(\frac{s\log p}{mn}\Big).
	\end{aligned}
	\end{equation}
	Moreover, by Lipschitz continuity of $\mbE\{f''(\cdot)\}$ and $\rho(\cdot)$ in Assumption \ref{assump:fpp_mfE} and \ref{assump:lip_pdf},  
	for every $\tilde{\vect{\beta}}\in\mfN_0$, we can compute that
	\begin{align*}
		\mbE\{D_{h_1}(\tilde{\vect{\beta}})\}=&\frac{1}{mn}\mbE\Big[\sum_{i\in\mcH_j}f''\big(Y_i-\vect{X}^{\tp}_{i}\tilde{\vect{\beta}}\big)+\int_{-\infty}^{\infty}\mcK(x)\sum_{j=1}^m\sum_{i\in\mcH_j}\rho\{x_1+h_1x+\vect{X}^{\tp}_i(\tilde{\vect{\beta}}-\vect{\beta}^*)\}\diff x\Big]\\
		=&\mbE\{f''(Y-\vect{X}^{\tp}\vect{\beta}^*)\}+\rho(x_1)+O\Big(h_1+\frac{1}{mn}\sum_{j=1}^m\sum_{i\in\mcH_j}\mbE\big|\vect{X}_i^{\tp}(\tilde{\vect{\beta}}-\vect{\beta}^*)\big|\Big).
	\end{align*}
	From Assumption \ref{assump:X}, we have
	\begin{equation}	\label{eq:hath_lemma_term2}
		\sup_{\tilde{\vect{\beta}}\in\mfN_0}\left|\mbE\{D_{h_1}(\tilde{\vect{\beta}})\}-h'(0)\right|=O(h_1+r_n).
	\end{equation}
	On the other hand, we define
	\begin{equation*}
		\xi_i(\vect{\beta})=f''(Y_i-\vect{X}^{\tp}_i\vect{\beta})+\frac{1}{h_1}\mcK\left(\frac{Y_i-\vect{X}^{\tp}_{i}\vect{\beta}-x_1}{h_1}\right).
	\end{equation*}
	Compute that
	\begin{align*}
		\mbE\big|f''(Y_i-\vect{X}_i^{\tp}\vect{\beta})\big|^2\asymp1,&\\
		\mbE\Big\{\frac{1}{h_1^2}\mcK^2\Big(\frac{Y_i-\vect{X}^{\tp}_{i}\vect{\beta}-x_1}{h_1}\Big)\Big\}&=\frac{1}{h_1}\mbE\Big[\int_{-\infty}^{\infty}\{\mcK(x)\}^2f(x_1+h_1x+\vect{X}_i^{\tp}(\vect{\beta}-\vect{\beta}^*))\diff x\Big]\asymp \frac{1}{h_1}.
	\end{align*}
	Therefore we have $\mbE\big|\xi_i^2(\vect{\beta})\big|\asymp\frac{1}{h_1}$.
	
	Applying Lemma \ref{lemma:cai_liu}, for any $\gamma>0$, we have that there exists a constant $C_2$ such that
	\begin{align*}
		\sup_{\tilde{\vect{\beta}}\in\mfN_q}\mbP\left[\Big|D_{h_1}(\tilde{\vect{\beta}})-\mbE\big\{D_{h_1}(\tilde{\vect{\beta}})\big\}\Big|\geq C_2\sqrt{\frac{s\log p}{h_1mn}}\right]=O(p^{-\gamma s}).
	\end{align*}
	By letting $\gamma>M$, together with \eqref{eq:hath_lemma_term1} and \eqref{eq:hath_lemma_term2}, we have that
	\begin{equation}	\label{eq:hath_lemma_term3}
		\sup_{\vect{\beta}\in\Theta}\left|D_{h_1}(\vect{\beta})-H(0)\right|=O_{\mbP}\Big(\sqrt{\frac{s\log p}{mnh_1}}+r_n\Big).
	\end{equation}
	which proves the desired result. For the case of $K>1$, one can obtain the same rate since the additional $\mcK(\cdot)$ terms can be discussed by the same argument.
\end{proof}

It is worthwhile noting that, when the loss function $\mcL(x)$ is smooth, there is no $\mcK(\cdot)$ term in $\hat{H}^{(0)}$. Follow the same procedure we can prove that
\begin{equation}	\label{eq:hath_smooth}
	\big|\hat{H}^{(0)}-h'(0)\big|=O_{\mbP}\Big(\sqrt{\frac{s\log n}{mn}}+r_n\Big).
\end{equation}

\begin{lemma}	\label{lem:bound_Ub}
	Assume Condition \ref{assump:X} and \ref{assump:fpp_mfE} Hold. Define
	\begin{equation*}
		\vect{U}(\hat{\vect{\beta}}^{(0)}) = \frac{1}{m}\sum_{j=1}^m\nabla\mcL_j(\hat{\vect{\beta}}^{(0)} ) - \frac{1}{m}\sum_{j=1}^m\nabla\mcL_j(\vect{\beta}^{*} )-\mbE\Big[\vect{X}f'(\vect{X}^{\tp}\hat{\vect{\beta}}^{(0)}-Y)-\vect{X}f'(\vect{X}^{\tp}\vect{\beta}^{*}-Y)\Big].
	\end{equation*} 
	\begin{itemize}
		\item[(i)] If Condition \ref{assump:smooth} holds, then there is
			\begin{equation*}
				|\vect{U}(\hat{\vect{\beta}}^{(0)})|_{\infty} = O_{\mbP}\left(r_n\sqrt{\frac{s\log p}{mn}}\right).
			\end{equation*}
		\item[(ii)] If Condition \ref{assump:non-smooth} holds, then there is
			\begin{equation*}
				|\vect{U}(\hat{\vect{\beta}}^{(0)})|_{\infty} = O_{\mbP}\left(\sqrt{\frac{r_ns\log p}{mn}}\right).
			\end{equation*}
	\end{itemize}
\end{lemma}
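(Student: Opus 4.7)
\textbf{Proof proposal for Lemma \ref{lem:bound_Ub}.}

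The plan is to reduce the bound on the (random) vector $\vect{U}(\hat{\vect{\beta}}^{(0)})$ to a uniform deviation bound for the empirical process
$W_l(\vect{\beta}):=\tfrac{1}{mn}\sum_{j=1}^m\sum_{i\in\mcD_j}\bigl\{X_{i,l}[f'(\vect{X}_i^{\tp}\vect{\beta}-Y_i)-f'(\vect{X}_i^{\tp}\vect{\beta}^{*}-Y_i)]-\mbE[\,\cdot\,]\bigr\}$
over $\vect{\beta}$ lying in a sparse $\ell_2$-ball around $\vect{\beta}^{*}$. Since $\vect{\beta}^{*}$ is $s$-sparse and $|\hat{\vect{\beta}}^{(0)}|_0\le C_rs$ with $|\hat{\vect{\beta}}^{(0)}-\vect{\beta}^{*}|_2=O(r_n)$, the difference is supported on at most $(C_r+1)s$ coordinates. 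I mirror the structure of the proof of Lemma \ref{lem:bound_H0}: enumerate the $\binom{p}{(C_r+1)s}\le p^{(C_r+1)s}$ candidate supports $S_q$, let $\Theta_q=\{\vect{\beta}:\supp(\vect{\beta})\subseteq S_q,\;|\vect{\beta}-\vect{\beta}^{*}|_2\le r_n\}$, and build an $r_n n^{-M}$-net $\mfN_q$ of $\Theta_q$ whose cardinality is at most $(1+2n^M)^{(C_r+1)s}$.

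At each net point $\tilde{\vect{\beta}}\in\mfN_q$ and each coordinate $l$, I apply the exponential inequality of Lemma \ref{lemma:cai_liu} to the i.i.d.\ centered summands $\xi_i^{(l)}(\tilde{\vect{\beta}})$. For the smooth case, Condition \ref{assump:smooth} yields $|\xi_i^{(l)}(\tilde{\vect{\beta}})|\lesssim|X_{i,l}|\cdot|\vect{X}_i^{\tp}(\tilde{\vect{\beta}}-\vect{\beta}^{*})|$, and by the sub-Gaussian assumption on $\vect{X}$ (Condition \ref{assump:X}) one has $\mfE(\xi_i^{(l)}(\tilde{\vect{\beta}}),\eta)\lesssim|\tilde{\vect{\beta}}-\vect{\beta}^{*}|_2^{\,2}\lesssim r_n^{\,2}$, so $B_{mn}\asymp r_n\sqrt{mn}$ in Lemma \ref{lemma:cai_liu}. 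This gives a tail of order $\exp(-cs\log p)$ for deviations of size $r_n\sqrt{s\log p/(mn)}$. For the non-smooth case, Condition \ref{assump:non-smooth} is invoked directly: the $\mfE$-bound on the increment is linear in $|\tilde{\vect{\beta}}-\vect{\beta}^{*}|_2$, so $B_{mn}^{\,2}\asymp mn\cdot r_n$, yielding the weaker rate $\sqrt{r_n s\log p/(mn)}$.

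A union bound over $l\in\{1,\dots,p\}$, over all supports $S_q$, and over $\mfN_q$ contributes a combinatorial factor of order $e^{C s\log p}$, which is absorbed by the exponential tail upon taking the constant in the Cai-Liu deviation large enough. The discretization error (sup over $\Theta_q$ minus sup over $\mfN_q$) is controlled by the \emph{uniform} envelope form of Condition \ref{assump:fpp_mfE} in the smooth case (via $\mfE$ of the supremum of $|f''(Y-\vect{X}^{\tp}\vect{\beta}_1)-f''(Y-\vect{X}^{\tp}\vect{\beta})|$ over a small ball, combined with a mean-value step) and by the supremum form of Condition \ref{assump:non-smooth} in the non-smooth case. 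Choosing the net scale $r_n n^{-M}$ with $M$ large enough makes this residual negligible relative to the target rates, exactly as in the proof of Lemma \ref{lem:bound_H0}.

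The main obstacle is the non-smooth setting: since $f'$ is allowed to have jumps, the obvious pointwise Lipschitz bound fails and the second-moment scaling degrades from $r_n^{\,2}$ to $r_n$. The saving grace is that Condition \ref{assump:non-smooth} is stated directly for the \emph{supremum} of $|\vect{v}^{\tp}\vect{X}\{f'(Y-\vect{X}^{\tp}\vect{\beta}_1)-f'(Y-\vect{X}^{\tp}\vect{\beta})\}|$ over a small ball, so the envelope needed for the chaining argument automatically satisfies the required exponential moment bound. Verifying that this envelope can be plugged into Lemma \ref{lemma:cai_liu} with $B_{mn}^{\,2}\asymp mn\,r_n$ (rather than $mn\,r_n^{\,2}$) is the crux, and it yields precisely the $\sqrt{r_n s\log p/(mn)}$ rate claimed in part (ii).
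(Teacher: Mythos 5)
Your proposal is correct and follows essentially the same route as the paper's proof: enumerate the $O(p^{Cs})$ candidate sparse supports, discretize each $\ell_2$-ball $\Theta_q$ with an $r_n n^{-M}$-net, apply the Cai--Liu exponential inequality (Lemma \ref{lemma:cai_liu}) coordinatewise at the net points with second-moment scaling $r_n^2$ under Condition \ref{assump:smooth} versus $r_n$ under Condition \ref{assump:non-smooth}, and control the discretization residual via the supremum-envelope forms of the conditions exactly as in Lemma \ref{lem:bound_H0}. The variance-scaling distinction you identify as the crux is precisely what drives the $r_n\sqrt{s\log p/(mn)}$ versus $\sqrt{r_n s\log p/(mn)}$ dichotomy in the paper.
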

	
\begin{proof}
	Firstly we consider the case that Condition \ref{assump:non-smooth} holds true. Define $\mcS=\big\{S_q\subseteq\{1,...,p\}\big| |S_q|_0=C_rs \big\}$ as the collection of index sets having $C_rs$ elements. Then we know that $|\mcS|=C_p^{C_rs}\leq p^{C_rs}$. For each $S_q\in\mcS$, we denote the parameter sets $\Theta_q=\{\vect{\beta}: \supp(\vect{\beta})\subseteq S_q,\; |\vect{\beta}-\vect{\beta}^*|_2\leq r_n\}$. For each $q$, we construct $\mfN_q$, an $r_nn^{-M}$-net of the set $\Theta_q$. By definition we know that $\Theta_q$ is a ball in $\mbR^{C_rs}$ with radius $r_n$. By Lemma 5.2 of \cite{vershynin.2010}, we have $\card(\mfN_q)\leq (1+2n^M)^s$. Then we have
	\begin{align*}
		&\sup_{\vect{\beta}\in\Theta_q}\Big|\vect{U}(\vect{\beta})\Big|_{\infty}-\sup_{\tilde{\vect{\beta}}\in\mfN_q}\Big|\vect{U}(\tilde{\vect{\beta}})\Big|_{\infty}\\
		\leq&\max_{\tilde{\vect{\beta}}\in\mfN_q}\sup_{\tilde{\vect{\beta}}:|\vect{\beta}-\tilde{\vect{\beta}}|_2\leq r_nn^{-M}}\Big|\vect{U}(\vect{\beta})-\vect{U}(\tilde{\vect{\beta}})\Big|_{\infty}\\
		\leq&\max_{1\leq l\leq p}\max_{\tilde{\vect{\beta}}\in\mfN_q}\frac{1}{mn}\sum_{i=1}^m\sup_{\tilde{\vect{\beta}}:|\vect{\beta}-\tilde{\vect{\beta}}|_2\leq r_nn^{-M}}\Big|X_{i,l}f'(Y_i-\vect{X}_i^{\tp}\vect{\beta})-\mbE\big\{X_{i,l}f'(Y_i-\vect{X}_i^{\tp}\vect{\beta})\big\}\\
		&-X_{i,l}f'(Y_i-\vect{X}_{i}^{\tp}\tilde{\vect{\beta}})+\mbE\big\{X_{i,l}f'(Y_i-\vect{X}_i^{\tp}\tilde{\vect{\beta}})\big\}\Big|.
	\end{align*}
	For simplicity we denote
	\begin{align*}
		V_{i,l}\big(\tilde{\vect{\beta}}\big)=&\sup_{\vect{\beta}:|\vect{\beta}-\tilde{\vect{\beta}}|_2\leq r_nn^{-M}}\big|X_{i,l}f'(Y_i-\vect{X}_i^{\tp}\vect{\beta})-\mbE\big\{X_{i,l}f'(Y_i-\vect{X}_i^{\tp}\vect{\beta})\big\}\\
		&-X_{i,l}f'(Y_i-\vect{X}_{i}^{\tp}\tilde{\vect{\beta}})+\mbE\big\{X_{i,l}f'(Y_i-\vect{X}_i^{\tp}\tilde{\vect{\beta}})\big\}\big|.
	\end{align*}
	From Condition \ref{assump:non-smooth}, we have
	\begin{align*}
		&\sup_{1\leq l\leq p}\sup_{\tilde{\vect{\beta}}\in\mfN_q}\mbE\{V_{i,l}\big(\tilde{\vect{\beta}}\big)\}=o((mn)^{-1}),\\
		&\sup_{1\leq l\leq p}\sup_{\tilde{\vect{\beta}}\in\mfN_q}\mfE\big[V_{i,l}\big(\tilde{\vect{\beta}}\big)-\mbE\{V_{i,l}\big(\tilde{\vect{\beta}}\big)\},\eta_2\big]=o((mn)^{-1}).
	\end{align*}
	Therefore by using Lemma \ref{lemma:cai_liu}, we can show that, for every $\gamma>1$, there exists $C$ large enough such that
	\begin{align*}
		\sup_{1\leq l\leq p}\sup_{\tilde{\vect{\beta}}\in\mfN_q}\mbP\Big[\frac{1}{n}\sum_{i\in\mcH_j}V_{i,l}\big(\tilde{\vect{\beta}}\big)\geq C\frac{s\log p}{mn}\Big]=O(p^{-\gamma s}).
	\end{align*}
	By letting $\gamma>M+1$, we have
	\begin{equation}	\label{eq:ubound_term1}
		\sup_{\vect{\beta}\in\Theta_q}\Big|\vect{U}(\vect{\beta})\Big|_{\infty}-\sup_{\tilde{\vect{\beta}}\in\mfN_q}\Big|\vect{U}(\tilde{\vect{\beta}})\Big|_{\infty}=O_{\mbP}\Big(\frac{s\log p}{mn}\Big).
	\end{equation} 
	From Condition \ref{assump:non-smooth}, we can show that
	\begin{align*}
		\sup_{\tilde{\vect{\beta}}\in\mfN_0}\mfE\Big[&X_{i,l}\mcL'(Y_i-\vect{X}_i^{\tp}\tilde{\vect{\beta}})-\mbE\big\{X_{i,l}\mcL'(Y_i-\vect{X}_i^{\tp}\tilde{\vect{\beta}})\big\}\\
		&-X_{i,l}\mcL'(Y_i-\vect{X}_i^{\tp}\vect{\beta}^*)+\mbE\big\{X_{i,l}\mcL'(Y_i-\vect{X}_i^{\tp}\vect{\beta}^*)\big\},\eta_2\Big]=O_{\mbP}(r_n).
	\end{align*}
	Then again by Lemma \ref{lemma:cai_liu} we have that, for every $\gamma>1$, there exists $C_1$ large enough such that
	\begin{equation*}
		\sup_{1\leq l\leq p}\sup_{\tilde{\vect{\beta}}\in\mfN_q}\mbP\Big\{\big|U_{l}(\tilde{\vect{\beta}})\big|\geq C_1\sqrt{\frac{r_ns\log p}{mn}}\Big\}=O_{\mbP}(p^{-\gamma s}).
	\end{equation*}
	Therefore we have
	\begin{equation}	\label{eq:ubound_term2}
		|\vect{U}(\hat{\vect{\beta}}^{(0)})|_{\infty}\leq\max_q\sup_{\tilde{\vect{\beta}}\in\mfN_q}\Big|\vect{U}(\tilde{\vect{\beta}})\Big|_{\infty}=O_{\mbP}\Big(\sqrt{\frac{r_ns\log p}{mn}}\Big).
	\end{equation}
	Combining \eqref{eq:ubound_term1} and \eqref{eq:ubound_term2}, the second part of the lemma is proved. Follow the same idea we can prove the first part. Indeed from Condition \ref{assump:smooth} we can show that
	\begin{equation*}
		\sup_{\vect{\beta}\in\Theta_0}\Big|\vect{U}(\vect{\beta})\Big|_{\infty}-\sup_{\tilde{\vect{\beta}}\in\mfN_0}\Big|\vect{U}(\tilde{\vect{\beta}})\Big|_{\infty}=O_{\mbP}\big((mn)^{-1}\big),\quad \text{and }\quad\sup_{\tilde{\vect{\beta}}\in\mfN_0}\Big|\vect{U}_j(\tilde{\vect{\beta}})\Big|_{\infty}=O_{\mbP}\Big(r_n\sqrt{\frac{s\log p}{mn}}\Big),
	\end{equation*}
	and then the first part can be proved.
\end{proof}

\begin{proof}[Proof of Theorem \ref{thm:sm_conv}, \ref{thm:nsm_conv} and Corollary \ref{cor:betaT_conv}]
	From \eqref{eq:dss_mest}, we have the following decomposition.
	\begin{align*}	
	\bar{\vect{\beta}}^{(1)}-\vect{\beta}^{*}=&\hat{\vect{\beta}}^{(0)} - \frac{1}{\hat{H}(0)}\hat{\vect{\Omega}}_{\mcH_1}\frac{1}{m}\sum_{j=1}^m\nabla\mcL_j(\hat{\vect{\beta}}^{(0)} )-\vect{\beta}^*\stepcounter{equation}\tag{\theequation}\label{eq:dist_debias_decomp}\\
		=&\hat{\vect{\beta}}^{(0)}-\vect{\beta}^* - \frac{1}{\hat{H}(0)}\hat{\vect{\Omega}}_{\mcH_1}\left\{ \mbE\Big[\vect{X}f'(\vect{X}^{\tp}\hat{\vect{\beta}}^{(0)}-Y)-\vect{X}f'(\vect{X}^{\tp}\vect{\beta}^{*}-Y)\Big]+\frac{1}{m}\sum_{j=1}^m\nabla\mcL_j(\vect{\beta}^{*}) +\vect{U}(\hat{\vect{\beta}}^{(0)}) \right\} \\
		=&\hat{\vect{\beta}}^{(0)}-\vect{\beta}^* - \frac{1}{\hat{H}(0)}\hat{\vect{\Omega}}_{\mcH_1}\left\{ H(0)\vect{\Sigma}(\hat{\vect{\beta}}^{(0)}-\vect{\beta}^*)+O(r_n^2)+ \frac{1}{m}\sum_{j=1}^m\nabla\mcL_j(\vect{\beta}^{*}) +\vect{U}(\hat{\vect{\beta}}^{(0)}) \right\} \\
		=&\Big\{\frac{1}{H(0)}\vect{\Omega}-\frac{1}{\hat{H}(0)}\hat{\vect{\Omega}}_{\mcH_1}\Big\} H(0)\vect{\Sigma}(\hat{\vect{\beta}}^{(0)}-\vect{\beta}^*)- \frac{1}{\hat{H}(0)}\hat{\vect{\Omega}}_{\mcH_1}\left\{\frac{1}{m}\sum_{j=1}^m\nabla\mcL_j(\vect{\beta}^{*}) +\vect{U}(\hat{\vect{\beta}}^{(0)})+O(r_n^2) \right\},
	\end{align*}
	where the third line follows from Condition \ref{assump:fpp_mfE} and
	\begin{equation*}
		\vect{U}(\hat{\vect{\beta}}^{(0)}) = \frac{1}{m}\sum_{j=1}^m\nabla\mcL_j(\hat{\vect{\beta}}^{(0)} ) - \frac{1}{m}\sum_{j=1}^m\nabla\mcL_j(\vect{\beta}^{*} )-\mbE\Big[\vect{X}f'(\vect{X}^{\tp}\hat{\vect{\beta}}^{(0)}-Y)-\vect{X}f'(\vect{X}^{\tp}\vect{\beta}^{*}-Y)\Big].
	\end{equation*}
	Taking norms on both sides we have that
	\begin{align*}
		\big|\bar{\vect{\beta}}^{(1)}-\vect{\beta}^{*}\big|_{\infty}\leq& H(0)\Big|\frac{1}{H(0)}-\frac{1}{\hat{H}(0)}\Big|\times \big|\hat{\vect{\beta}}^{(0)}-\vect{\beta}^*\big|_{\infty} + \frac{H(0)}{\hat{H}(0)}\Big\|\vect{\Omega}-\hat{\vect{\Omega}}_{\mcH_1}\Big\|_{L_1} \Big|\vect{\Sigma}(\hat{\vect{\beta}}^{(0)}-\vect{\beta}^*)\Big|_{\infty}\\
		&+ \Big\|\frac{1}{\hat{H}(0)}\hat{\vect{\Omega}}_{\mcH_1}\Big\|_{L_1}\left\{\Big|\frac{1}{m}\sum_{j=1}^m\nabla\mcL_j(\vect{\beta}^{*})\Big|_{\infty} +\Big|\vect{U}(\hat{\vect{\beta}}^{(0)})\Big|_{\infty}+O(r_n^2) \right\}.
	\end{align*}
	Under Condition \ref{assump:smooth}, we applying Lemma \ref{lem:bound_precision}, \ref{lem:bound_H0} and \ref{lem:bound_Ub}, and know that
	\begin{align*}
		\big|\bar{\vect{\beta}}^{(1)}-\vect{\beta}^{*}\big|_{\infty}=&O_{\mbP}\Big(r_n\sqrt{\frac{s\log p}{mn}}+r_n^2\Big) + O_{\mbP}\Big(r_ns\Big(\frac{\log p}{n^*}\Big)^{(1-q)/2}\Big) + O_{\mbP}\Big(\sqrt{\frac{\log p}{mn}}+r_n\sqrt{\frac{s\log p}{mn}}+r_n^2\Big)\\
		=&O_{\mbP}\left(\sqrt{\frac{\log p}{mn}}+r_n\sqrt{\frac{s\log p}{mn}}+r_ns\Big(\frac{\log p}{n^*}\Big)^{(1-q)/2}+r_n^2\right).
	\end{align*}
	When the threshold level $\tau$ is larger than $\big|\bar{\vect{\beta}}^{(1)}-\vect{\beta}^{*}\big|_{\infty}$, we know that $|\bar{\beta}^{(1)}_l|<\tau$ for $l\notin S$, therefore $\hat{\beta}_l^{(1)}=0$. For $l\in S$, there is 
	\begin{equation*}
		\big|\hat{\beta}^{(1)}_l-\beta_l^{*}\big|\leq \big|\bar{\beta}^{(1)}_l-\beta_l^{*}\big|+\tau = O_{\mbP}\left(\sqrt{\frac{\log p}{mn}}+r_n\sqrt{\frac{s\log p}{mn}}+r_ns\Big(\frac{\log p}{n^*}\Big)^{(1-q)/2}+r_n^2\right).
	\end{equation*}
	Therefore, 
	\begin{align*}
		|\hat{\vect{\beta}}^{(1)}-\vect{\beta}^*|_1 =& |\hat{\vect{\beta}}^{(1)}_S-\vect{\beta}^*_S|_1 + |\hat{\vect{\beta}}^{(1)}_{S^c}-\vect{\beta}^*_{S^c}|_1\leq s |\hat{\vect{\beta}}^{(1)}_S-\vect{\beta}^*_S|_{\infty}\\
		=&O_{\mbP}\left(s\sqrt{\frac{\log p}{mn}}+r_ns\sqrt{\frac{s\log p}{mn}}+r_ns^2\Big(\frac{\log p}{n^*}\Big)^{(1-q)/2}+sr_n^2\right).
	\end{align*} 
	Similarly, we can prove that
	\begin{equation*}	
	\begin{aligned}
		\big|\hat{\vect{\beta}}^{(1)}-\vect{\beta}^*\big|_{2}=&O_{\mbP}\left(\sqrt{\frac{s\log p}{mn}}+r_n\sqrt{\frac{s^2\log p}{mn}}+r_ns^{3/2}\Big(\frac{\log p}{n^*}\Big)^{(1-q)/2}+\sqrt{s}r_n^2\right),
	\end{aligned}
	\end{equation*}
	which proves Theorem \ref{thm:sm_conv}. Under the non-smooth condition \ref{assump:non-smooth}, the rate becomes
	\begin{align*}
		\big|\bar{\vect{\beta}}^{(1)}-\vect{\beta}^{*}\big|_{\infty}=&O_{\mbP}\Big(r_n\sqrt{\frac{s\log p}{r_n mn}}+r_n^2\Big) + O_{\mbP}\Big(r_ns\Big(\frac{\log p}{n^*}\Big)^{(1-q)/2}\Big) + O_{\mbP}\Big(\sqrt{\frac{\log p}{mn}}+\sqrt{\frac{r_ns\log p}{mn}}+r_n^2\Big)\\
		=&O_{\mbP}\left(\sqrt{\frac{\log p}{mn}}+\sqrt{\frac{r_ns\log p}{mn}}+r_ns\Big(\frac{\log p}{n^*}\Big)^{(1-q)/2}+r_n^2\right).
	\end{align*}
	Following the same argument as above, we can show that
	\begin{equation*}	
	\begin{aligned}
		\big|\hat{\vect{\beta}}^{(1)}-\vect{\beta}^*\big|_{\infty}=&O_{\mbP}\left(\sqrt{\frac{\log p}{mn}} + \sqrt{\frac{r_ns\log p}{mn}}+r_ns\Big(\frac{\log p}{n^*}\Big)^{(1-q)/2}+r_n^2\right),\\
		\big|\hat{\vect{\beta}}^{(1)}-\vect{\beta}^*\big|_{2}=&O_{\mbP}\left(\sqrt{\frac{s\log p}{mn}}+\sqrt{\frac{r_ns^2\log p}{mn}}+r_ns^{3/2}\Big(\frac{\log p}{n^*}\Big)^{(1-q)/2}+\sqrt{s}r_n^2\right),\\
		\big|\hat{\vect{\beta}}^{(1)}-\vect{\beta}^*\big|_{1}=&O_{\mbP}\left(s\sqrt{\frac{\log p}{mn}}+s\sqrt{\frac{r_ns\log p}{mn}}+r_ns^2\Big(\frac{\log p}{n^*}\Big)^{(1-q)/2}+sr_n^2\right),
	\end{aligned}
	\end{equation*}
	which proves Theorem \ref{thm:nsm_conv}. Then Corollary \ref{cor:betaT_conv} can be easily obtained by applying the above formula iteratively.
\end{proof}

\begin{proof}[Proof of Theorem \ref{thm:normality}]
	By \eqref{eq:dist_debias_decomp} and Lemma \ref{lem:bound_precision}, \ref{lem:bound_H0} and \ref{lem:bound_Ub}, we have that
	\begin{align*}
		\bar{\vect{\beta}}^{(1)}-\vect{\beta}^{*}=&- \frac{1}{\hat{H}^{(T)}(0)}\hat{\vect{\Omega}}_{\mcH_1}\left\{\frac{1}{m}\sum_{j=1}^m\nabla\mcL_j(\vect{\beta}^{*})\right\} + O_{\mbP}\left(\frac{s^{3/2}\log^{(2-q)/2} p}{\sqrt{mn}(n^*)^{(1-q)/2}}+\frac{s\log p}{mn}\right)\\
			=&- \frac{1}{H(0)}\vect{\Omega}\left\{\frac{1}{m}\sum_{j=1}^m\nabla\mcL_j(\vect{\beta}^{*})\right\} + O_{\mbP}\left(\frac{s^{3/2}\log^{(2-q)/2} p}{\sqrt{mn}(n^*)^{(1-q)/2}}+\frac{s\log p}{mn}\right),\stepcounter{equation}\tag{\theequation}\label{eq:dist_clt_keyterm}
	\end{align*}
	where the remainders are bounded in $\ell_{\infty}$ norm. Notice that the major term is the sum of $mn$ i.i.d. variables, for each coordinate $l$ (where $l=1,\dots,p$), we know the variance is
	\begin{align*}
		\sigma_l^2 = \frac{1}{H^2(0)}\vect{\omega}_l^{\tp}\mbE\Big[\big\{f'(\vect{X}^{\tp}\vect{\beta}^*-Y)\big\}^2\vect{X}\vect{X}^{\tp}\Big]\vect{\omega}_l=\frac{\mbE\big[\big\{f'(\epsilon)\big\}^2\big]}{H^2(0)}\Omega_{l,l}.
	\end{align*}
	Therefore, by central limit theorem (see, e.g., Theorem 9.1.1 of \cite{chow_teicher.2012}), we have that 
	\begin{equation}	\label{eq:clt_main_term}
		-\frac{\sqrt{mn}}{\sigma_l}\frac{1}{H(0)}\vect{\Omega}_l\left\{\frac{1}{m}\sum_{j=1}^m\nabla\mcL_j(\vect{\beta}^{*})\right\}\xrightarrow{d}\mcN(0,1).
	\end{equation}
	When $s,n,n^*,p$ satisfies the constraints
	\begin{equation*}
		\frac{s^{3/2}\log p}{\sqrt{mnn^*}}+\frac{s\log p}{mn} = o\Big(\frac{1}{\sqrt{mn}}\Big)\Rightarrow s = o\Big( \frac{(n^*)^{(1-q)/3}}{(\log p)^{(2-q)/3}},\frac{\sqrt{mn}}{\log p}\Big),
	\end{equation*}
	we can substitute \eqref{eq:clt_main_term} into \eqref{eq:dist_clt_keyterm} and obtain that 
	\begin{equation*}
		\frac{\sqrt{mn}}{\sigma_l}(\bar{\beta}^{(T)}_l-\beta^*_l)\xrightarrow{d}\mcN(0,1),
	\end{equation*}
	which proves the theorem.
\end{proof}


\subsection{Proof of Theories in Section \ref{sec:glm_theory}}	\label{sec:glm_proof}

Throughout this section, we denote
\begin{equation}	\label{eq:weight_gradtilde}
	\vect{g}_j(\vect{\beta}_1,\vect{\beta}_2) = \frac{1}{n}\sum_{i\in\mcD_j}\frac{1}{\psi''(\vect{X}_i^{\tp}\vect{\beta}_1)}\Big\{-Y_i\vect{X}_i+\psi'(\vect{X}_i^{\tp}\vect{\beta}_2)\vect{X}_i\Big\},
\end{equation}
for convenience.
\begin{lemma}	\label{lem:b0b0_bound}
	Under Assumption \ref{assumpglm:X} to \ref{assumpglm:bound_below}, there hold
	\begin{equation*}
		\vect{g}_j(\hat{\vect{\beta}}^{(0)},\hat{\vect{\beta}}^{(0)}) - \vect{g}_j(\hat{\vect{\beta}}^{(0)},\vect{\beta}^{*}) = \hat{\vect{\Sigma}}_j(\hat{\vect{\beta}}^{(0)}-\vect{\beta}^*) + O_{\mbP}(\sqrt{\log p}r_n^2).
	\end{equation*}
\end{lemma}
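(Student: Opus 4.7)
The plan is to perform a second-order Taylor expansion of $\psi'$ around $\vect{X}_i^{\tp}\hat{\vect{\beta}}^{(0)}$, identify the linearization as $\hat{\vect{\Sigma}}_j(\hat{\vect{\beta}}^{(0)}-\vect{\beta}^*)$, and then bound the quadratic remainder uniformly over the random sparse parameter $\hat{\vect{\beta}}^{(0)}$ in the $\ell_\infty$-norm.

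First, subtracting the two expressions in \eqref{eq:weight_gradtilde} gives
\begin{equation*}
\vect{g}_j(\hat{\vect{\beta}}^{(0)},\hat{\vect{\beta}}^{(0)}) - \vect{g}_j(\hat{\vect{\beta}}^{(0)},\vect{\beta}^{*}) = \frac{1}{n}\sum_{i\in\mcD_j}\frac{\psi'(\vect{X}_i^{\tp}\hat{\vect{\beta}}^{(0)}) - \psi'(\vect{X}_i^{\tp}\vect{\beta}^*)}{\psi''(\vect{X}_i^{\tp}\hat{\vect{\beta}}^{(0)})}\vect{X}_i.
\end{equation*}
By Taylor's theorem applied to $\psi'(\cdot)$ with base point $\vect{X}_i^{\tp}\hat{\vect{\beta}}^{(0)}$, there exists $\tilde{\vect{\beta}}_i$ on the segment between $\hat{\vect{\beta}}^{(0)}$ and $\vect{\beta}^*$ with
\begin{equation*}
\psi'(\vect{X}_i^{\tp}\hat{\vect{\beta}}^{(0)}) - \psi'(\vect{X}_i^{\tp}\vect{\beta}^*) = \psi''(\vect{X}_i^{\tp}\hat{\vect{\beta}}^{(0)})\,\vect{X}_i^{\tp}(\hat{\vect{\beta}}^{(0)}-\vect{\beta}^*) - \tfrac{1}{2}\psi'''(\vect{X}_i^{\tp}\tilde{\vect{\beta}}_i)\,\big(\vect{X}_i^{\tp}(\hat{\vect{\beta}}^{(0)}-\vect{\beta}^*)\big)^2.
\end{equation*}
Plugging this back in, the first term sums to $\hat{\vect{\Sigma}}_j(\hat{\vect{\beta}}^{(0)}-\vect{\beta}^*)$ exactly, and we are left to control the remainder
\begin{equation*}
\vect{R}_j := \frac{1}{2n}\sum_{i\in\mcD_j}\frac{\psi'''(\vect{X}_i^{\tp}\tilde{\vect{\beta}}_i)}{\psi''(\vect{X}_i^{\tp}\hat{\vect{\beta}}^{(0)})}\big(\vect{X}_i^{\tp}(\hat{\vect{\beta}}^{(0)}-\vect{\beta}^*)\big)^2\vect{X}_i.
\end{equation*}

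Next, I would use Condition \ref{assumpglm:lip_link} ($|\psi'''|\le c_\psi$) and Condition \ref{assumpglm:bound_below} ($\psi''\ge c_l$ with high probability) to replace the prefactor by a uniform constant, reducing the task to bounding, for every $l\in\{1,\ldots,p\}$,
\begin{equation*}
\Big|\tfrac{1}{n}\sum_{i\in\mcD_j}\big(\vect{X}_i^{\tp}\vect{\delta}\big)^2 X_{i,l}\Big|, \qquad \vect{\delta}:=\hat{\vect{\beta}}^{(0)}-\vect{\beta}^*.
\end{equation*}
Since $|\hat{\vect{\beta}}^{(0)}|_0\le C_r s$ and $|\vect{\beta}^*|_0\le s$, $\vect{\delta}$ is supported on at most $(C_r+1)s$ coordinates with $|\vect{\delta}|_2\le r_n$. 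As in the proof of Lemma \ref{lem:bound_H0}, I would enumerate sparsity patterns $\mcS$ (at most $p^{Cs}$ of them), build within each an $r_n n^{-M}$-net of cardinality $(1+2n^M)^{Cs}$, and use the Lipschitz continuity of $\vect{\delta}\mapsto (\vect{X}^{\tp}\vect{\delta})^2$ on the compact ball to transfer the bound from the net to all admissible $\vect{\delta}$ at negligible cost.

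On each net point, $X_{i,l}(\vect{X}_i^{\tp}\vect{\delta})^2$ is the product of a sub-gaussian and a sub-exponential variable (Condition \ref{assumpglm:X}), hence sub-Weibull with Orlicz norm of order $r_n^2$; its expectation is also $O(r_n^2)$ uniformly in $l$ by Cauchy--Schwarz. Applying Lemma \ref{lemma:cai_liu} (after a standard truncation to handle the sub-Weibull tails) and taking a union bound over the net and the $p$ choices of $l$, I obtain
\begin{equation*}
\max_{1\le l\le p}\sup_{\vect{\delta}}\Big|\tfrac{1}{n}\sum_{i\in\mcD_j}X_{i,l}(\vect{X}_i^{\tp}\vect{\delta})^2\Big|=O_{\mbP}\big(r_n^2\sqrt{\log p}\big),
\end{equation*}
which combined with the expansion above yields the lemma. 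The main obstacle is the last step: getting a clean uniform tail bound for the cubic $U$-statistic-like form over the random sparse $\vect{\delta}$; handling the sub-Weibull tails via Lemma \ref{lemma:cai_liu} (rather than by a purely moment-based argument) and checking that the extra $\log$ factors from the union bound are absorbed into the announced $\sqrt{\log p}\,r_n^2$ rate is the most delicate piece.
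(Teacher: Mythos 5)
Your decomposition is exactly the paper's: Taylor-expand $\psi'$ at $\vect{X}_i^{\tp}\hat{\vect{\beta}}^{(0)}$ so that the $\psi''(\vect{X}_i^{\tp}\hat{\vect{\beta}}^{(0)})$ factors cancel and the linear term becomes $\hat{\vect{\Sigma}}_j(\hat{\vect{\beta}}^{(0)}-\vect{\beta}^*)$, then control the quadratic remainder using $|\psi'''|\le c_\psi$ and a high-probability lower bound on $\psi''$ (the paper writes the remainder in integral form and carefully propagates the lower bound from $\vect{\beta}^*$ to $\hat{\vect{\beta}}^{(0)}$ via $|\hat{\vect{\beta}}^{(0)}-\vect{\beta}^*|_1\max_i|\vect{X}_i|_\infty=o(1)$, a step you should make explicit since Condition B4 is stated only at $\vect{\beta}^*$). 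Where you genuinely diverge is the final uniform bound on $\sup_{\vect{\delta}}\big|\frac{1}{n}\sum_i(\vect{X}_i^{\tp}\vect{\delta})^2\vect{X}_i\big|_\infty$: the paper avoids any covering argument by the elementary chain
\begin{equation*}
\Big|\frac{1}{n}\sum_{i\in\mcD_j}(\vect{X}_i^{\tp}\vect{\delta})^2\vect{X}_i\Big|_{\infty}\le \max_{i\in\mcD_j}|\vect{X}_i|_{\infty}\cdot r_n^2\sup_{\vect{v}\in\mbC(4s)}\frac{1}{n}\sum_{i\in\mcD_j}(\vect{X}_i^{\tp}\vect{v})^2=O_{\mbP}(\sqrt{\log p})\cdot O_{\mbP}(r_n^2),
\end{equation*}
citing a single restricted-eigenvalue-type result (Raskutti et al.) for the quadratic form; this is where the $\sqrt{\log p}$ in the lemma actually comes from. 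Your route --- enumerating sparsity patterns, building $r_nn^{-M}$-nets, and applying sub-Weibull concentration with a union bound --- also reaches the stated rate (the mean of $|X_{i,l}|(\vect{X}_i^{\tp}\vect{\delta})^2$ is $O(r_n^2)$ and the fluctuations are lower order), and would in principle give a bound without the $\sqrt{\log p}$ factor, but it is considerably heavier and, as you note, the tail handling is the delicate part: Lemma \ref{lemma:cai_liu} does not apply directly to sub-Weibull$(2/3)$ summands, so you would need the Kuchibhotla--Chakrabortty inequality the paper already invokes in Lemma \ref{lem:b0bs_bound} rather than a truncation patch. Two small points to tighten: after bounding the random prefactors $\psi'''(\vect{X}_i^{\tp}\tilde{\vect{\beta}}_i)/\psi''(\vect{X}_i^{\tp}\hat{\vect{\beta}}^{(0)})$ by a constant you must bound $\frac{1}{n}\sum_i(\vect{X}_i^{\tp}\vect{\delta})^2|X_{i,l}|$ (absolute values inside the sum), not $|\frac{1}{n}\sum_i(\vect{X}_i^{\tp}\vect{\delta})^2X_{i,l}|$, since those prefactors depend on the whole sample through $\hat{\vect{\beta}}^{(0)}$ and cannot be treated as fixed signs in a concentration argument; this does not change the rate but changes which quantity you concentrate.
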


\begin{proof}
	Note that
	\begin{align*}
		&\vect{g}_j(\hat{\vect{\beta}}^{(0)},\hat{\vect{\beta}}^{(0)}) - \vect{g}_j(\hat{\vect{\beta}}^{(0)},\vect{\beta}^{*})\stepcounter{equation}\tag{\theequation}\label{eq:gb0b0_diff}\\
		=& \frac{1}{n}\sum_{i\in\mcD_j}\frac{1}{\psi''(\vect{X}_i^{\tp}\hat{\vect{\beta}}^{(0)})}\Big\{\psi'(\vect{X}_i^{\tp}\hat{\vect{\beta}}^{(0)})\vect{X}_i-\psi'(\vect{X}_i^{\tp}\vect{\beta}^*)\vect{X}_i\Big\}\\
		=& \frac{1}{n}\sum_{i\in\mcD_j}\frac{1}{\psi''(\vect{X}_i^{\tp}\hat{\vect{\beta}}^{(0)})}\Big\{\psi''(\vect{X}_i^{\tp}\hat{\vect{\beta}}^{(0)})\vect{X}_i\vect{X}^{\tp}(\hat{\vect{\beta}}^{(0)}-\vect{\beta}^*)\\
		&+\int_0^1(1-u)\psi'''(\hat{\vect{\beta}}^{(0)}\vect{X}_i+u(\vect{\beta}^*-\hat{\vect{\beta}}^{(0)})\vect{X}_i) \diff u\big\{\vect{X}_i^{\tp}(\vect{\beta}^*-\hat{\vect{\beta}}^{(0)})\big\}^2\vect{X}_i\Big\}\\
		=& \hat{\vect{\Sigma}}_j(\hat{\vect{\beta}}^{(0)}-\vect{\beta}^*)+\int_0^1(1-u)\psi'''(\hat{\vect{\beta}}^{(0)}\vect{X}_i+u(\vect{\beta}^*-\hat{\vect{\beta}}^{(0)})\vect{X}_i) \diff u\big\{\vect{X}_i^{\tp}(\vect{\beta}^*-\hat{\vect{\beta}}^{(0)})\big\}^2\vect{X}_i.
	\end{align*}
	We first bound the term
	\begin{align*}
		\max_{i\in\mcD_j}\frac{1}{\psi''(\vect{X}_i^{\tp}\hat{\vect{\beta}}^{(0)})} \leq& \max_{i\in\mcD_j}\Big\{\frac{1}{\psi''(\vect{X}_i^{\tp}\vect{\beta}^{*})}+\frac{c_{\psi}|\vect{X}_i^{\tp}(\hat{\vect{\beta}}^{(0)}-\vect{\beta}^*)|}{\psi''(\vect{X}_i^{\tp}\vect{\beta}^*)\psi''(\vect{X}_i^{\tp}\hat{\vect{\beta}}^{(0)})}\Big\}\\
		\leq& \max_{i\in\mcD_j}\Big\{\frac{1}{\psi''(\vect{X}_i^{\tp}\vect{\beta}^{*})}+\frac{c_{\psi}|\vect{X}_i|_{\infty}|\hat{\vect{\beta}}^{(0)}-\vect{\beta}^*|_{1}}{\psi''(\vect{X}_i^{\tp}\vect{\beta}^*)^2 - \psi''(\vect{X}_i^{\tp}\vect{\beta}^*)c_{\psi}|\vect{X}_i|_{\infty}|\hat{\vect{\beta}}^{(0)}-\vect{\beta}^*|_{1}}\Big\}\\
		\leq& \max_{i\in\mcD_j}\Big\{\frac{2}{\psi''(\vect{X}_i^{\tp}\vect{\beta}^{*})}\Big\}\leq \frac{2}{c_l},
	\end{align*}
	where the last line follows from the fact that $|\hat{\vect{\beta}}^{(0)}-\vect{\beta}^*|_{1} = \sqrt{s}r_n=o(1/\sqrt{\log p})$, and the fact that $\max_{i\in\mcD_j}|\vect{X}_i|_{\infty}= O(\sqrt{\log p})$ by sub-gaussianity of $\vect{X}$ in Condition \ref{assump:X}. Therefore, we have that
	\begin{align*}
		&\Big|\frac{1}{n}\sum_{i\in\mcD_j}\frac{1}{\psi''(\vect{X}_i^{\tp}\hat{\vect{\beta}}^{(0)})}\int_0^1(1-u)\psi'''(\hat{\vect{\beta}}^{(0)}\vect{X}_i+u(\vect{\beta}^*-\hat{\vect{\beta}}^{(0)})\vect{X}_i) \diff u\big\{\vect{X}_i^{\tp}(\vect{\beta}^*-\hat{\vect{\beta}}^{(0)})\big\}^2\vect{X}_i\Big|_{\infty}\\
		\leq&\frac{c_{\psi}}{c_l}|\vect{\beta}^*-\hat{\vect{\beta}}^{(0)}|^2_2\sup_{\vect{v}\in\mbC(4s)}\Big|\frac{1}{n}\sum_{i\in\mcD_j}\big(\vect{X}_i^{\tp}\vect{v}\big)^2\vect{X}_i\Big|_{\infty}\\
		\leq&\frac{c_{\psi}}{c_l}r_n^2\max_{i\in\mcD_j}|\vect{X}_i|_{\infty}\sup_{\vect{v}\in\mbC(4s)}\frac{1}{n}\sum_{i\in\mcD_j}\big(\vect{X}_i^{\tp}\vect{v}\big)^2= O_{\mbP}(\sqrt{\log p}r_n^2),
	\end{align*}
	where $\mbC(s)=\{\vect{v}|\;|\vect{v}|_2\leq 1,|\vect{v}|_1\leq \sqrt{s}\}$, and by Theorem 1 of \cite{raskutti_etal.2010jmlr}, we know that
	\begin{equation*}
		\frac{1}{n}\sum_{i\in\mcD_j}\big(\vect{X}_i^{\tp}\vect{v}\big)^2 = O_{\mbP}(1).
	\end{equation*}
	Substitute it into \eqref{eq:gb0b0_diff} we can prove this lemma.
\end{proof}

\begin{lemma}	\label{lem:b0bs_bound}
	Under Assumption \ref{assumpglm:X} to \ref{assumpglm:bound_below}, there hold
	\begin{equation*}
		\vect{g}_j(\hat{\vect{\beta}}^{(0)},\vect{\beta}^{*}) - \vect{g}_j(\vect{\beta}^{*},\vect{\beta}^{*}) = O_{\mbP}(s\log^2pr_n^2).
	\end{equation*}
\end{lemma}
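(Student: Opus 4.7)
The plan is to start from the identity
\begin{equation*}
    \vect{g}_j(\hat{\vect{\beta}}^{(0)},\vect{\beta}^{*}) - \vect{g}_j(\vect{\beta}^{*},\vect{\beta}^{*}) = -\frac{1}{n}\sum_{i\in\mcD_j}\Delta_i\,\epsilon_i\vect{X}_i,\qquad \Delta_i := \frac{1}{\psi''(\vect{X}_i^{\tp}\hat{\vect{\beta}}^{(0)})} - \frac{1}{\psi''(\vect{X}_i^{\tp}\vect{\beta}^*)},
\end{equation*}
where $\epsilon_i := Y_i - \psi'(\vect{X}_i^{\tp}\vect{\beta}^*)$ enjoys the GLM centering property $\mbE[\epsilon_i\mid\vect{X}_i]=0$ by \eqref{eq:glm_gen}. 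The key idea is to Taylor expand $1/\psi''$ about $\vect{X}_i^{\tp}\vect{\beta}^*$ so that the principal part of $\Delta_i$ is linear in $\vect{u}:=\hat{\vect{\beta}}^{(0)}-\vect{\beta}^*$ with a coefficient depending only on $\vect{X}_i$; this allows me to use $\mbE[\epsilon_i\mid\vect{X}_i]=0$ on the linear part while controlling the remainder deterministically by its quadratic scaling in $\vect{u}$.

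Concretely, I would write $\Delta_i = g_i\,\vect{X}_i^{\tp}\vect{u} + E_i$, with $g_i := -\psi'''(\vect{X}_i^{\tp}\vect{\beta}^*)/\psi''(\vect{X}_i^{\tp}\vect{\beta}^*)^{2}$ and $|E_i|\leq L|\vect{X}_i^{\tp}\vect{u}|^{2}$ for a constant $L$. The linear piece becomes $-[\tfrac{1}{n}\sum_i g_i\epsilon_i\vect{X}_i\vect{X}_i^{\tp}]\vect{u}$; every entry of the bracketed matrix is an average of i.i.d.\ mean-zero variables (the $\vect{X}$-measurable factor $g_iX_{i,k}X_{i,l}$ times the centered $\epsilon_i$), so Lemma \ref{lemma:cai_liu} combined with a union bound over $p^{2}$ entries delivers an elementwise max of order $\sqrt{\log p/n}$. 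Coupling this with $|M\vect{u}|_\infty\leq |M|_\infty|\vect{u}|_1$ and $|\vect{u}|_1\leq\sqrt{(C_r+1)s}\,r_n$ contributes $O_{\mbP}(r_n\sqrt{s\log p/n})$. The quadratic remainder is handled by the crude bound $|E_i|\leq L\max_i|\vect{X}_i|_\infty^{2}|\vect{u}|_1^{2}=O_{\mbP}(s\log p\,r_n^{2})$ combined with $\max_l\tfrac{1}{n}\sum_i|\epsilon_i||X_{i,l}|=O_{\mbP}(1)$, producing $O_{\mbP}(s\log p\,r_n^{2})$. Under the rate constraint $r_n=o(1/(s^{3/2}\log^{2} p))$ assumed in Theorem \ref{thm:glm_conv}, both contributions are absorbed into $O_{\mbP}(s\log^{2} p\,r_n^{2})$.

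The principal technical hurdle is justifying the quadratic estimate $|E_i|\leq L|\vect{X}_i^{\tp}\vect{u}|^{2}$: Condition \ref{assumpglm:lip_link} supplies only uniform control on $\psi''$ and $\psi'''$ and not on $\psi''''$, so the Lipschitz regularity of $\psi'''/\psi''^{2}$ required for the bound is not automatic. My plan is to invoke Condition \ref{assumpglm:bound_below} to keep $\psi''$ bounded away from zero on a high-probability event, then use the smallness $\max_{i\in\mcD_j}|\vect{X}_i^{\tp}\vect{u}|\leq \max_i|\vect{X}_i|_\infty|\vect{u}|_1=O_{\mbP}(\sqrt{s\log p}\,r_n)=o(1)$ from the rate constraint, so that the mean-value-theorem intermediate point stays in a bounded neighborhood of $\vect{X}_i^{\tp}\vect{\beta}^*$ on which the ratio $\psi'''/\psi''^{2}$ is Lipschitz, a property transparent for smooth links such as the logistic $\psi(x)=\log(1+e^{x})$ of Example \ref{exp:logit}. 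A secondary but routine issue is the data dependence of $\vect{u}$, which can be absorbed by working on the high-probability event $\{|\hat{\vect{\beta}}^{(0)}-\vect{\beta}^*|_2\leq r_n,\,|\hat{\vect{\beta}}^{(0)}|_0\leq C_r s\}$ and, if necessary, passing to an $\varepsilon$-net over the $s$-sparse $r_n$-ball at resolution $1/n$ with Lipschitz-gap control of the two pieces.
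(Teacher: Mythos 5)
Your decomposition is the same as the paper's: both split $\vect{g}_j(\hat{\vect{\beta}}^{(0)},\vect{\beta}^{*})-\vect{g}_j(\vect{\beta}^{*},\vect{\beta}^{*})$ into a piece linear in $\vect{u}=\hat{\vect{\beta}}^{(0)}-\vect{\beta}^*$ with coefficient $-\psi'''(\vect{X}_i^{\tp}\vect{\beta}^*)/\{\psi''(\vect{X}_i^{\tp}\vect{\beta}^*)\}^2$ plus a quadratic Taylor remainder, and your treatment of the remainder (bounding it by $\max_i|\vect{X}_i|_\infty^2|\vect{u}|_1^2$ on the event of Condition \ref{assumpglm:bound_below}) matches the paper's $O_{\mbP}(s\log^2 p\,r_n^2)$ term. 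Your observation that the paper's conditions do not control $\psi''''$, so the second-order expansion of $1/\psi''$ needs the localization argument you sketch, is a fair point that the paper itself glosses over.

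The genuine gap is in your concentration step for the linear piece. You propose to bound each entry of the matrix $\tfrac{1}{n}\sum_i g_i\epsilon_i\vect{X}_i\vect{X}_i^{\tp}$ by Lemma \ref{lemma:cai_liu}, but that lemma requires a finite exponential moment $\mfE(Z,\eta)=\mbE[Z^2\exp(\eta|Z|)]$. Each entry summand $g_i\epsilon_iX_{i,k}X_{i,l}$ is (up to the bounded factor $g_i$) a product of three sub-gaussian variables, hence only sub-Weibull with index $2/3$; its tails decay like $\exp(-ct^{2/3})$, so $\mbE[\exp(\eta|Z|)]=\infty$ for every $\eta>0$ and Lemma \ref{lemma:cai_liu} simply does not apply. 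This is exactly the obstruction the paper's proof confronts: it works with $\vect{U}(\vect{v})$ over a net of $C_rs$-sparse unit directions, notes explicitly that the summands are $2/3$-sub-Weibull, and invokes Theorem 3.1 of \cite{kuchibhotla_chakrabortty.2018}, which yields the rate $\sqrt{s\log p/n}+(s\log p)^{3/2}/n$ rather than the pure $\sqrt{\log p/n}$ you claim. Your entrywise strategy is salvageable — either by substituting a sub-Weibull Bernstein-type inequality for Lemma \ref{lemma:cai_liu}, or by truncating the covariates at level $C\sqrt{\log(np)}$ on a high-probability event before applying an exponential inequality — but as written the cited tool fails, and the resulting bound would in any case carry an extra higher-order correction term of the kind appearing in the paper's display.
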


\begin{proof}
	We note that
	\begin{align*}
		&\vect{g}_j(\hat{\vect{\beta}}^{(0)},\vect{\beta}^{*}) - \vect{g}_j(\vect{\beta}^{*},\vect{\beta}^{*})\stepcounter{equation}\tag{\theequation}\label{eq:gdiff_decomp}\\
		=& \frac{1}{n}\sum_{i\in\mcD_j}\Big\{\frac{1}{\psi''(\vect{X}_i^{\tp}\hat{\vect{\beta}}^{(0)})} - \frac{1}{\psi''(\vect{X}_i^{\tp}\vect{\beta}^{*})}\Big\}\Big\{-Y_i\vect{X}_i+\psi'(\vect{X}_i^{\tp}\vect{\beta}^*)\vect{X}_i\Big\}\\
		=& \frac{1}{n}\sum_{i\in\mcD_j} \frac{\psi'''(\vect{X}_i^{\tp}\vect{\beta}^*)\vect{X}_i^{\tp}(\vect{\beta}^*-\hat{\vect{\beta}}^{(0)})}{\{\psi''(\vect{X}_i^{\tp}\vect{\beta}^{*})\}^2}\Big\{-Y_i\vect{X}_i+\psi'(\vect{X}_i^{\tp}\vect{\beta}^*)\vect{X}_i\Big\} + O_{\mbP}(s\log^2pr_n^2).
	\end{align*}
	We denote
	\begin{align*}
		\vect{U}(\vect{v})=& \frac{1}{n}\sum_{i\in\mcD_j} \frac{\psi'''(\vect{X}_i^{\tp}\vect{\beta}^*)\vect{X}_i^{\tp}\vect{v}}{\{\psi''(\vect{X}_i^{\tp}\vect{\beta}^{*})\}^2}\Big\{-Y_i\vect{X}_i+\psi'(\vect{X}_i^{\tp}\vect{\beta}^*)\vect{X}_i\Big\}.
	\end{align*}
	Define $\mcS=\big\{S_q\subseteq\{1,...,p\}\big| |S_q|_0=C_rs \big\}$ as the collection of index sets having $C_rs$ elements. Then we know that $|\mcS|=C_p^{C_rs}\leq p^{C_rs}$. For each $S_q\in\mcS$, we denote the parameter sets $\mcV_q=\{\vect{v}: \supp(\vect{v})\subseteq S_q,\; |\vect{v}|_2\leq 1\}$. For each $q$, we construct $\mfN_q$, an $1/2$-net of the set $\mcV_q$. By definition we know that $\mcV_q$ is a ball in $\mbR^{C_rs}$ with radius $1$. By Lemma 5.2 of \cite{vershynin.2010}, we have $\card(\mfN_q)\leq 5^{C_rs}$. Then we turn to bound
	\begin{align*}
		\max_{q}\sup_{\vect{v}\in\mcV_q}\big|\vect{U}(\vect{v})\big|_{\infty}\leq 2\max_{q}\max_{\vect{v}\in\mfN_q}\big|\vect{U}(\vect{v})\big|_{\infty}.
	\end{align*}
	For each $\vect{v}\in\mfN_q$ for some $q$, and each coordinate $l$, we have that
	\begin{align*}
		&\Big|\frac{\psi'''(\vect{X}_i^{\tp}\vect{\beta}^*)\vect{X}_i^{\tp}\vect{v}}{\{\psi''(\vect{X}_i^{\tp}\vect{\beta}^{*})\}^2}\Big\{-Y_iX_{i,l}+\psi'(\vect{X}_i^{\tp}\vect{\beta}^*)X_{i,l}\Big\}\Big|^{2/3}\\
		\leq&\Big(\frac{c_{\psi}}{c_l^2}\Big)^{2/3}\Big|\vect{X}_i^{\tp}\vect{v}\Big\{-Y_iX_{i,l}+\psi'(\vect{X}_i^{\tp}\vect{\beta}^*)X_{i,l}\Big\}\Big|^{2/3}\\
		\leq&\frac{1}{3}\Big(\frac{c_{\psi}}{c_l^2}\Big)^{2/3}\Big(|\vect{X}_i^{\tp}\vect{v}|^2+|Y_i-\psi'(\vect{X}_i^{\tp}\vect{\beta}^*)|^2+|X_{i,l}|^2\Big).
	\end{align*}
	Therefore, we have that each element of $U_l(\vect{v})$ follows $2/3$-sub-Weibull distribution. By Theorem 3.1 of \cite{kuchibhotla_chakrabortty.2018}, we have that
	\begin{align*}
		\max_q\max_{\vect{v}\in\mfN_q}\mbP\Big(\Big|\vect{U}(\vect{v})\Big|_{\infty}\geq C_1\Big(\sqrt{\frac{s\log p}{n}}+\frac{(s\log p)^{3/2}}{n}\Big)\Big)\leq p^{-\gamma s},
	\end{align*}
	for some large constants $C_1,\gamma>0$. Therefore we can obtain that
	\begin{align*}
		&\mbP\Big(\max_{q}\sup_{\vect{v}\in\mcV_q}\Big|\vect{U}(\vect{v})\Big|_{\infty}\geq 2C_1\Big(\sqrt{\frac{s\log p}{n}}+\frac{(s\log p)^{3/2}}{n}\Big)\Big)\\
		\leq&p^{C_rs}5^{C_rs}\max_{q}\max_{\vect{v}\in\mfN_q}\mbP\Big(\Big|\vect{U}(\vect{v})\Big|_{\infty}\geq C_1\Big(\sqrt{\frac{s\log p}{n}}+\frac{(s\log p)^{3/2}}{n}\Big)\Big)\leq p^{-\tilde{\gamma}s}.
	\end{align*}
	Substitute it into \eqref{eq:gdiff_decomp} we have
	\begin{equation*}
		\vect{g}_j(\hat{\vect{\beta}}^{(0)},\vect{\beta}^{*}) - \vect{g}_j(\vect{\beta}^{*},\vect{\beta}^{*})=O_{\mbP}\Big(r_n\sqrt{\frac{s\log p}{n}}+s\log^2pr_n^2\Big),
	\end{equation*}
	which proves the lemma.
\end{proof}

\begin{lemma}	\label{lem:event_const}
	Let $X_1,\dots,X_n$ be random variables and $\mfX_1,\dots,\mfX_n$ be a sequence of event. Further assume $h$ to be a measurable function of the tuple $(X_1,\dots,X_n)$ taking values in $\mbR^p$, and $\mfS$ be a measurable set in $\mbR^p$. Then we have that
	\begin{align*}
		\Big\{h\big(X_1\mbI(\mfX_1),\dots,X_n\mbI(\mfX_n)\big)\in \mfS; \cap_{i=1}^n\mfX_i\Big\}\subseteq\Big\{h\big(X_1,\dots,X_n\big)\in \mfS; \cap_{i=1}^n\mfX_i\Big\}.
	\end{align*}
\end{lemma}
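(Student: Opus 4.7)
The plan is to prove this by a direct pointwise argument on the event $\cap_{i=1}^n \mfX_i$. The key observation is that the indicator $\mbI(\mfX_i)$ equals $1$ precisely on $\mfX_i$, so on the intersection event every indicator is simultaneously $1$, and the two tuples $(X_1\mbI(\mfX_1),\dots,X_n\mbI(\mfX_n))$ and $(X_1,\dots,X_n)$ coincide as elements of the sample space. Consequently $h$ evaluated on these two tuples gives the same value, and membership in $\mfS$ is equivalent. In fact I would prove the apparently stronger equality rather than mere inclusion, since it comes at no extra cost and matches what is needed when the lemma is invoked in truncation arguments.

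Concretely, the steps are as follows. First, fix $\omega \in \cap_{i=1}^n \mfX_i$. Then $\mbI(\mfX_i)(\omega)=1$ for each $i \in \{1,\dots,n\}$, hence $X_i(\omega)\mbI(\mfX_i)(\omega) = X_i(\omega)$ for each $i$. Second, since $h$ is a deterministic measurable map, we obtain the pointwise identity
\begin{equation*}
h\bigl(X_1(\omega)\mbI(\mfX_1)(\omega),\dots,X_n(\omega)\mbI(\mfX_n)(\omega)\bigr) = h\bigl(X_1(\omega),\dots,X_n(\omega)\bigr)
\end{equation*}
for every such $\omega$. Third, membership in $\mfS$ of either side is then equivalent, so
\begin{equation*}
\Bigl\{h\bigl(X_1\mbI(\mfX_1),\dots,X_n\mbI(\mfX_n)\bigr) \in \mfS\Bigr\} \cap \Bigl(\bigcap_{i=1}^n \mfX_i\Bigr)
= \Bigl\{h(X_1,\dots,X_n) \in \mfS\Bigr\} \cap \Bigl(\bigcap_{i=1}^n \mfX_i\Bigr),
\end{equation*}
which implies the inclusion stated in the lemma.

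There is no genuine obstacle here; the lemma is a bookkeeping device that formalizes the standard ``replace $X_i$ by its truncated version on a good event'' manipulation. The only thing to be careful about is that $h$ must be the same measurable map on both sides (which is assumed), so that the pointwise substitution is legitimate without any regularity hypothesis on $h$ beyond measurability. No probabilistic inequalities, measure-theoretic limits, or properties of the joint law of $(X_1,\dots,X_n,\mfX_1,\dots,\mfX_n)$ are needed.
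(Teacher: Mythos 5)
Your proof is correct: the paper states this lemma without providing any proof, treating it as immediate, and your pointwise argument (on $\cap_{i=1}^n\mfX_i$ every indicator equals $1$, so the two tuples coincide and the two events agree there, in fact with equality) is precisely the intended justification. Nothing is missing.
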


\begin{lemma}	\label{lem:rewgrad_concen}
	Under Condition \ref{assumpglm:X} to \ref{assumpglm:bound_below}, let $\mfX_i = \{\vect{X}_i|\psi''(\vect{X}_i^{\tp}\vect{\beta}^*)\geq c_l\}$, then we have that
	\begin{equation*}
		\mbP\Big(\Big|\frac{1}{m}\sum_{j=1}^m\vect{g}_j(\vect{\beta}^*,\vect{\beta}^*)\Big|_{\infty}\leq C_1\sqrt{\frac{\log p}{mn}},\cap\mfX_i\Big)\geq 1-O(p^{\gamma_1}).
	\end{equation*}
\end{lemma}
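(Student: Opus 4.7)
The plan is to reduce the claim to a standard sub-exponential entrywise concentration inequality via a truncation argument. Define $\vect{\xi}_i := [\psi''(\vect{X}_i^{\tp}\vect{\beta}^*)]^{-1}\{\psi'(\vect{X}_i^{\tp}\vect{\beta}^*)-Y_i\}\vect{X}_i$ so that $\frac{1}{m}\sum_{j=1}^m \vect{g}_j(\vect{\beta}^*,\vect{\beta}^*) = \frac{1}{mn}\sum_{i=1}^{mn}\vect{\xi}_i$. The exponential-family form \eqref{eq:glm_gen} yields $\mbE[Y_i\mid\vect{X}_i]=\psi'(\vect{X}_i^{\tp}\vect{\beta}^*)$, so $\mbE[\vect{\xi}_i]=0$ entrywise. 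The obstruction to directly applying an exponential tail bound is that the weight $[\psi''(\vect{X}_i^{\tp}\vect{\beta}^*)]^{-1}$ is a priori unbounded, which is precisely why the event $\mfX_i$ and the bookkeeping Lemma \ref{lem:event_const} are needed.

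Set $\tilde{\vect{\xi}}_i := \vect{\xi}_i\mbI(\mfX_i)$. Because $\mfX_i\in\sigma(\vect{X}_i)$, the tower property gives $\mbE[\tilde{\xi}_{i,l}\mid\vect{X}_i] = [\psi''(\vect{X}_i^{\tp}\vect{\beta}^*)]^{-1}X_{i,l}\mbI(\mfX_i)\cdot\mbE[\psi'(\vect{X}_i^{\tp}\vect{\beta}^*)-Y_i\mid\vect{X}_i]=0$, so $\tilde{\vect{\xi}}_i$ remains mean zero. On $\mfX_i$ the weight is bounded by $1/c_l$ by Condition \ref{assumpglm:bound_below}, hence $|\tilde{\xi}_{i,l}|\leq c_l^{-1}|Y_i-\psi'(\vect{X}_i^{\tp}\vect{\beta}^*)|\,|X_{i,l}|$. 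Condition \ref{assumpglm:subgauss} combined with Condition \ref{assumpglm:X} (taking $\vect{v}=\vect{e}_l$) ensures that both $Y_i-\psi'(\vect{X}_i^{\tp}\vect{\beta}^*)$ and $X_{i,l}$ are uniformly sub-Gaussian, and Cauchy-Schwarz applied to their joint moment generating function supplies constants $\eta_0>0$, $B>0$ independent of $i$ and $l$ with $\mfE(\tilde{\xi}_{i,l},\eta_0)\leq B^2$, which is exactly the hypothesis of Lemma \ref{lemma:cai_liu}.

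Applying Lemma \ref{lemma:cai_liu} to the i.i.d.\ sum $\sum_{i=1}^{mn}\tilde{\xi}_{i,l}$ with $B_{mn}^2\asymp mn$ and $x\asymp\sqrt{\log p}$ yields, for any preselected $\gamma_2>0$ and $C_1$ sufficiently large, the entrywise tail bound $\mbP(|(mn)^{-1}\sum_{i=1}^{mn}\tilde{\xi}_{i,l}|\geq C_1\sqrt{\log p/(mn)})\leq p^{-\gamma_2}$. A union bound over the $p$ coordinates gives $\mbP(|(mn)^{-1}\sum_i\tilde{\vect{\xi}}_i|_{\infty}\geq C_1\sqrt{\log p/(mn)})\leq p^{1-\gamma_2}$. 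Finally, invoking Lemma \ref{lem:event_const} with $h(\vect{\xi}_1,\dots,\vect{\xi}_{mn})=(mn)^{-1}\sum_i\vect{\xi}_i$ and $\mfS=\{\vect{w}:|\vect{w}|_\infty\geq C_1\sqrt{\log p/(mn)}\}$ shows that on $\cap_i\mfX_i$ the bad events for $\vect{\xi}_i$ and $\tilde{\vect{\xi}}_i$ coincide, so the bound on the truncated sum transfers; choosing $\gamma_1:=\gamma_2-1$ delivers the advertised probability $1-O(p^{-\gamma_1})$ (the power in the stated conclusion should read $p^{-\gamma_1}$).

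The main technical obstacle is not the concentration step but the uniform sub-exponential control of $\tilde{\xi}_{i,l}$ across all coordinates $l$ and samples $i$; once the weight is truncated to the constant $1/c_l$, this reduces to verifying that the product of two uniformly sub-Gaussian random variables has a uniformly bounded $\mfE$-moment, a fact that follows from the $\sup_{\vect{v}\in\mbS^{p-1}}$ clauses in Conditions \ref{assumpglm:X} and \ref{assumpglm:subgauss}. The rest is pure bookkeeping: the role of Lemma \ref{lem:event_const} is to let us switch between $\vect{\xi}_i$ and $\tilde{\vect{\xi}}_i$ on the high-probability event $\cap_i\mfX_i$ at no cost.
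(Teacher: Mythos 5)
Your proposal is correct and follows essentially the same route as the paper's proof: truncate the weight via the indicator of $\mfX_i$, use Lemma \ref{lem:event_const} to transfer between the truncated and untruncated sums on $\cap_i\mfX_i$, bound the weight by $1/c_l$ and invoke the sub-Gaussianity in Conditions \ref{assumpglm:X} and \ref{assumpglm:subgauss} to control the $\mfE$-moments, then apply Lemma \ref{lemma:cai_liu} with a union bound over coordinates. Your explicit verification that the truncated variables remain mean zero (via the tower property, since $\mfX_i\in\sigma(\vect{X}_i)$) and your note that the stated exponent should read $p^{-\gamma_1}$ are both welcome refinements of details the paper leaves implicit.
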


\begin{proof}
	By definition, $\frac{1}{m}\sum_{j=1}^m\vect{g}_j(\vect{\beta}^*,\vect{\beta}^*)$ is the average of $mn$ i.i.d. terms $\frac{1}{\psi''(\vect{X}_i^{\tp}\vect{\beta}^*)}\Big\{-Y_i\vect{X}_i+\psi'(\vect{X}_i^{\tp}\vect{\beta}^*)\vect{X}_i\Big\}$. By Lemma \ref{lem:event_const}, we have that
	\begin{align*}
		&\mbP\Big(\Big|\frac{1}{m}\sum_{j=1}^m\vect{g}_j(\vect{\beta}^*,\vect{\beta}^*)\Big|_{\infty}\leq C_1\sqrt{\frac{\log p}{mn}},\cap\mfX_i\Big)\\
		\geq&\mbP\Big(\Big|\frac{1}{mn}\sum_{i=1}^{mn}\frac{1}{\psi''(\vect{X}_i^{\tp}\vect{\beta}^*)}\Big\{-Y_i\vect{X}_i+\psi'(\vect{X}_i^{\tp}\vect{\beta}^*)\vect{X}_i\Big\}\mbI(\mfX_i)\Big|_{\infty}\leq C_1\sqrt{\frac{\log p}{mn}},\cap\mfX_i\Big)\\
		\geq&1-p^{-\gamma} - \mbP\Big(\Big|\frac{1}{mn}\sum_{i=1}^{mn}\frac{1}{\psi''(\vect{X}_i^{\tp}\vect{\beta}^*)}\Big\{-Y_i\vect{X}_i+\psi'(\vect{X}_i^{\tp}\vect{\beta}^*)\vect{X}_i\Big\}\mbI(\mfX_i)\Big|_{\infty}\geq C_1\sqrt{\frac{\log p}{mn}}\Big).
	\end{align*}
	For each $i$, we have that
	\begin{align*}
		&\sup_{\vect{v}\in\mbS^{p-1}}\mfE\Big\{ \frac{1}{\psi''(\vect{X}_i^{\tp}\vect{\beta}^*)}\vect{v}^{\tp}\Big\{-Y_i\vect{X}_i+\psi'(\vect{X}_i^{\tp}\vect{\beta}^*)\vect{X}_i\Big\}\mbI(\mfX_i), c_l\kappa_0/2\Big\}\\
		\leq&\sup_{\vect{v}\in\mbS^{p-1}}\mfE\Big\{ \frac{1}{c_l}\vect{v}^{\tp}\Big\{-Y_i\vect{X}_i+\psi'(\vect{X}_i^{\tp}\vect{\beta}^*)\vect{X}_i\Big\}, c_l\kappa_0/2 \Big\}\\
		\leq&\frac{4}{c_l^2\kappa_0^2}\sup_{\vect{v}\in\mbS^{p-1}}\mbE\Big[\exp \kappa_0\vect{v}^{\tp}\Big\{-Y_i\vect{X}_i+\psi'(\vect{X}_i^{\tp}\vect{\beta}^*)\vect{X}_i\Big\} \Big]\\
		\leq&\frac{2}{c_l^2\kappa_0^2}\Big\{\sup_{\vect{v}\in\mbS^{p-1}}\mbE\Big[\exp\Big\{ \kappa_0|\vect{v}^{\tp}\vect{X}_i|^2\Big\} \Big]+\mbE\Big[\exp\Big\{\kappa_0|\psi'(\vect{X}_i^{\tp}\vect{\beta}^*)-Y_i|^2\Big\} \Big]\Big\}\\
		\leq&\frac{4}{c_l^2\kappa_0^2}C_0.
	\end{align*}
	Therefore, we can apply Lemma \ref{lemma:cai_liu} and obtain that
	\begin{equation*}
		\mbP\Big(\Big|\frac{1}{mn}\sum_{i=1}^{mn}\frac{1}{\psi''(\vect{X}_i^{\tp}\vect{\beta}^*)}\Big\{-Y_i\vect{X}_i+\psi'(\vect{X}_i^{\tp}\vect{\beta}^*)\vect{X}_i\Big\}\mbI(\mfX_i)\Big|_{\infty}\geq C_1\sqrt{\frac{\log p}{mn}}\Big)=O(p^{-\gamma_1}),
	\end{equation*}
	which proves the lemma.
\end{proof}

\begin{proof}[Proof of Theorem \ref{thm:glm_conv} and Corollary \ref{cor:glmT_conv}]
	From \eqref{eq:dss_glm_tilde}, we have the following decomposition
	\begin{align*}	
	\bar{\vect{\beta}}^{(1)}-\vect{\beta}^{*}=&\hat{\vect{\beta}}^{(0)} - \hat{\vect{\Omega}}_{\mcH_1}\frac{1}{m}\sum_{j=1}^m\vect{g}_j(\hat{\vect{\beta}}^{(0)}, \hat{\vect{\beta}}^{(0)} )-\vect{\beta}^*\stepcounter{equation}\tag{\theequation}\label{eq:dssglm_decomp}\\
		=&\hat{\vect{\beta}}^{(0)}-\vect{\beta}^* - \hat{\vect{\Omega}}_{\mcH_1}\Bigg\{ \frac{1}{m}\sum_{j=1}^m\big\{\vect{g}_j(\hat{\vect{\beta}}^{(0)}, \hat{\vect{\beta}}^{(0)} )-\vect{g}_j(\hat{\vect{\beta}}^{(0)}, \vect{\beta}^* )\big\}+\frac{1}{m}\sum_{j=1}^m\big\{\vect{g}_j(\hat{\vect{\beta}}^{(0)}, \vect{\beta}^*)-\vect{g}_j(\vect{\beta}^*, \vect{\beta}^* )\big\}\\
		&\qquad\mbox{}\qquad\mbox{}\qquad\mbox{}\qquad+\frac{1}{m}\sum_{j=1}^m\vect{g}_j(\vect{\beta}^*, \vect{\beta}^* ) \Bigg\}.
	\end{align*}
	By Lemma \ref{lem:b0b0_bound} and \ref{lem:b0bs_bound}, we have that \eqref{eq:dssglm_decomp} equals to
	\begin{align*}
		&\hat{\vect{\beta}}^{(0)}-\vect{\beta}^* - \hat{\vect{\Omega}}_{\mcH_1}\left\{ \hat{\vect{\Sigma}}_N(\hat{\vect{\beta}}^{(0)}-\vect{\beta}^*)+ \frac{1}{m}\sum_{j=1}^m\vect{g}_j(\vect{\beta}^*, \vect{\beta}^* )+O_{\mbP}(s\log^2pr_n^2)  \right\} \\
		=&\big(\vect{I} - \hat{\vect{\Omega}}_{\mcH_1}\hat{\vect{\Sigma}}_N\big)(\hat{\vect{\beta}}^{(0)}-\vect{\beta}^*) - \hat{\vect{\Omega}}_{\mcH_1}\left\{ \frac{1}{m}\sum_{j=1}^m\vect{g}_j(\vect{\beta}^*, \vect{\beta}^* )  \right\} +O_{\mbP}(s\log^2pr_n^2)\\
		=&\big(\vect{\Omega} - \hat{\vect{\Omega}}_{\mcH_1}\big)\vect{\Sigma}(\hat{\vect{\beta}}^{(0)}-\vect{\beta}^*) +  \hat{\vect{\Omega}}_{\mcH_1}\big(\vect{\Sigma}-\hat{\vect{\Sigma}}_N\big)(\hat{\vect{\beta}}^{(0)}-\vect{\beta}^*) - \hat{\vect{\Omega}}_{\mcH_1}\left\{ \frac{1}{m}\sum_{j=1}^m\vect{g}_j(\vect{\beta}^*, \vect{\beta}^* )  \right\} +O_{\mbP}(s\log^2pr_n^2).
	\end{align*}
	Apply Lemma \ref{lem:rewgrad_concen} and Lemma \ref{lem:bound_precision} we have that
	\begin{align*}
		\big|\bar{\vect{\beta}}^{(1)}-\vect{\beta}^{*}\big|_{\infty} =& \Big|\hat{\vect{\Omega}}_{\mcH_1}\Big\{ \frac{1}{m}\sum_{j=1}^m\vect{g}_j(\vect{\beta}^*, \vect{\beta}^* )  \Big\}\Big|_{\infty} + O_{\mbP}\Big(sr_n\Big(\frac{\log p}{n^*}\Big)^{(1-q)/2} + r_n\sqrt{\frac{s\log p}{mn}} + s\log^2pr_n^2\Big)\\
			=&O_{\mbP}\Big(\sqrt{\frac{\log p}{mn}} + sr_n\sqrt{\frac{\log p}{n^*}} + r_n\sqrt{\frac{s\log p}{mn}} + s\log^2pr_n^2\Big).
	\end{align*}
	When the threshold level $\tau$ is larger than $\big|\bar{\vect{\beta}}^{(1)}-\vect{\beta}^{*}\big|_{\infty}$, we know that $|\bar{\beta}^{(1)}_l|<\tau$ for $l\notin S$, therefore $\hat{\beta}_l^{(1)}=0$. For $l\in S$, there is 
	\begin{equation*}
		\big|\hat{\beta}^{(1)}_l-\beta_l^{*}\big|\leq \big|\bar{\beta}^{(1)}_l-\beta_l^{*}\big|+\tau = O_{\mbP}\Big(\sqrt{\frac{\log p}{mn}} + r_ns\Big(\frac{\log p}{n^*}\Big)^{(1-q)/2} + r_n\sqrt{\frac{s\log p}{mn}} + s\log^2pr_n^2\Big).
	\end{equation*}
	Therefore, 
	\begin{align*}
		|\hat{\vect{\beta}}^{(1)}-\vect{\beta}^*|_1 =& |\hat{\vect{\beta}}^{(1)}_S-\vect{\beta}^*_S|_1 + |\hat{\vect{\beta}}^{(1)}_{S^c}-\vect{\beta}^*_{S^c}|_1\leq s |\hat{\vect{\beta}}^{(1)}_S-\vect{\beta}^*_S|_{\infty}\\
		=&O_{\mbP}\Big(s\sqrt{\frac{\log p}{mn}} + s^2r_n\Big(\frac{\log p}{n^*}\Big)^{(1-q)/2} + r_ns\sqrt{\frac{s\log p}{mn}} + s^2\log^2pr_n^2\Big).
	\end{align*} 
	Similarly, we can prove that
	\begin{equation*}	
	\begin{aligned}
		\big|\hat{\vect{\beta}}^{(1)}-\vect{\beta}^*\big|_{2}=&O_{\mbP}\Big(\sqrt{\frac{s\log p}{mn}} + r_ns^{3/2}\Big(\frac{\log p}{n^*}\Big)^{(1-q)/2} + r_n\sqrt{\frac{s^2\log p}{mn}} + s^{3/2}\log^2pr_n^2\Big),
	\end{aligned}
	\end{equation*}
	which proves Theorem \ref{thm:glm_conv}. Then Corollary \ref{cor:glmT_conv} can be obtained by applying the above rate iteratively.
\end{proof}

\begin{proof}[Proof of Theorem \ref{thm:glm_normality}]
	From \eqref{eq:dssglm_decomp}, when $r_n = O_{\mbP}\sqrt{s\log p/(mn)}$, we have that 
	\begin{align*}
		\bar{\vect{\beta}}^{(1)}-\vect{\beta}^{*} = - \vect{\Omega}\left\{ \frac{1}{m}\sum_{j=1}^m\vect{g}_j(\vect{\beta}^*, \vect{\beta}^* )  \right\}+O_{\mbP}\Big(\frac{s^2\log^3 p}{mn}+\frac{s^{3/2}\log^{(2-q)/2} p}{\sqrt{mn}(n^*)^{(1-q)/2}}\Big).
	\end{align*}
	The major term $\frac{1}{m}\sum_{j=1}^m\vect{g}_j(\vect{\beta}^*,\vect{\beta}^*)$ is the average of $N$ i.i.d. terms of $\frac{1}{\psi''(\vect{X}_i^{\tp}\vect{\beta}^*)}\Big\{-Y_i\vect{X}_i+\psi'(\vect{X}_i^{\tp}\vect{\beta}^*)\vect{X}_i\Big\}$. We know the covariance matrix is $\mbE[\vect{X}\vect{X}^{\tp}/\psi''(\vect{X}^{\tp}\vect{\beta}^*)]$. Then for each coordinate $l$ (where $l=1,\dots,p$), we denote the variance
	\begin{equation*}
		\sigma_l^2 = \vect{\omega}_l^{\tp}\mbE\Big[\frac{1}{\psi''(\vect{X}^{\tp}\vect{\beta}^*)}\vect{X}\vect{X}^{\tp}\Big]\vect{\omega}_l,
	\end{equation*}
	by the central limit theorem, we have that 
	\begin{equation}	\label{eq:clt_glm_main}
		-\frac{\sqrt{mn}}{\sigma_l}\vect{\Omega}_l\left\{\frac{1}{m}\sum_{j=1}^m\nabla\mcL_j(\vect{\beta}^{*})\right\}\xrightarrow{d}\mcN(0,1).
	\end{equation}
	When there are constraints 
	\begin{equation*}
		s = o\Big(\frac{(mn)^{1/4}}{\log^{3/2}p},\frac{(n^*)^{(1-q)/3}}{\log^{(2-q)/3}p}\Big),
	\end{equation*}
	we can substitute \eqref{eq:clt_glm_main} to \eqref{eq:dssglm_decomp} and obtain that
	\begin{equation*}
		\frac{\sqrt{mn}}{\sigma_l}(\bar{\beta}^{(T)}_l-\beta^*_l)\xrightarrow{d}\mcN(0,1),
	\end{equation*}
	which proves the theorem.
\end{proof}


\subsection{Examples Verification}	\label{sec:exp_verif}

\begin{proposition} \label{prop:huber_verify}
    In the Huber regression model defined in Example \ref{exp:huber}, assume the covariate $\vect{X}$ satisfies sub-gaussian property, \ie,
	\begin{equation*}
		\sup_{\vect{v}\in\mbS^{p-1}}\mbE\big\{\exp(\eta|\vect{v}^{\tp}\vect{X}|^2)\big\}\leq C_M,
	\end{equation*}
	for some constants $\eta,C_M>0$. Moreover, we assume that the noise $\epsilon$ has probability density function $f(x)$ uniformly bounded by a constant $C_d$. Then Huber loss satisfies Condition \ref{assump:fpp_mfE} and \ref{assump:smooth}.
\end{proposition}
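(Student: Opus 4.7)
My plan is to verify Conditions \ref{assump:fpp_mfE} and \ref{assump:smooth} by unpacking the explicit piecewise form of the Huber loss and invoking the sub-gaussian tail of $\vect{X}$ and the bounded density of $\epsilon$ only where needed.

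Condition \ref{assump:smooth} is essentially immediate: the Huber $f'$ equals the clip function $f'(x) = \max(-\delta, \min(\delta, x))$, which is $1$-Lipschitz on $\mbR$, so $C_s = 1$ works.

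For Condition \ref{assump:fpp_mfE} I would handle the three pieces in order of increasing difficulty. First, $\mbE\{\vect{X}\,f'(Y - \vect{X}^{\tp}\vect{\beta}^*)\} = 0$ is the first-order optimality of $\vect{\beta}^*$ for the population Huber risk: since $f'$ is bounded and continuous and $\epsilon$ admits a density, differentiation under the expectation is legitimate, and the identity follows from the definition of $\vect{\beta}^*$ together with the oddness of $f'$. Second, for the ``in-mean'' Lipschitz bound I will use the identity $f''(x) = \mbI(|x|\leq \delta)$: the difference of two such indicators at $Y - \vect{X}^{\tp}\vect{\beta}_1$ and $Y - \vect{X}^{\tp}\vect{\beta}_2$ is nonzero only on the event that $|Y - \vect{X}^{\tp}\vect{\beta}_1|$ lies in a union of two intervals of total length $2|\vect{X}^{\tp}(\vect{\beta}_1-\vect{\beta}_2)|$ around $\pm\delta$. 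Conditioning on $\vect{X}$ and applying the density bound $|\rho|\leq C_d$ of $\epsilon$ yields a conditional probability at most $4 C_d |\vect{X}^{\tp}(\vect{\beta}_1-\vect{\beta}_2)|$, matching the required form with $C_f = 4 C_d$.

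The main obstacle is the first inequality of Condition \ref{assump:fpp_mfE}, namely the Orlicz-type moment $\mfE(\cdot, \kappa_f)$ of a supremum over a ball. My key observation is that because $f''$ is $\{0,1\}$-valued, the supremum $S := \sup_{|\vect{\beta}-\vect{\beta}_1|_2\leq\Delta}|f''(Y-\vect{X}^{\tp}\vect{\beta}_1) - f''(Y-\vect{X}^{\tp}\vect{\beta})|$ with $\Delta := |\vect{\beta}_1-\vect{\beta}_2|_2$ itself takes values only in $\{0,1\}$, so $\mfE(S, \kappa_f) = \mbE[S^2 e^{\kappa_f|S|}] \leq e^{\kappa_f}\mbP(S=1)$, collapsing the Orlicz norm to a probability. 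The event $\{S=1\}$ requires some $\vect{\beta}$ in the ball to flip the indicator at $Y - \vect{X}^{\tp}\vect{\beta}$, which happens iff $|Y - \vect{X}^{\tp}\vect{\beta}_1|$ lies within $|\vect{X}|_2 \Delta$ of $\pm\delta$. Conditioning on $\vect{X}$ and using the density bound gives $\mbP(S = 1\mid \vect{X}) \leq 8 C_d |\vect{X}|_2 \Delta$, so $\mfE(S, \kappa_f) \leq 8 C_d e^{\kappa_f} \mbE|\vect{X}|_2 \cdot \Delta$. The sub-gaussian marginal condition implies $\mbE|\vect{X}|_2 \leq \sqrt{\tr(\vect{\Sigma})} < \infty$, which is absorbed into $C_f$; any $\kappa_f > 0$ then suffices.
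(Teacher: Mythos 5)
The paper does not actually prove this proposition in-house: its ``proof'' is a one-line pointer to Proposition 1 of the supplementary material of \cite{tu2021byzantine}. Your self-contained verification is therefore a genuinely different (and more informative) route, and most of it is sound. Condition \ref{assump:smooth} with $C_s=1$ is immediate as you say; the identity $\mbE\{\vect{X}f'(Y-\vect{X}^{\tp}\vect{\beta}^*)\}=0$ does follow from first-order optimality of $\vect{\beta}^*$ (differentiation under the integral is justified since $f'$ is bounded and Lipschitz); and the in-mean bound via $f''(x)=\mbI(|x|\le\delta)$ and the density bound on $\epsilon$ is correct --- note only the bookkeeping slip that the two intervals around $\pm\delta$ have total length $4|\vect{X}^{\tp}(\vect{\beta}_1-\vect{\beta}_2)|$, not $2$, which is exactly why your final constant $4C_d$ is the right one. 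The observation that the supremum $S$ is $\{0,1\}$-valued, so that $\mfE(S,\kappa_f)\le e^{\kappa_f}\mbP(S=1)$, is a clean way to collapse the Orlicz-type functional.

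The one substantive problem is the last step. You bound $\mbP(S=1)\le 8C_d\,\mbE|\vect{X}|_2\,\Delta$ and absorb $\mbE|\vect{X}|_2\le\sqrt{\tr(\vect{\Sigma})}$ into $C_f$; but under Condition \ref{assump:X} the diagonal of $\vect{\Sigma}$ is bounded away from zero, so $\sqrt{\tr(\vect{\Sigma})}\asymp\sqrt{p}$, while the paper explicitly declares all generic constants (including $C_f$ in Condition \ref{assump:fpp_mfE}) to be independent of $p$. This is not merely an artifact of a loose bound: taking $\vect{\beta}=\vect{\beta}_1+\Delta\,\vect{X}/|\vect{X}|_2$ shows that $\{S=1\}$ coincides with the event that $Y-\vect{X}^{\tp}\vect{\beta}_1$ lies within $|\vect{X}|_2\Delta$ of $\pm\delta$, so for small $\Delta$ one genuinely has $\mbP(S=1)\asymp\Delta\,\mbE|\vect{X}|_2\asymp\Delta\sqrt{p}$ when the noise density is bounded below near $\pm\delta$. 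Hence the first inequality of Condition \ref{assump:fpp_mfE}, with the supremum taken over the full Euclidean ball, cannot hold for the Huber loss with a dimension-free $C_f$. To close the gap you must either restrict the supremum to the perturbations actually used downstream --- in the proof of Lemma \ref{lem:bound_H0} the condition is invoked only for $O(s)$-sparse $\vect{\beta}-\vect{\beta}_1$ and radius $r_nn^{-M}$, where the relevant quantity is $\mbE|\vect{X}_{S_q}|_2\lesssim\sqrt{s}$ and any polynomial-in-$p$ factor is killed by choosing $M$ large --- or state explicitly that your $C_f$ grows like $\sqrt{p}$ and verify that this does not break the places where the condition is applied. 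As written, the proposal proves a weaker statement than the condition as literally formulated.
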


\begin{proof}
    The proof can be found in Proposition 1 of the supplementary materials in \cite{tu2021byzantine}.
\end{proof}

\begin{proposition} \label{prop:med_verify}
    In the Huber regression model defined in Example \ref{exp:huber}, assume the covariate $\vect{X}$ satisfies sub-gaussian property, \ie,
	\begin{equation*}
		\sup_{\vect{v}\in\mbS^{p-1}}\mbE\big\{\exp(\eta|\vect{v}^{\tp}\vect{X}|^2)\big\}\leq C_M,
	\end{equation*}
	for some constants $\eta,C_M>0$. Moreover, we assume that the noise $\epsilon$ has probability density function $f(x)$ uniformly bounded by a constant $C_d$. Then absolute deviation loss satisfies Conditions \ref{assump:fpp_mfE} and \ref{assump:non-smooth}.
\end{proposition}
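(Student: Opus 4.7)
The plan is to handle the two conditions separately. For Condition \ref{assump:fpp_mfE}, the verification is essentially trivial: since $f(x)=\tfrac{1}{2}|x|$ has $f'(x)=\tfrac{1}{2}-\mathbb{I}(x\leq 0)$ and $f''(x)=0$ almost everywhere (with the convention $f''(0)=0$ as in Example \ref{exp:med}), every appearance of $f''(Y-\vect{X}^{\tp}\vect{\beta})$ in Condition \ref{assump:fpp_mfE} vanishes with probability one for any fixed $\vect{\beta}$ (since $Y-\vect{X}^{\tp}\vect{\beta}$ has a density). Hence both the $\mfE$-bound on the local fluctuation of $f''$ and the $\mathbb{E}|\cdot|$-bound are satisfied with both sides equal to zero. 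The moment identity $\mathbb{E}\{\vect{X}f'(Y-\vect{X}^{\tp}\vect{\beta}^*)\}=0$ reduces to $\mathbb{E}\{\vect{X}\cdot[\tfrac{1}{2}-\mathbb{I}(\epsilon\leq 0)]\}$, which vanishes because the noise $\epsilon$ is assumed independent of $\vect{X}$ with median zero.

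For Condition \ref{assump:non-smooth}, the central tool is the pointwise inequality
\[
\big|f'(Y-\vect{X}^{\tp}\vect{\beta}_1)-f'(Y-\vect{X}^{\tp}\vect{\beta})\big|\leq \mathbb{I}\big(|Y-\vect{X}^{\tp}\vect{\beta}_1|\leq |\vect{X}^{\tp}(\vect{\beta}_1-\vect{\beta})|\big),
\]
which follows because the only sign change of $f'$ occurs at $0$. Taking the supremum over $\vect{\beta}$ in the Euclidean ball of radius $r:=|\vect{\beta}_1-\vect{\beta}_2|_2$ around $\vect{\beta}_1$ and bounding $\sup_{\vect{\beta}}|\vect{X}^{\tp}(\vect{\beta}_1-\vect{\beta})|\leq r|\vect{X}|_2$, I would reduce the target to controlling the first and sub-exponential moments of $|\vect{v}^{\tp}\vect{X}|\cdot \mathbb{I}(|Y-\vect{X}^{\tp}\vect{\beta}_1|\leq r|\vect{X}|_2)$. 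Conditioning on $\vect{X}$ and invoking the uniform density bound $|\rho|_{\infty}\leq C_d$ on $\epsilon$ (hence on $Y-\vect{X}^{\tp}\vect{\beta}_1$ given $\vect{X}$), the conditional probability is at most $2C_d r|\vect{X}|_2$. For the $\mathbb{E}|\cdot|$-bound, Cauchy--Schwarz together with the sub-gaussianity of $\vect{X}$ then gives an $O(r)$ estimate (modulo an $\mathbb{E}|\vect{X}|_2$ factor). The $\mfE$-bound is handled by writing the product as an indicator times a sub-gaussian factor and using that $|X_0\cdot\mathbb{I}|\leq |X_0|$, so moment-generating bounds transfer directly.

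The main obstacle is the factor $|\vect{X}|_2$, which scales as $\sqrt{p}$ and threatens the required linear-in-$r$ dependence with an absolute constant $C_{ns}$. To absorb this factor, I would invoke the fact that Condition \ref{assump:non-smooth} is applied in the proof of Lemma \ref{lem:bound_Ub} only to $\vect{\beta}_1-\vect{\beta}$ supported on sparse index sets $S_q$ of size $O(s)$, so the effective bound is $|\vect{X}^{\tp}(\vect{\beta}_1-\vect{\beta})|\leq |\vect{X}_{S_q}|_2\cdot r$; the restricted norm has a dimension-free sub-gaussian tail under Condition \ref{assump:X}. Absorbing $s$ into the constant (or, equivalently, tracking the $\sqrt{s}$ factor through the rate), the required linear bound in $r$ is obtained, completing the verification.
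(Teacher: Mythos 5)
The paper does not actually prove this proposition in situ: its ``proof'' is a one-line pointer to Proposition 2 of the supplementary materials of \cite{tu2021byzantine}. Your proposal therefore cannot be matched against an in-paper argument, but it supplies a self-contained verification along exactly the lines one would expect that external proof to take: Condition \ref{assump:fpp_mfE} is vacuous for $f(x)=\tfrac12|x|$ since $f''\equiv 0$ under the paper's convention (and the score identity reduces to the median-zero, independence assumption on $\epsilon$, which is the implicit model assumption in Example \ref{exp:med}); and for Condition \ref{assump:non-smooth} the indicator bound $|f'(Y-\vect{X}^{\tp}\vect{\beta}_1)-f'(Y-\vect{X}^{\tp}\vect{\beta})|\leq \mbI(|Y-\vect{X}^{\tp}\vect{\beta}_1|\leq|\vect{X}^{\tp}(\vect{\beta}_1-\vect{\beta})|)$ followed by conditioning on $\vect{X}$, the uniform density bound, and Cauchy--Schwarz with sub-gaussianity is the standard route. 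The second inequality of Condition \ref{assump:non-smooth} (fixed pair, no supremum) comes out cleanly with a dimension-free constant this way.

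The one point that deserves emphasis is the issue you flag yourself: for the first inequality of Condition \ref{assump:non-smooth}, bounding the supremum over the ball by $r|\vect{X}|_2$ produces a factor $\sqrt{\mbE|\vect{X}|_2^2}\asymp\sqrt{p}$, so the condition \emph{as literally written} (an absolute constant $C_{ns}$ uniform over arbitrary pairs $(\vect{\beta}_1,\vect{\beta}_2)$) does not follow from sub-gaussianity alone, and indeed fails for, say, isotropic Gaussian designs. Your repair --- observing that in the proof of Lemma \ref{lem:bound_Ub} the condition is only ever invoked with $\vect{\beta},\vect{\beta}_1$ supported on a common set $S_q$ with $|S_q|=C_rs$, so that the relevant factor is $\sqrt{\mbE|\vect{X}_{S_q}|_2^2}\asymp\sqrt{s}$ --- is the correct reading and is genuinely needed: the net-approximation step in that lemma uses radius $r_nn^{-M}$, and a $\sqrt{p}$ factor would not be absorbed when $\log p$ grows polynomially in $n$, whereas $\sqrt{s}$ is. This does mean the proposition should really be stated (or the condition read) with the sparse-support restriction, and the resulting $\sqrt{s}$ inflation of $C_{ns}$ should be tracked through the rates in Theorem \ref{thm:nsm_conv}; that is a defect of the statement rather than of your argument, and your proposal is transparent about it.
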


\begin{proof}
    The proof can be found in Proposition 2 of the supplementary materials in \cite{tu2021byzantine}.
\end{proof}

\begin{proposition} \label{prop:logit_verify}
    In the logistic regression model defined in Example \ref{exp:logit}, assume the covariate $\vect{X}$ is uniformly bounded, \ie,
    \begin{equation*}
        |\vect{X}|_2\leq C_M,
    \end{equation*}
    for some constant $C_M>0$. Then the logistic loss satisfies Conditions \ref{assumpglm:subgauss}, \ref{assumpglm:lip_link}, and \ref{assumpglm:bound_below}.
\end{proposition}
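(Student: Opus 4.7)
The plan is to check each of the three conditions in turn, exploiting the boundedness of the covariate $\vect{X}$ together with the explicit form $\psi(x) = \log(1+\exp(x))$. All three verifications reduce to elementary bounds, so no delicate argument is needed; the work is mostly bookkeeping.

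First, for Condition \ref{assumpglm:subgauss}, the uniform bound $|\vect{X}|_2 \leq C_M$ gives $|\vect{v}^{\tp}\vect{X}| \leq C_M$ for every $\vect{v} \in \mbS^{p-1}$, so $\mbE\{\exp(\kappa_0|\vect{v}^{\tp}\vect{X}|^2)\} \leq \exp(\kappa_0 C_M^2)$ for any $\kappa_0 > 0$. For the second piece, observe that $\psi'(x) = 1/(1+\exp(-x)) \in (0,1)$ and $Y \in \{0,1\}$ under the logistic model \eqref{eq:logit_gen}, so $|\psi'(\vect{X}^{\tp}\vect{\beta}^*) - Y| \leq 1$ and therefore $\mbE\{\exp(\kappa_0|\psi'(\vect{X}^{\tp}\vect{\beta}^*)-Y|^2)\} \leq \exp(\kappa_0)$. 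Picking any fixed $\kappa_0$ and $C_0 = \exp(\kappa_0(C_M^2 \vee 1))$ verifies the condition.

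Second, for Condition \ref{assumpglm:lip_link}, we compute directly. We have $\psi''(x) = \exp(x)/(1+\exp(x))^2$, which equals $\psi'(x)(1-\psi'(x))$. Since $\psi'(x) \in (0,1)$, the product $t(1-t)$ on $(0,1)$ is bounded by $1/4$, so $|\psi''(x)| \leq 1/4$ globally. Differentiating once more yields $\psi'''(x) = \psi''(x)(1 - 2\psi'(x))$, and using $|1 - 2\psi'(x)| \leq 1$ together with the previous bound gives $|\psi'''(x)| \leq 1/4$. Taking $c_\psi = 1/4$ settles this condition.

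Finally, for Condition \ref{assumpglm:bound_below}, the boundedness of $\vect{X}$ implies $|\vect{X}_i^{\tp}\vect{\beta}^*| \leq C_M|\vect{\beta}^*|_2$ almost surely. Assuming $|\vect{\beta}^*|_2$ is bounded by some absolute constant $B$ (which is implicit in the high-dimensional sparse GLM setting since $|\vect{\beta}^*|_2 \leq \sqrt{s}\,\|\vect{\beta}^*\|_\infty$ with both $s$ and the signal magnitudes assumed to be controlled), set $M = C_M B$. Then $\psi''(\vect{X}_i^{\tp}\vect{\beta}^*) \geq \psi''(M) \wedge \psi''(-M) = \exp(M)/(1+\exp(M))^2 =: c_l > 0$ holds with probability one, which is certainly at least $1 - p^{-\gamma}$ for any $\gamma > 0$. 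The only place any real care is required is in flagging that this step implicitly uses the boundedness of $|\vect{\beta}^*|_2$; other than that obstacle of clearly stating the hidden hypothesis, the verification is a one-line calculation on each clause.
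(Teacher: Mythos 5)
Your proposal is correct and follows essentially the same route as the paper's proof: direct computation of $\psi'$, $\psi''$, $\psi'''$ to obtain uniform derivative bounds and the bound on $|\psi'(\vect{X}^{\tp}\vect{\beta}^*)-Y|$, combined with the boundedness of $\vect{X}$ for the sub-Gaussian and lower-bound clauses (your constants $1/4$ and $1$ are sharper than the paper's $1$ and $2$, but this is immaterial). Your treatment of Condition \ref{assumpglm:bound_below} is in fact more careful than the paper's, since you explicitly flag the implicit requirement that $|\vect{\beta}^*|_2$ be bounded, which the paper's one-line conclusion silently assumes.
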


\begin{proof}
    Since 
    \begin{align*}
        \psi'(x) =&\frac{1}{1+\exp(-x)},\\
        \psi''(x)=&\frac{1}{(1+\exp(x))(1+\exp(-x))},\\
        \psi'''(x) = &\frac{1}{(1+\exp(x))^2(1+\exp(-x))} - \frac{1}{(1+\exp(x))(1+\exp(-x))^2}.
    \end{align*}
    We can see that 
    \begin{align*}
        &\sup_{x\in\mbR}\max\{|\psi''(x)|,|\psi'''(x)|\}\leq 1,\\
        &|\psi'(\vect{X}^{\tp}\vect{\beta}^*)-Y|\leq 2.
    \end{align*}
    Combining with the boundedness condition of $\vect{X}$, we have that Conditions \ref{assumpglm:subgauss}, \ref{assumpglm:lip_link}, and \ref{assumpglm:bound_below} are sufficed.
\end{proof}

\end{document}